\documentclass{article}

\usepackage{microtype}
\usepackage{graphicx}
\usepackage{subcaption}
\usepackage{booktabs} 

\usepackage{hyperref}

\usepackage[preprint]{icml2026}

\usepackage{amsmath}
\usepackage{amssymb}
\usepackage{mathtools}
\usepackage{amsthm}

\usepackage[capitalize,noabbrev]{cleveref}

\theoremstyle{plain}
\newtheorem{theorem}{Theorem}[section]
\newtheorem{proposition}[theorem]{Proposition}

\newtheorem{corollary}[theorem]{Corollary}
\theoremstyle{definition}

\theoremstyle{remark}

\usepackage[textsize=tiny]{todonotes}
\usepackage[dvipsnames]{xcolor}
\usepackage[table]{xcolor}

\newcommand{\nb}[3]{\ifthenelse{\boolean{include-notes}}{{\colorbox{#2}{\bfseries\sffamily\scriptsize\textcolor{white}{#1}}}{\ \textcolor{#2}{\sf\small\textit{#3}}}}{}}
\usepackage{mathtools}
\usepackage{amsthm}
\usepackage{amsmath}
\usepackage{amssymb}
\usepackage{multirow}
\usepackage{dsfont}
\usepackage{arydshln}
\usepackage{tikz}
\usepackage{xspace}
\usepackage{subcaption}
\usetikzlibrary{fit, backgrounds, calc}
\theoremstyle{plain}
\newcommand{\Ourmethod}{\texttt{DivIn}\xspace} 
\newcommand{\ourmethod}{\texttt{DivIn}\xspace}

\icmltitlerunning{Initialization is Half the Battle: Generating Diverse Images from a Guidance Potential Posterior}

\begin{document}

\twocolumn[
  \icmltitle{Initialization is Half the Battle: Generating Diverse Images \\ from a Guidance Potential Posterior}



  \icmlsetsymbol{equal}{*}

  \begin{icmlauthorlist}
    \icmlauthor{Xiang Li}{nus}
    \icmlauthor{Dianbo Liu}{nus}
    \icmlauthor{Kenji Kawaguchi}{nus}
  \end{icmlauthorlist}

  \icmlaffiliation{nus}{National University of Singapore}

  \icmlcorrespondingauthor{Xiang Li}{xiangli@comp.nus.edu.sg}

  \icmlkeywords{Machine Learning, ICML}

  \vskip 0.3in
]



\printAffiliationsAndNotice{}  

\begin{abstract}

Despite the remarkable fidelity of generative models, they frequently suffer from mode collapse. 
Existing strategies for enhancing diversity predominantly focus on intervening during the generation trajectory. 
We identify a critical oversight that the standard Gaussian initialization often causes trajectories to collapse into dominant modes because it is agnostic to the guidance potential landscape.
In this work, we formulate selecting the initial noise from a \textit{guidance potential posterior}, which effectively re-weights the prior towards diversity-rich regions.
To sample from this distribution efficiently, we introduce \textit{Diversity-inducing Initialization} (\ourmethod), which leverages Langevin dynamics to actively navigate the initialization landscape, steering initial noise away from collapsing regions while anchoring them to the valid data manifold.
Our method serves as an inference-time diversity enhancement compatible with both diffusion and flow matching models.
Extensive experiments show that \ourmethod exhibits a superior performance in both class-to-image and text-to-image scenarios. 
Furthermore, we highlight that as \ourmethod is orthogonal to trajectory-based methods, combining them significantly expands the diversity-quality Pareto frontier beyond what either achieves in isolation.

\end{abstract}

\section{Introduction}

Generative diffusion models and flow matching models \citep{ho2020denoising, rombach2022high,lipman2023flow,esser2024scaling} have achieved remarkable success in synthesizing high-fidelity images from text prompts. 
However, despite their stochastic formulation, these models frequently suffer from \textit{mode collapse}. 
Even with varying random seeds, generated samples often converge to stereotypic outputs, fundamentally limiting the models' creative potential~\cite{ho2022classifier,shipard2023diversity,morshed2025diverseflow}, and lead to exact memorization of training data in extreme cases~\citep{carlini2023extracting,somepalli2023diffusion,wen2024detecting}.

\begin{figure}[t] 
\centering 

\centerline{
\includegraphics[width=\columnwidth]{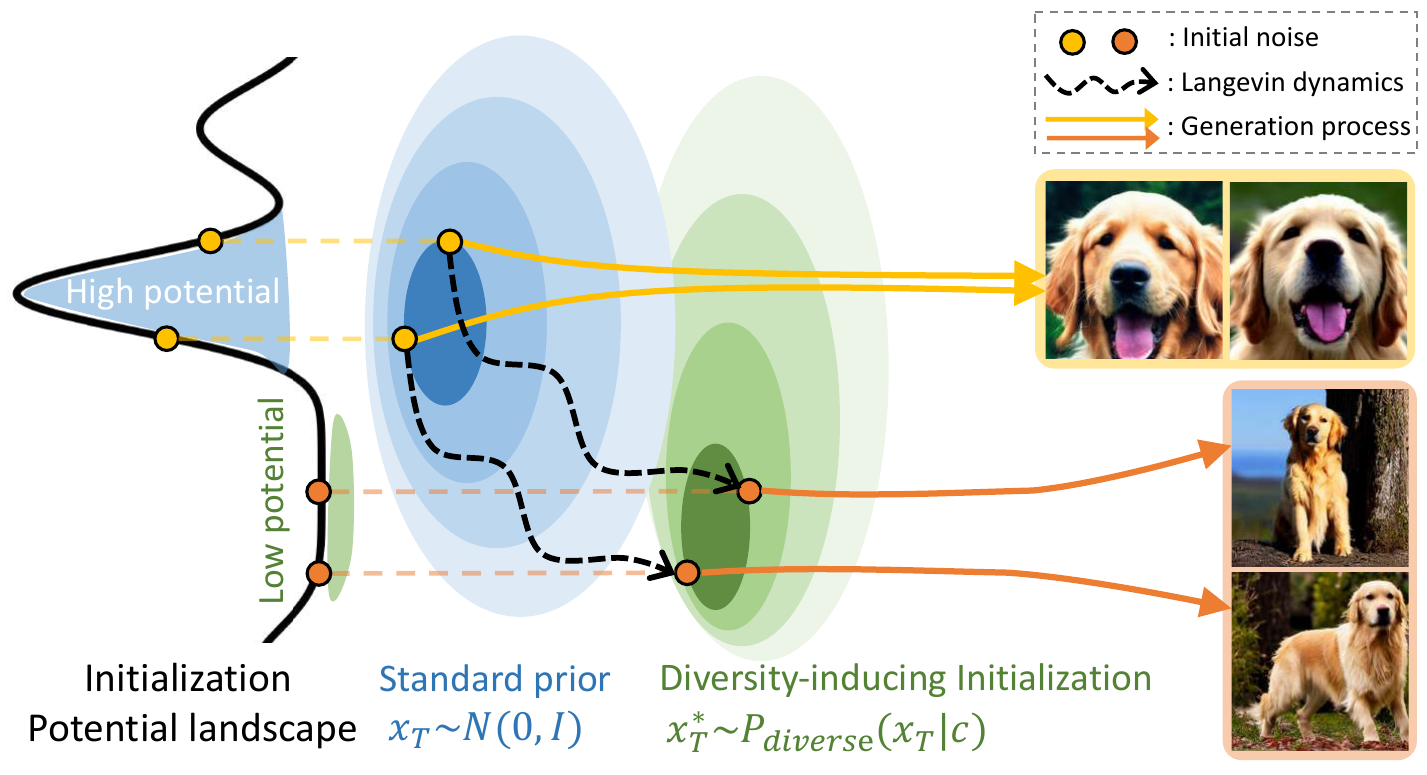}} 
\caption{Standard prior (blue) often lands in sharp, high potential hills that attract trajectories into a single dominant mode (mode collapse). Without changing the generation process, our proposed \ourmethod (green) leverages Langevin dynamics to sample from flat, low-potential basins as a diversity posterior, while staying anchored to the valid Gaussian distribution to enhance the generation diversity-quality trade-off.}
\label{fig:concept} 
\vspace{-10pt}
\end{figure}

Existing training-free methods to enhance generative diversity mainly focus on intervening in the sampling process, for example, by manipulating guidance scales~\cite{sehwag2022generating,sadat2024cads,kynkaanniemi2024applying}, or actively repelling trajectories from one another~\cite{corso2024particle,kirchhof2025shielded,morshed2025diverseflow}.
While effective, these methods largely neglect the starting point, assuming that the standard isotropic Gaussian initialization is sufficient for discovering diverse modes.

\begin{figure*}[htbp] 
\centering 
\centerline{
\includegraphics[width=\textwidth]{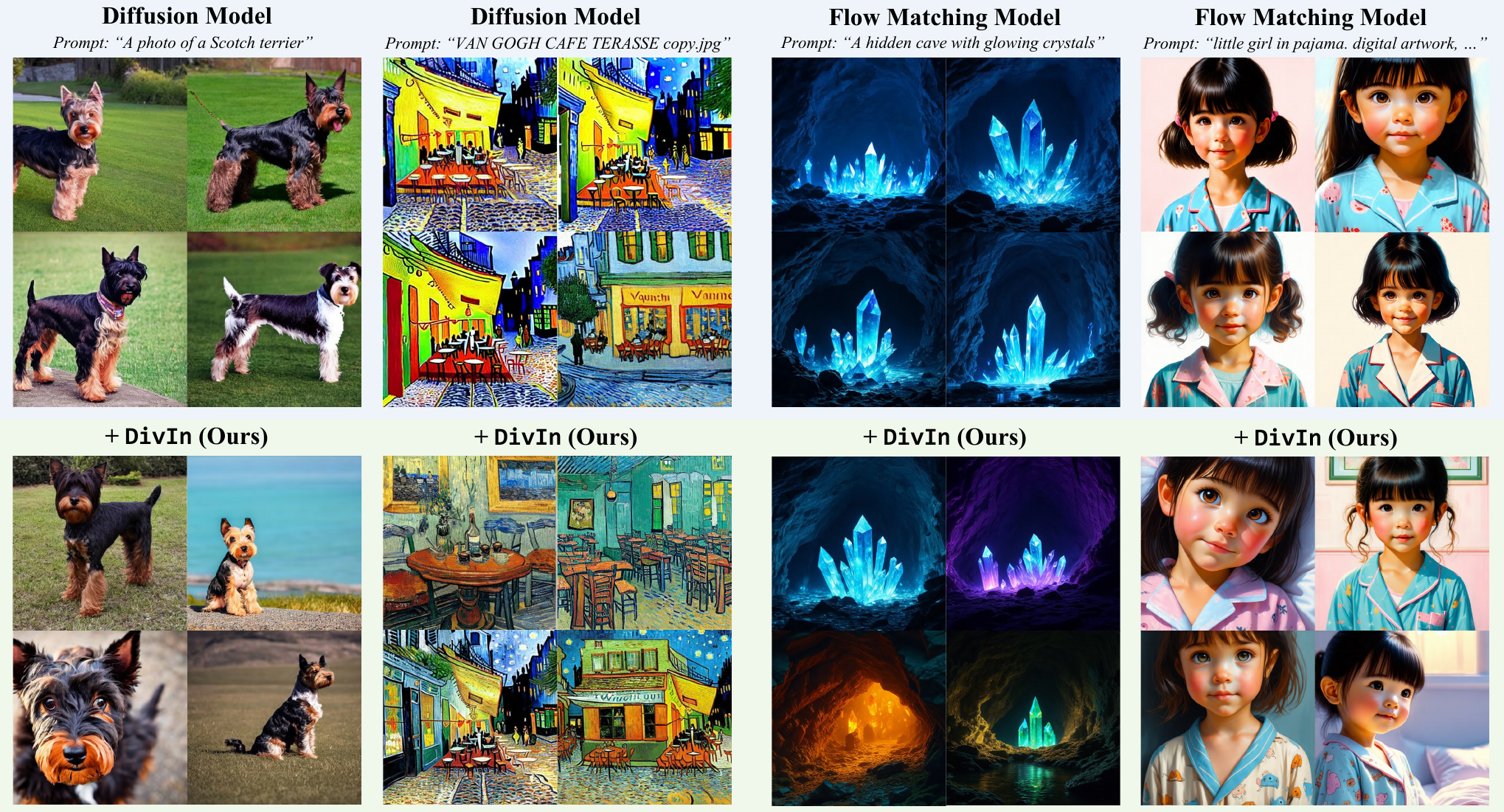}} 
\caption{Qualitative comparison on various prompts and different architectures. 
By sampling initial noise from regions with low guidance potential, \ourmethod (bottom row) discovers more modes and generates diverse images.}
\label{fig:example} 
\end{figure*}

In this work, we reveal that initialization is \textit{the other half of the battle}.
Motivated by recent findings linking local score landscape sharpness to memorization \cite{jeon2025understanding}, we generalize this geometric perspective to the broader challenge of mode collapse.
Geometrically, the regions where the conditional guidance potential is high induce high volume contraction rates in the generative flow.
Since the standard Gaussian initialization is oblivious to the landscape of the conditional guidance, distinct stochastic trajectories initialized in these regions are often attracted into a single dominant mode.
In contrast, sampling initial noises from low-potential basins allows trajectories to naturally diverge, leading to the discovery of diverse modes.
However, we observe that deterministic seed optimization methods are inherently mode-seeking and prior-breaking, and thus inevitably collapse the initialization distribution and undermine generative diversity~\cite{jeon2025understanding}.

To address this limitation, we formulate the optimal initialization not as a fixed prior, but as a \textit{guidance potential posterior} that re-weights the standard prior towards regions where diversity originates.
We introduce \Ourmethod (\textit{Diversity-inducing Initialization}) that leverages Langevin dynamics to approximate this posterior. 
As illustrated in Figure~\ref{fig:concept}, \ourmethod utilizes a computationally efficient proxy for the guidance potential to actively navigate the initialization landscape.
Unlike deterministic optimization that may drift off-manifold, our formulation maintains manifold validity while enabling robust exploration of diverse modes.

Our method serves as a universal, plug-and-play strategy compatible with both standard diffusion and modern flow matching models, as shown in Figure~\ref{fig:example}. 
Extensive experiments demonstrate that \ourmethod significantly improves diversity while maintaining image quality in both class-to-image and text-to-image tasks. 
Moreover, we highlight that as initialization is independent of the subsequent generation process, our method provides a distinct source of diversity not captured by trajectory interventions.
Combining \ourmethod with existing sampling mechanisms further expands the Pareto frontier of the diversity-quality trade-off.

\section{Related Work}
\paragraph{Diversity in Generative Models}
It remains a significant challenge for modern generative models to trade off between sample quality and diversity~\cite{ho2022classifier,shipard2023diversity,dombrowski2025image}. 
Existing approaches to mitigate this issue generally fall into two categories: training-time and inference-time enhancements. 
Training-time methods necessitate retraining or fine-tuning the model parameters to encourage diverse mode coverage, utilizing techniques such as reinforcement learning objectives~\cite{miao2024training}, entropy maximization~\cite{de2025provable}, or ambient diffusion~\cite{shah2025does}.
Training-free methods intervene in the sampling trajectory during the inference phase. 
These include modified sampling strategies to target low-density regions~\cite{sehwag2022generating,sadat2024cads,kynkaanniemi2024applying} or gradient-based guidance that actively pushes samples apart~\cite{corso2024particle,kirchhof2025shielded,morshed2025diverseflow}.
In contrast, as a training-free method, \ourmethod focus exclusively on modifying the initial noise distribution $x_T$, leaving the generative process untouched. 
Consequently, our approach is orthogonal to existing sampling-based diversity methods and can be seamlessly combined with them to further improve the diversity-quality trade-off.

\paragraph{Impact of the Initial Noise}
Recent research has highlighted the structural significance of the initial noise in diffusion models~\cite{samuel2023norm}. 
It has been observed that the initial noise correlates with specific regions in the generated output~\cite{mao2023guided}, and optimizing it can improve text-image alignment~\cite{guo2024initno}.
Furthermore, specific seeds can be discovered to generate rare concepts~\cite{samuel2024generating}, achieve high fidelity~\cite{xu2025good}, or mitigate memorization~\cite{jeon2025understanding, han2025adjusting}. 
However, existing noise optimization methods typically rely on the entire generative chain to minimize a loss defined on the final image or deterministic optimization to find a single optimal seed.
Our work fundamentally extends the understanding of the initialization potential landscape to generation diversity. 
\Ourmethod minimizes potential at the initial sampling step with minimal computation cost and employs Langevin dynamics to preserve manifold volume.

\section{Preliminaries}
\label{sec:preliminaries}

\paragraph{Diffusion Models}
Diffusion models~\cite{sohl2015deep,ho2020denoising,song2021scorebased} generate data by reversing a stochastic process that gradually maps the data distribution $p_{data}(\mathbf{x})$ to an isotropic Gaussian prior $\mathcal{N}(0, \mathbf{I})$.
The forward process is modeled as a Stochastic Differential Equation (SDE) that progressively injects noise into the clean data $\mathbf{x}_0$ over time $t \in [0, T]$: $d\mathbf{x} = \mathbf{f}(\mathbf{x}, t)dt + g(t)d\mathbf{w}$,
where $\mathbf{w}$ is a standard Brownian motion. 
The Variance Preserving (VP) formulation is defined as $\mathbf{x}_t = \sqrt{\overline{\alpha}_t}\mathbf{x}_0 + \sqrt{1 - \overline{\alpha}_t}\mathbf{\epsilon}$, where $\mathbf{\epsilon} \sim \mathcal{N}(0, \mathbf{I})$.
Sampling is performed by solving the reverse-time SDE:
\begin{equation}
d\mathbf{x} = [\mathbf{f}(\mathbf{x}, t) - g^2(t)\nabla_\mathbf{x} \log p_t(\mathbf{x})]dt + g(t)d\bar{\mathbf{w}}.
\end{equation}
In practice, a neural network $\mathbf{\epsilon}_\theta(\mathbf{x}_t, t, c)$ is trained to approximate the noise, which is proportional to the score function $\nabla_\mathbf{x} \log p_t(\mathbf{x})$.
The clean data estimate $\hat{\mathbf{x}}_0$ at any timestep can be recovered via Tweedie's formula:
\begin{equation}
\hat{\mathbf{x}}_0(\mathbf{x}_t) = \frac{\mathbf{x}_t - \sqrt{1 - \overline{\alpha}_t}\mathbf{\epsilon}_\theta(\mathbf{x}_t)}{\sqrt{\overline{\alpha}_t}}.
\label{eq:x0_sde}
\end{equation}

\paragraph{Flow Matching Models}
Flow Matching and Rectified Flow~\cite{lipman2023flow,esser2024scaling} offer an alternative continuous-time generative framework that connects data and noise distributions via an Ordinary Differential Equation (ODE).
The framework defines a probability flow ODE $\frac{d\mathbf{x}}{dt} = \mathbf{v}_\theta(\mathbf{x}, t, c)$, where the vector field $\mathbf{v}_\theta$ learns to reverse the velocity of the interpolation path $\mathbf{x}_t = (1 - t)\mathbf{x}_0 + t \mathbf{x}_1$, with $\mathbf{x}_1 \sim \mathcal{N}(0, \mathbf{I})$ serving as the noise prior.
The linear structure of the flow allows for a direct geometric estimation of the clean data $\hat{\mathbf{x}}_0$ from any state $\mathbf{x}_t$ by projecting along the flow direction:
\begin{equation}
\hat{\mathbf{x}}_0(\mathbf{x}_t) = \mathbf{x}_t - t \cdot \mathbf{v}_\theta(\mathbf{x}_t, t).
\label{eq:x0_fm}
\end{equation}

\section{Method}
\label{sec:method}

In this section, we present a geometry-aware initialization framework designed to unlock the latent diversity of diffusion models. We first analyze the geometric bottlenecks in the initial noise landscape that lead to mode collapse (Section~\ref{sec:motivation}). We then formulate diversity as a sampling problem from a modified diversity-weighted posterior (Section~\ref{sec:formulation}). Finally, we derive a Langevin dynamics algorithm to efficiently sample from this distribution while preserving the validity of the generative manifold (Section~\ref{sec:algorithm}).

\subsection{Motivation}
\label{sec:motivation}

Standard diffusion sampling operates under the assumption that the isotropic Gaussian prior $\mathbf{x}_T \sim \mathcal{N}(\mathbf{0}, \mathbf{I})$ provides a sufficient and unbiased seed for generation. However, we present that this assumption fails to account for the \textit{non-uniform local geometry} of the initialization landscape. While recent work has linked initialization landscape to memorization~\cite{jeon2025understanding,wen2024detecting,han2025adjusting}, we generalize this geometric perspective to the broader challenge of generative diversity.

We hypothesize that mode collapse is driven by the landscape of the conditional guidance at initialization.
Geometrically, dominant modes often form deep basins of attraction characterized by high local sharpness.
As we formally derive in Appendix~\ref{sec:entropy_dynamics} (Theorem~\ref{thm:entropy_decay}), regions of high curvature induce a rapid rate of probability volume contraction during the reverse process.
As illustrated in Figure~\ref{fig:toy_mode}, this forces distinct stochastic trajectories to merge into stereotypic outputs.
Conversely, biasing the initialization toward the regions with a less contractive force preserves diversity.
\vspace{-5pt}
\begin{figure}[htbp]
    \centering
    \includegraphics[width=\linewidth]{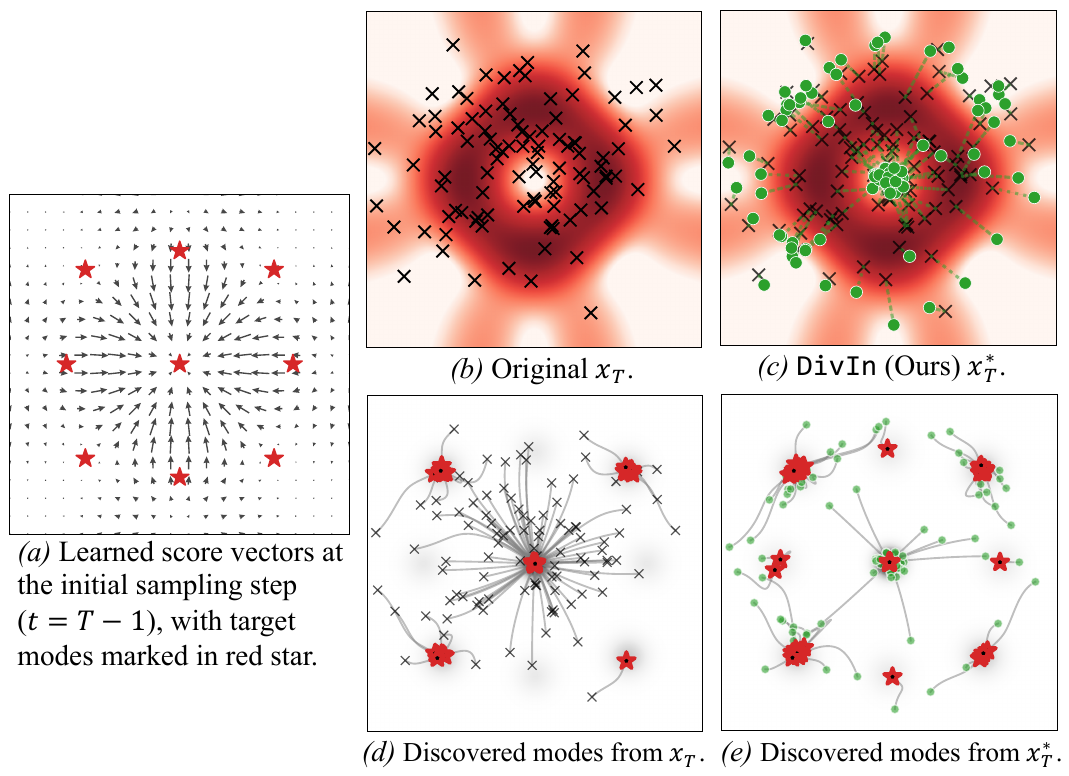}
    \caption{
    Comparison of mode discovery on a 2D toy distribution with 9 modes. \textit{(b)} Standard Gaussian initialization (black ``x'') concentrates samples in the high-potential region (dark red) driven by the central dominant mode, leading to \textit{(d)} mode collapse where only 5 modes are recovered. \textit{(c)} Our proposed initialization ($x_T^*$, green ``o'') disperses samples across the low-potential landscape, successfully recovering \textit{(e)} all 9 modes.}

    \label{fig:toy_mode}
\end{figure}

\begin{figure}[htbp]
    \centering
    
\includegraphics[width=\linewidth]{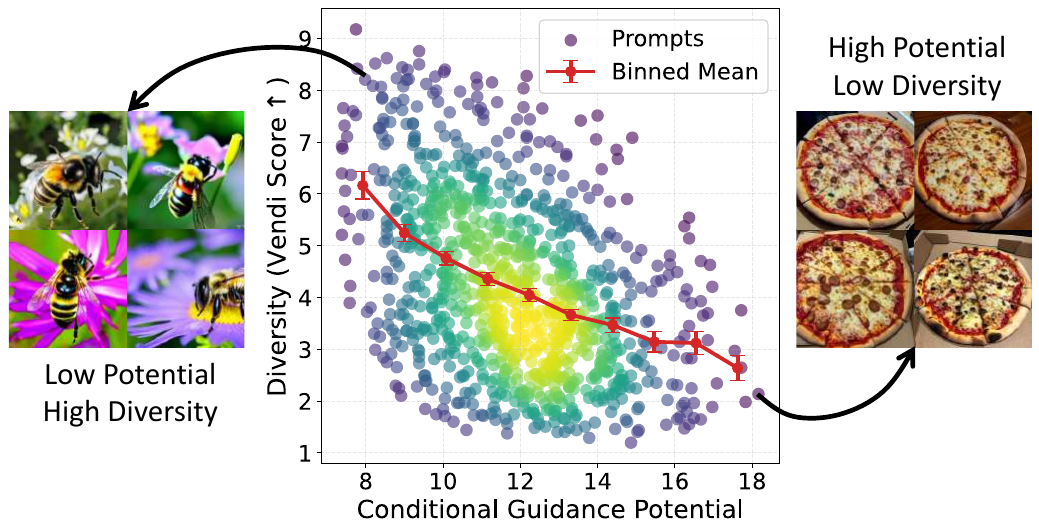}
    \caption{We analyze the initialization potential against generative diversity. We sample 10 independent initial noises for each of the 1000 distinct prompts and calculate the Vendi score on a kernel of 10 and average potential across the 10 latents for each prompt. The binned mean (red line) reveals a strong negative correlation that initial noises with high guidance potential lead to mode collapse, while those with low potential preserve diversity.}
    \label{fig:scatter}
\end{figure}

To empirically validate this hypothesis, we analyze the correlation between geometric potential and generative diversity across 1,000 prompts. As illustrated in Figure~\ref{fig:scatter}, we observe a statistically significant negative correlation (Spearman $\rho = -0.4$, $p < 0.001$): prompts initialized in high-potential regions consistently yield lower diversity, whereas low-potential regions preserve semantic variation. 
This phenomenon is further verified by our toy experiment in Figure~\ref{fig:toy_mode}, where standard Gaussian initialization collapses into the dominant central mode, leaving non-dominant modes undiscovered.

These findings motivate our proposed approach that explores diversity by actively reshaping the initialization distribution. 
By biasing the latent prior toward the low-potential regions of the landscape, we can effectively disperse sampling trajectories and ensure wide mode coverage, as shown in Figure~\ref{fig:toy_mode}.

\subsection{Problem Formulation}
\label{sec:formulation}

We reframe choosing the initial noise not as drawing from a fixed prior, but as sampling from a target \textit{diversity-weighted guidance potential posterior}, $p_{\text{diverse}}(\mathbf{x}_T | c)$. This distribution modulates the standard isotropic Gaussian prior $p(\mathbf{x}_T) = \mathcal{N}(\mathbf{x}_T; \mathbf{0}, \mathbf{I})$ using an energy-based term that promotes diversity:

\vspace{-5pt}
\begin{align}
    p_{\text{diverse}}(\mathbf{x}_T | c) &\propto \exp\left(-\tau \cdot U(\mathbf{x}_T, c)\right) \cdot p(\mathbf{x}_T) \nonumber \\
    &\propto \exp\left(-\tau \cdot U(\mathbf{x}_T, c)\right)\cdot \exp(-\frac{1}{2}\|\mathbf{x}_T\|^2),
    \label{eq:posterior}
\end{align}
where $\tau > 0$ is a temperature hyperparameter controlling the trade-off between prior adherence (keeping the latent norm close to the standard Gaussian prior) and diversity maximization (exploring low-potential regions).

We define the potential $U(\mathbf{x}_T, c)$ as a proxy for the \textit{conditional guidance intensity} at initialization.
To estimate this, we utilize the generalized Tweedie potential, defined as the Euclidean distance between the conditional and unconditional single-step denoising estimates:
\begin{equation}
    U(\mathbf{x}_T, c) = \|\hat{\mathbf{x}}_0(\mathbf{x}_T, c) - \hat{\mathbf{x}}_0(\mathbf{x}_T, \varnothing)\|_2
    \label{eq:potential}
\end{equation}

Our formulation of $U(\mathbf{x}_T, c)$ in the estimated data space is driven by practical deployment considerations to ensure \ourmethod remains robust and plug-and-play. In standard generation pipelines, the initial continuous timestep $t=T-1$ varies depending on the chosen noise scheduler and total number of inference steps (e.g., $t=981$ for a 50-step vs. $t=958$ for a 30-step schedule). 
If we utilized the raw score difference $U = \|\Delta s_\theta(x)\|$ as the potential, the time-dependent scaling factor $\lambda_t$ would be absorbed into the temperature, making the optimal hyperparameter $\tau$ highly brittle to changes in inference steps. By projecting to $\hat{x}_0$ via Tweedie's formula, our metric intrinsically handles this dynamic scaling, keeping the optimal temperature $\tau$ remarkably stable across different inference schedules (see Appendix~\ref{sec:ablation_tweedie} for detailed ablations). This projection also unifies the potential computation across diffusion and flow matching architectures into a shared Euclidean pixel space.

Minimizing the potential defined in Equation~\eqref{eq:potential} penalizes initial states where the conditional guidance exerts an excessively strong attraction toward a dominant mode and biases the initialization toward flat regions of the landscape, facilitating the discovery of diverse modes.

\subsection{Algorithm}
\label{sec:algorithm}

Since direct sampling from $p_{\text{diverse}}$ is intractable, we employ \textit{Langevin Dynamics} to iteratively transition samples from the initial Gaussian prior toward the target diversity-weighted posterior.
The discrete update rule for the latent state $\mathbf{x}_T$ at iteration $k$ is derived from the score of the target distribution, $\nabla_{\mathbf{x}} \log p_{\text{diverse}}(\mathbf{x}|c)$. 

Given that the score of the standard normal prior is $\nabla_{\mathbf{x}} \log p(\mathbf{x}) = -\mathbf{x}$, the gradient of the target log-posterior is given by:
\begin{equation}
    \nabla_{\mathbf{x}} \log p_{\text{diverse}} = -\tau \nabla_{\mathbf{x}} U(\mathbf{x}, c) - \mathbf{x}
\end{equation}
Substituting this into the Langevin equation yields our update rule:
\begin{equation}
    \mathbf{x}^{(k+1)}_T = \mathbf{x}^{(k)}_T - \eta \left( \tau \nabla_{\mathbf{x}^{(k)}_T} U(\mathbf{x}^{(k)}_T, c) + \mathbf{x}^{(k)}_T \right) + \sqrt{2\eta} \boldsymbol{\xi}^{(k)}
    \label{eq:update_x}
\end{equation}
where $\eta$ is the step size and $\boldsymbol{\xi}^{(k)} \sim \mathcal{N}(\mathbf{0}, \mathbf{I})$ denotes the injected Gaussian noise. 

This mechanism establishes a dynamic equilibrium between three forces.
The diversity force ($-\tau \nabla U$) drives the latent away from high-potential basins and toward flatter regions of the landscape.
The prior constraint ($-\mathbf{x}$) anchors the latent within the valid manifold of the original normal distribution.
The stochastic term ($\sqrt{2\eta}\boldsymbol{\xi}$) injects noise to escape from falling into shallow local minima of the potential landscape, and encourages exploration to optimal regions.
While asymptotically exact sampling requires multiple Langevin steps, we empirically find that a single gradient update ($K=1$) yields significant diversity enhancement.
Note that each updating step only introduces an additional forward and backward pass at the initial time step.

The complete procedure is presented in Algorithm~\ref{alg:ours}.

\begin{algorithm}[tb]
    \caption{Diversity-inducing Initialization (\ourmethod)}
    \label{alg:ours}
    \begin{algorithmic}[1]
        \STATE {\bfseries Input:} Condition $c$, steps $K$, step size $\eta$, temperature $\tau$
        \STATE {\bfseries Initialize:} $\mathbf{x}_T^{(0)} \sim \mathcal{N}(\mathbf{0}, \mathbf{I})$
        \FOR{$k=0$ {\bfseries to} $K-1$}
            \STATE \textit{// Compute conditional guidance potential}
            \STATE $\hat{\mathbf{x}}_0(\mathbf{x}_T^{(k)},c) \leftarrow \text{Estimator}(\mathbf{x}_T^{(k)}, c)$ 
            \STATE $\hat{\mathbf{x}}_0(\mathbf{x}_T^{(k)},\varnothing) \leftarrow \text{Estimator}(\mathbf{x}_T^{(k)}, \varnothing)$
            \STATE $U \leftarrow \|\hat{\mathbf{x}}_0(\mathbf{x}_T^{(k)},c) - \hat{\mathbf{x}}_0(\mathbf{x}_T^{(k)},\varnothing)\|$
            \STATE \textit{// Langevin Update}
            \STATE $\boldsymbol{\xi}^{(k)} \sim \mathcal{N}(\mathbf{0}, \mathbf{I})$
            \STATE $\mathbf{g}_{\text{energy}} \leftarrow \tau \nabla_{\mathbf{x}_T^{(k)}} U + \mathbf{x}_T^{(k)}$
            \STATE $\mathbf{x}_T^{(k+1)} \leftarrow \mathbf{x}_T^{(k)} - \eta \cdot \mathbf{g}_{\text{energy}} + \sqrt{2\eta} \cdot \boldsymbol{\xi}^{(k)}$
        \ENDFOR
        \STATE $\mathbf{x}_0 \leftarrow \text{Denoise}(\mathbf{x}^{(K)}_T, c)$
        \STATE {\bfseries Output:} Generated image $\mathbf{x}_{0}$
    \end{algorithmic}
    
\end{algorithm}

\subsection{Connection to Sharpness-Aware Initialization}
\label{sec:sail_connection}

Our framework shares the geometric motivation of Sharpness-Aware Initialization (SAIL)~\citep{jeon2025understanding}, which mitigates memorization based on the sharpness of the log-probability density. 
However, \ourmethod introduces two theoretical improvements that address the limitations of SAIL.
Firstly, SAIL relies on a Taylor expansion approximation of the Hessian-score product, which requires expensive second-order computations and manually defined thresholds to manage numerical instability. 
In contrast, we demonstrate in Proposition~\ref{prop:tweedie_hessian} that our generalized Tweedie potential $U(\mathbf{x}_T, c)$ serves as a robust proxy for curvature. 
By formulating this potential in the data space via the estimator $\hat{\mathbf{x}}_0$, our metric naturally handles signal-to-noise scaling, offering superior stability compared to the threshold-dependent metric in SAIL.
More fundamentally, SAIL treats seed selection as a \emph{deterministic, mode-seeking optimization} problem, effectively reducing the initialization to a single point estimate. This inevitably collapses the distribution volume and destroys diversity. 
In contrast, \ourmethod performs \emph{distributional posterior sampling} over initializations under an explicit Gaussian prior, which is essential for preserving entropy and enabling genuine diversity.
As visualized in Figure~\ref{fig:toy_exp}, while SAIL's deterministic updates drive all latents to collapse into the sharp local minimum, \ourmethod successfully disperses samples across the low-potential basin.

\begin{figure}[ht]
    \centering
    \includegraphics[width=\linewidth]{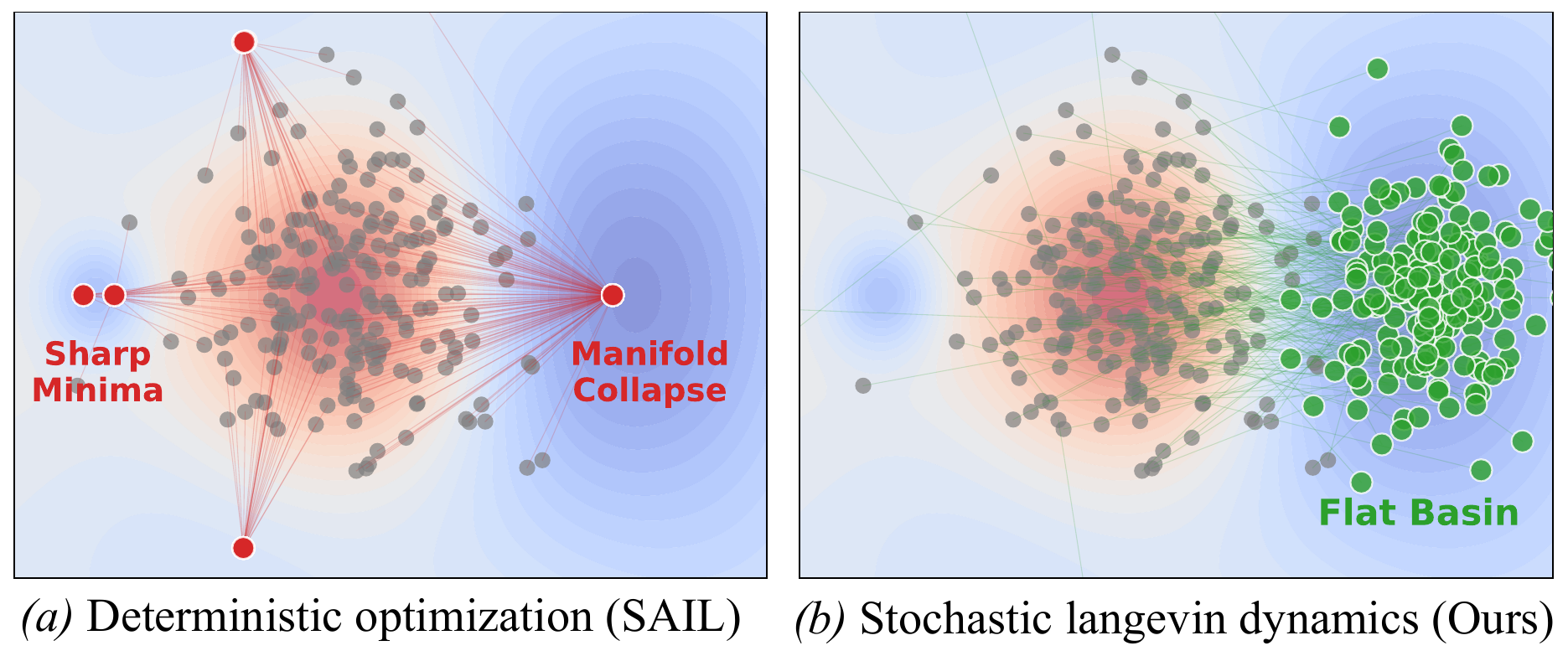}
    \caption{The fundamental distinction between SAIL and \ourmethod (Ours). The red and green dots represent the optimized initial noise latents $x_T$. While SAIL tends to get trapped in the nearest sharp local minimum (left blue region) and causes volume collapse, \ourmethod converges to a flat minimum basin (right blue region) while preserving distribution volume.}
    \label{fig:toy_exp}

\end{figure}

\section{Experiments}
\label{sec:exp}

In this section, we empirically validate that \ourmethod achieves a superior diversity-quality Pareto frontier in Section~\ref{sec:exp_res}, and analyze its efficiency, stability, and sensitivity in Sections~\ref{sec:exp_analyze} and \ref{sec:exp_ablation}.
The code is available at \url{https://github.com/South7X/divin}.

\begin{table*}[htbp]
\centering
\caption{Results of class-to-image generation on ImageNet using SD v1.4. Comparison of our method against initialization-based (SAIL) and trajectory-based baselines (PG, CADS, IG). \textbf{Bold} indicates the best result in each group. The results are reported as mean $\pm$ std, computed over 5 independent runs with different seeds on the full dataset. }

\label{tab:imagenet_sd1}
\resizebox{\textwidth}{!}{%
\begin{tabular}{lcccccccc}
\toprule
Method & Recall $\uparrow$ & Vendi Score $\uparrow$ & Coverage $\uparrow$ & Precision $\uparrow$ & Density $\uparrow$ & FID $\downarrow$ & FD$_{\text{DINOv2}}$ $\downarrow$ \\
\midrule
Base Model & 0.503 $\pm$ 0.009 & 4.265 $\pm$ 0.008 & 0.596 $\pm$ 0.004 & \textbf{0.833} $\pm$ 0.006 & \textbf{0.706} $\pm$ 0.007 & 16.696 $\pm$ 0.062 & 259.379 $\pm$ 0.987 \\
\hspace{2mm}+ SAIL & 0.543 $\pm$ 0.038 & 4.549 $\pm$ 0.280 & 0.591 $\pm$ 0.008 & 0.825 $\pm$ 0.013 & 0.677 $\pm$ 0.030 & 16.395 $\pm$ 0.290 & 258.376 $\pm$ 3.471 \\
\rowcolor{green!5}
\hspace{2mm}+ \ourmethod (Ours) & \textbf{0.569} $\pm$ 0.013 & \textbf{4.688} $\pm$ 0.121 & \textbf{0.597} $\pm$ 0.006 & 0.825 $\pm$ 0.010 & 0.670 $\pm$ 0.011 & \textbf{16.158} $\pm$ 0.299 & \textbf{257.556} $\pm$ 4.311 \\
\midrule

PG & 0.502 $\pm$ 0.007 & 4.395 $\pm$ 0.009 & 0.604 $\pm$ 0.008 & \textbf{0.835} $\pm$ 0.010 & 0.704 $\pm$ 0.004 & \textbf{17.520} $\pm$ 0.162 & \textbf{261.782} $\pm$ 1.642 \\
\rowcolor{green!5}
\hspace{2mm}+ \ourmethod (Ours) & \textbf{0.516} $\pm$ 0.008 &\textbf{ 4.491} $\pm$ 0.029 & 0.604 $\pm$ 0.005 & 0.834 $\pm$ 0.008 & \textbf{0.706} $\pm$ 0.009 & 17.942 $\pm$ 0.341 & 264.246 $\pm$ 5.713 \\
\cmidrule(l){1-8} 

CADS & 0.528 $\pm$ 0.008 & 4.384 $\pm$ 0.014 & 0.598 $\pm$ 0.008 & 0.832 $\pm$ 0.002 & \textbf{0.692} $\pm$ 0.011 & 16.360 $\pm$ 0.160 & \textbf{257.626} $\pm$ 0.774 \\
\rowcolor{green!5}
\hspace{2mm}+ \ourmethod (Ours) & \textbf{0.553} $\pm$ 0.017 & \textbf{4.548} $\pm$ 0.075 & \textbf{0.602} $\pm$ 0.007 & 0.832 $\pm$ 0.008 & 0.687 $\pm$ 0.014 & \textbf{16.336} $\pm$ 0.325 & 258.044 $\pm$ 6.551 \\
\cmidrule(l){1-8} 

IG & 0.564 $\pm$ 0.006 & 4.585 $\pm$ 0.016 & 0.597 $\pm$ 0.002 & \textbf{0.826} $\pm$ 0.013 & \textbf{0.679}$\pm$ 0.006 & \textbf{15.531} $\pm$ 0.054 & \textbf{253.469} $\pm$ 0.476 \\
\rowcolor{green!5}
\hspace{2mm}+ \ourmethod (Ours) & \textbf{0.576} $\pm$ 0.020 & \textbf{4.729} $\pm$ 0.080 & \textbf{0.599} $\pm$ 0.009 & 0.825 $\pm$ 0.002 & 0.676 $\pm$ 0.012 & 15.877 $\pm$ 0.312 & 256.201 $\pm$ 6.970 \\
\bottomrule
\end{tabular}
}
\end{table*}

\begin{table}[htbp]
\centering
\caption{Results of text-to-image generation on general prompts using SD v3.5. The results are reported as mean $\pm$ std, computed over 5 independent runs with different seeds on the full dataset.}
\label{tab:general_sd3}
\setlength{\tabcolsep}{2pt} 
\resizebox{\linewidth}{!}{%
\begin{tabular}{lccccc}
\toprule
Method & Similarity $\downarrow$ & Vendi Score $\uparrow$ & CLIP Score $\uparrow$ & Aesthetic Score $\uparrow$ & ImageReward $\uparrow$ \\
\midrule
Base Model & 0.793 $\pm$ 0.002 & 1.803 $\pm$ 0.008 & \textbf{34.04} $\pm$ 0.04 & 5.726 $\pm$ 0.006 & \textbf{0.544} $\pm$ 0.010 \\
\hspace{2mm}+ SAIL & 0.780 $\pm$ 0.003 & 1.850 $\pm$ 0.011 & 33.97 $\pm$ 0.03 & 5.734 $\pm$ 0.009 & 0.536 $\pm$ 0.012 \\
\rowcolor{green!5}
\hspace{2mm}+ \ourmethod (Ours) & \textbf{0.775} $\pm$ 0.002 & \textbf{1.864} $\pm$ 0.007 & 34.01 $\pm$ 0.02 & \textbf{5.738} $\pm$ 0.014 & 0.534 $\pm$ 0.010 \\
\cmidrule(l){1-6}
CADS & 0.770 $\pm$ 0.003 & 1.884 $\pm$ 0.010 & \textbf{33.97} $\pm$ 0.05 & 5.749 $\pm$ 0.006 & \textbf{0.537} $\pm$ 0.014 \\
\rowcolor{green!5}
\hspace{2mm}+ \ourmethod (Ours) & \textbf{0.761} $\pm$ 0.004 & \textbf{1.918} $\pm$ 0.013 & 33.94 $\pm$ 0.04 & \textbf{5.753} $\pm$ 0.013 & 0.516 $\pm$ 0.021 \\
\cmidrule(l){1-6}
IG & 0.748 $\pm$ 0.002 & 1.964 $\pm$ 0.008 & 33.92 $\pm$ 0.03 & 5.781 $\pm$ 0.004 & \textbf{0.521} $\pm$ 0.009 \\
\rowcolor{green!5}
\hspace{2mm}+ \ourmethod (Ours) & \textbf{0.746} $\pm$ 0.008 & \textbf{1.973} $\pm$ 0.027 & \textbf{33.95} $\pm$ 0.06 & \textbf{5.787} $\pm$ 0.012 & 0.501 $\pm$ 0.028 \\
\bottomrule
\end{tabular}
}
\vspace{-5pt}
\end{table}

\subsection{Experimental Setup}

\paragraph{Datasets and Models}
For class-to-image generation, we employ Stable Diffusion (SDv1.4)~\cite{rombach2022high} to produce images for ImageNet-1K \citep{deng2009imagenet} classes at $256\times256$ resolution. We generate a total of 10,000 images using the template prompt ``\textit{a photo of a \{class name\}}'' for all classes. The original validation set serves as the reference distribution.

For Text-to-image generation, to evaluate generalizability beyond standard diffusion, we also use the Rectified Flow model  Stable Diffusion 3.5 Medium (SDv3.5)~\cite{esser2024scaling}.
We follow~\cite{jeon2025understanding} to use a hybrid general prompt dataset consisting of 500 prompts collected from (1) MS-COCO \citep{lin2014microsoft} for descriptive captions, (2) Lexica\footnote{\footnotesize \url{https://huggingface.co/datasets/vera365/lexica_dataset}} for artistic and stylized prompts, (3) Tuxemon\footnote{\footnotesize \url{https://huggingface.co/datasets/diffusers/tuxemon}} for domain-specific concepts, and (4) GPT-4~\cite{achiam2023gpt} generated prompts for complex compositional scenarios. 
We generate four images per prompt at a resolution of $512 \times 512$.

\paragraph{Baselines}
Besides the base model, we also compare \ourmethod against a spectrum of training-free diversity enhancement methods, categorized by their intervention mechanism. 
We include trajectory guidance methods, which take effect during the denoising process via particle repulsion or diversity conditioning: Particle Guidance (PG)~\cite{corso2024particle}, Condition-Annealed Diffusion Sampling (CADS)~\cite{sadat2024cads}, and Interval Guidance (IG)~\cite{kynkaanniemi2024applying}. 
We also compare our method with the initialization optimization method Sharpness-Aware Initialization (SAIL)~\cite{jeon2025understanding}.
We treat SAIL as the primary deterministic counterpart to our stochastic approach.
For a fair comparison, all reported results use the optimal hyperparameter settings for baselines.
Detailed experimental settings and additional results are reported in the Appendix~\ref{sec:detail_exp} and \ref{sec:add_exp}.

\paragraph{Evaluation Metrics}
We consider multi-faceted evaluation metrics to capture generative performance. 
All results are tracked per class/prompt and averaged. 
For diversity metrics, we report the Vendi Score \citep{friedman2022vendi}, which quantifies the effective number of modes.
For the class-to-image task, we report recall~\cite{kynkaanniemi2019improved} and coverage~\cite{naeem2020reliable} to assess support overlap with the reference distribution.
Following~\cite{corso2024particle}, we include the in-batch similarity score for text-to-image tasks.
To evaluate image quality and fidelity, we use precision~\cite{kynkaanniemi2019improved}, density~\cite{naeem2020reliable}, FID (Fréchet Inception Distance)~\cite{heusel2017gans}, FD$_{\rm DINOv2}$ (Fréchet Distance with DINOv2 features~\citet{stein2023exposing,oquab2023dinov2}), and ImageReward~\citet{xu2023imagereward}.
We record the CLIP score to quantify the semantic alignment between the generated images and the text prompts.

\begin{figure*}[htbp] 
\centering 
\centerline{
\includegraphics[width=\textwidth]{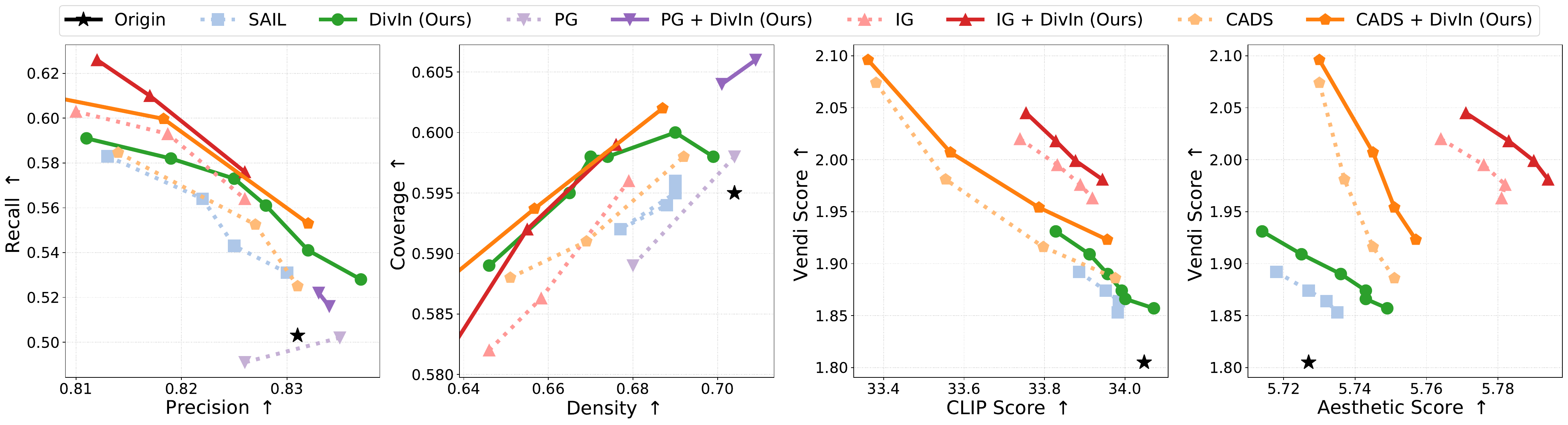}} 
\caption{
Diversity-fidelity Pareto frontiers. The plots illustrate how the hyperparameters of different methods affect the trade-off between image quality (x-axis) and diversity (y-axis) on ImageNet (left) and general prompts (right). 
Baselines with \ourmethod integrated (solid lines) consistently expand the Pareto frontiers of the baselines alone (dashed lines). Full results in Appendix~\ref{sec:add_exp}.}
\label{fig:pareto} 
\end{figure*}

\subsection{Main Results}
\label{sec:exp_res}
\paragraph{Quantitative Results} 
Tables~\ref{tab:imagenet_sd1} and~\ref{tab:general_sd3} summarize the performance of \ourmethod across class-to-image and text-to-image tasks.
As shown in Table~\ref{tab:imagenet_sd1}, our initialization-only strategy (\textit{Base + \ourmethod}) achieves a Vendi score of 4.688, outperforming not only the baseline initialization method (SAIL) but also trajectory-based mechanisms (PG, CADS, and IG). 
Furthermore, unlike methods that often trade off quality for diversity, \ourmethod improves image fidelity, achieving the lowest FID (16.158) among all standalone baselines except IG.
We observe consistent trends on the SD v3.5 model (Table~\ref{tab:general_sd3}), where \ourmethod effectively enhances diversity, reducing in-batch similarity and increasing the Vendi score. 
Notably, this diversity gain does not come at the cost of semantic alignment, as our method maintains a high CLIP score and achieves a higher Aesthetic score than the base model and SAIL, suggesting that the generated samples remain satisfying quality and text alignment.

A key advantage of \ourmethod is its orthogonality to sampling-time guidance. 
As shown in the bottom cells of both tables, combining \ourmethod with trajectory-based methods (PG, CADS, IG) yields a further significant growth in diversity metrics (as indicated by Vendi score, recall, and coverage) with minimal impact on fidelity. 
This verifies our hypothesis that choosing initial noise from the guidance potential posterior captures diverse modes that are otherwise neglected by standard sampling, even when sophisticated trajectory guidance is applied. 
By initializing latents in flatter regions of the potential landscape, \ourmethod naturally complements existing methods to explore rare modes.

\begin{figure}[t] 
\centering 
\centerline{
\includegraphics[width=\linewidth]{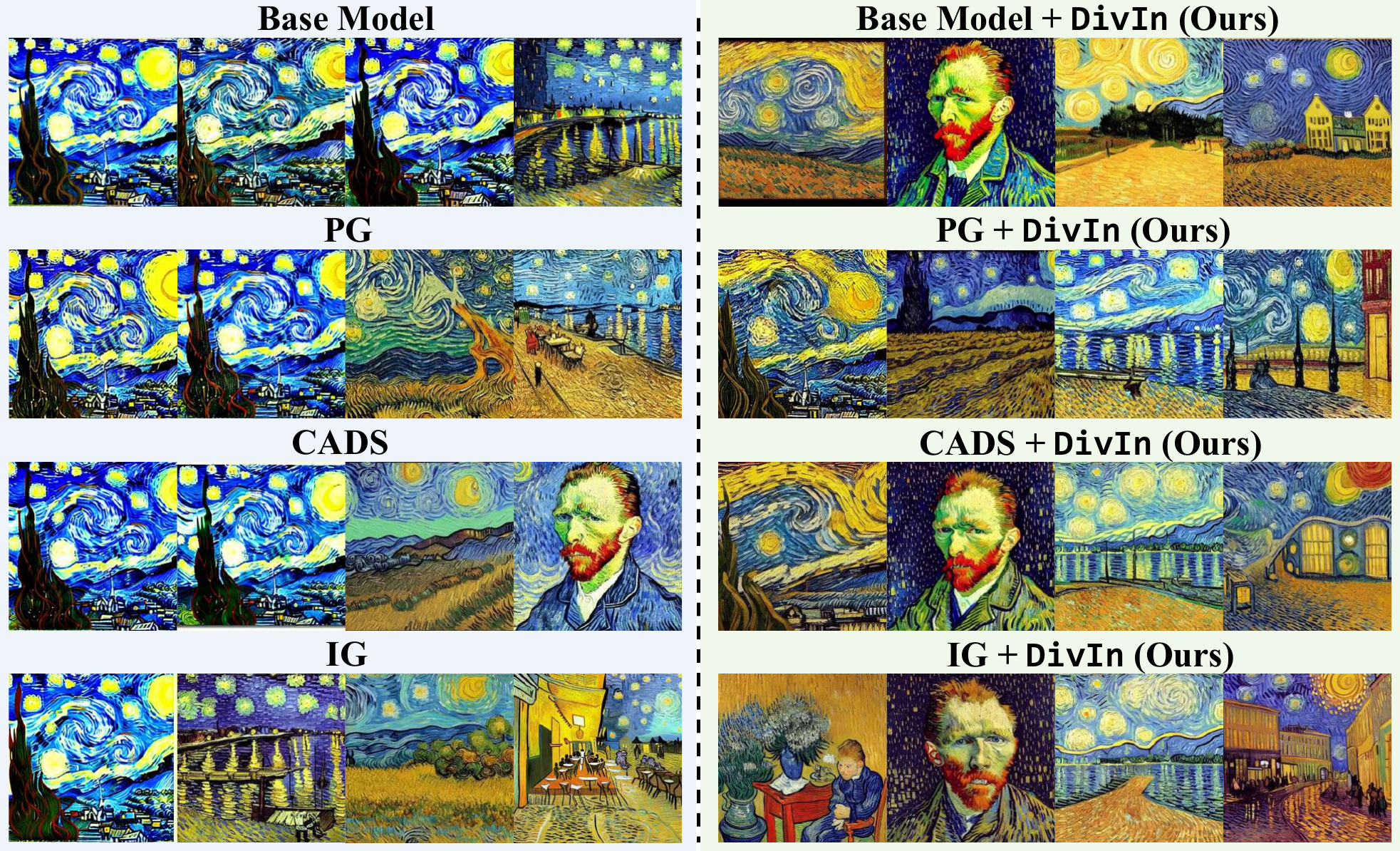}} 
\caption{Visualizing the orthogonality of \ourmethod to baseline models. 
We compare standard sampling (Base Model), trajectory-based methods (PG, CADS, IG), and their combinations with \ourmethod on the prompt ``\textit{Van Gogh painting}''. 
While baselines tend to collapse to a single dominant copy (first column), plugging in \ourmethod (right four columns) restores diversity across all methods, introducing distinct variations.}
\label{fig:viz} 

\end{figure}

\paragraph{Qualitative Examples}
Figure~\ref{fig:example} and Figure~\ref{fig:viz} provide visualized examples demonstrating that \ourmethod alleviates mode collapse where standard sampling fails.
Figure~\ref{fig:example} illustrates the versatility of \ourmethod across different models and prompts. 
In the left two prompts (diffusion model), the baseline produces near-duplicate images with identical poses and artwork copies. 
Our method successfully diversifies the generated images. 
This improvement also extends to flow matching models. 
For example, for the ``\textit{crystal cave}'' prompt, the baseline collapses to a single blue hue, whereas \ourmethod uncovers multi-colored variants (purple, orange, green) while maintaining the prompt's semantics.
In Figure~\ref{fig:viz}, we visualize samples for the prompt ``\textit{Van Gogh painting}'' across different diversity enhancement strategies. 
Standard sampling and trajectory-intervention baselines (PG, CADS, IG) often converge to a rigid, repetitive generation, dominated by the well-known copy (the first column). 
By contrast, plugging \ourmethod into these mechanisms consistently discovers diverse modes, generating variations in different art pieces without losing the artistic style. 
Note that \ourmethod does not eliminate but under-samples the dominant mode (e.g., ``\textit{Starry Night}'').
This visually validates that our initialization strategy can further benefit the diversity enhancement on top of various sampling strategies.
More generated examples can be found in Appendix~\ref{sec:add_example}.

\paragraph{Pareto Frontier Expansion}
We further analyze the diversity-quality trade-off by sweeping the hyperparameter settings for all trajectory-based methods and temperature choices for \ourmethod in Figure~\ref{fig:pareto}.
We observe a consistent pattern of Pareto dominance that the curves depicting methods integrated with \ourmethod (solid lines) consistently encompass their baseline counterparts (dashed lines).
Specifically, for any fixed level of image fidelity (measured by precision, density, CLIP score, or aesthetic score), the integration of \ourmethod yields significantly higher diversity (recall, coverage, or Vendi score).
This visualizes our finding that initialization with guidance potential posterior unlocks a distinct source of diversity that is additive to interventions during the denoising process.
\ourmethod pushes the entire Pareto frontier outward, allowing for more diverse generation without penalty in semantic alignment or image quality.

\subsection{Analysis on Stochastic Formulation}
\label{sec:exp_analyze}
We analyze the implications of shifting from deterministic SAIL to stochastic Langevin sampling (\ourmethod).

\paragraph{Manifold Preservation}
In Figure~\ref{fig:sail_process}, we track the sharpness and mean latent norm $\|\mathbf{x}_T\|_2$ over SAIL optimization process. Note that for an isotropic Gaussian distribution in $d$ dimensions, the expected $L_2$ norm is tightly concentrated around $\sqrt{d}$, which is approximately 128 for our latent space. However, we observe that SAIL suffers from a substantial degradation in Gaussianity (shrinking below 128) because of its deterministic nature, which drives the latent variables off the natural manifold. 
These out-of-distribution latents lead to severe high-frequency artifacts, as visualized in Figure~\ref{fig:fail} (Right).
We further validate that applying a projection step in SAIL updates can fix the shrinking norm but sacrifices diversity in Appendix~\ref{sec:proj_sail}.
In contrast, the Langevin dynamics of \ourmethod (Figure~\ref{fig:ours_process}) balance the diversity gradient with a restorative prior and stochastic noise. 
This preserves the distributional properties of the initialization to keep the norm consistent with the standard initialization, allowing for deep exploration of the potential landscape without sacrificing image validity.

\begin{figure}[t]
    \centering
    \captionsetup[subfigure]{skip=1pt,font=footnotesize}
    
    
    \begin{subfigure}[b]{0.49\columnwidth}
        \centering
        \includegraphics[width=\linewidth]{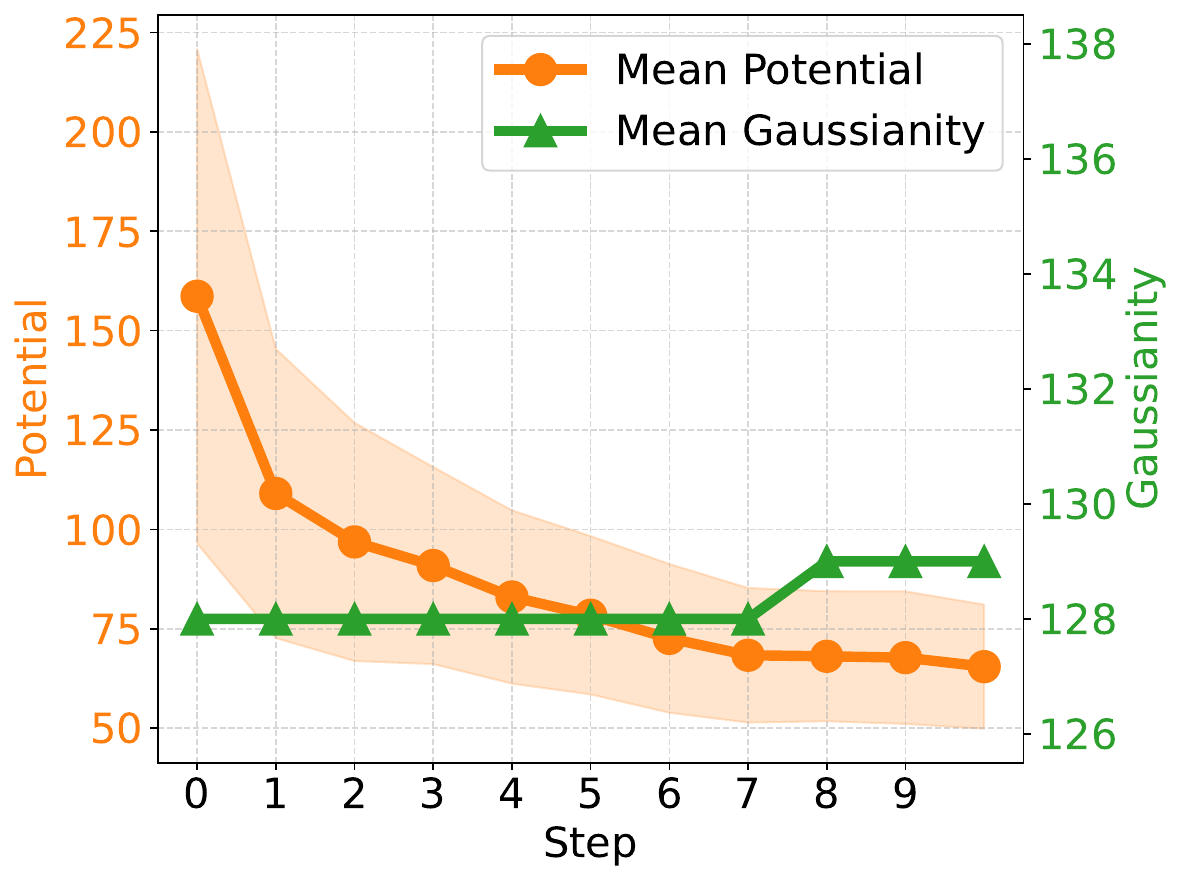}
        \caption{\ourmethod iteration process.}
        \label{fig:ours_process}
    \end{subfigure}
    \hfill 
    \begin{subfigure}[b]{0.49\columnwidth}
        \centering
        \includegraphics[width=\linewidth]{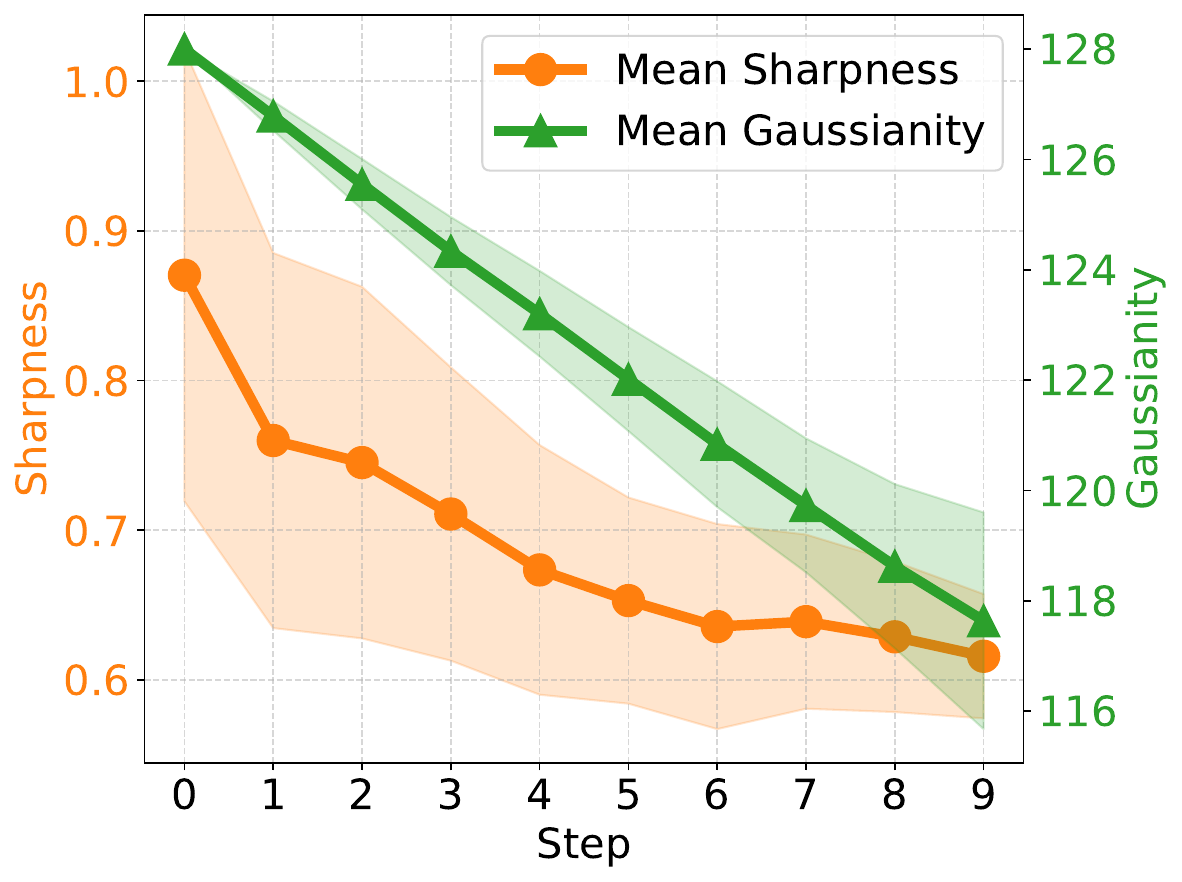}

        \caption{SAIL optimization process.}
        \label{fig:sail_process}
    \end{subfigure}
    
    \vspace{1pt} 


    \begin{subfigure}[b]{0.49\columnwidth}
        \centering
        
        \includegraphics[width=\linewidth]{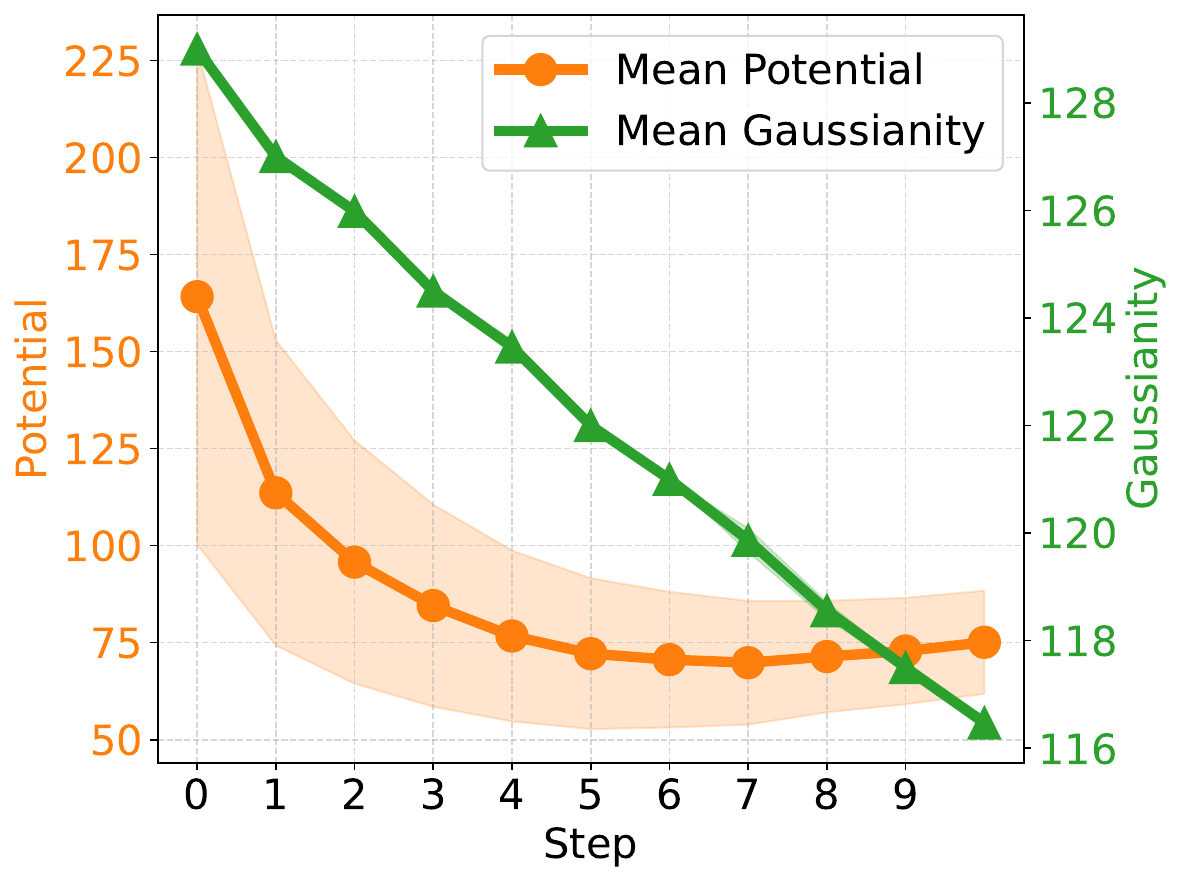}
        \caption{\ourmethod w/o noise.}
        \label{fig:ours_no_noise}
    \end{subfigure}
    \hfill 
    \begin{subfigure}[b]{0.49\columnwidth}
        \centering
        \includegraphics[width=\linewidth]{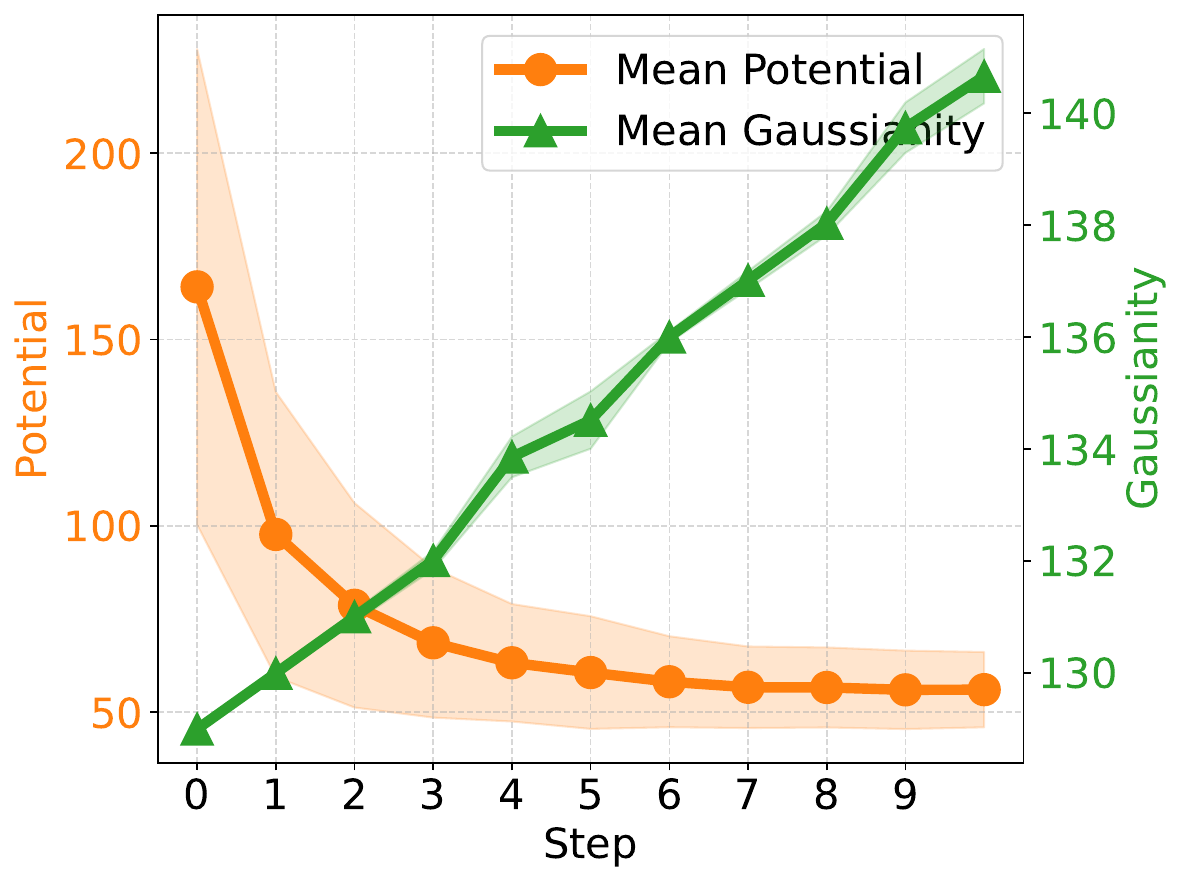}
        \caption{\ourmethod w/o prior.}
        \label{fig:ours_no_prior}
    \end{subfigure}
    
    \caption{
    We track the guidance potential/sharpness (orange, lower is better) and latent Gaussianity (green, mean latent norm $||x_T||_2$) over 10 iteration steps. 
    \textit{(a)} \Ourmethod successfully minimizes potential while maintaining Gaussianity. 
    \textit{(b)} SAIL optimizes sharpness but drifts the latent norm off-manifold, leading to artifacts. 
    \textit{(c)} Similar to SAIL, without noise injection, the latent collapses.
    \textit{(d)} Without the prior term, the latent norm explodes. Both an exploding norm and a collapsing norm indicate a shift away from the ideal initial standard Gaussian distribution.} 
    \label{fig:optimization_comparison}

\end{figure}

\paragraph{Stability and Efficiency}
Figure~\ref{fig:fail} (left) highlights the robustness of our approach compared to the fragility of SAIL. 
We find that the performance of SAIL is highly sensitive to its early stopping condition, as a strict threshold can trigger image quality collapses (FID spikes to 27.11), leading to artifacts (as shown in Figure~\ref{fig:fail} right-top examples) because of the loss of Gaussianity as discussed above. 
Conversely, \ourmethod exhibits a smooth curve in p and remains stable even after multiple Langevin steps.

Furthermore, as \ourmethod is a training-free method, it incurs zero training overhead, and the additional inference overhead is minimal and highly controllable. As shown in Figure~\ref{fig:fail} (left), \ourmethod is computationally superior as it requires only $\sim3\%$ additional wall-clock time per image for a single step ($K=1$), increasing the generation time from 0.754 seconds to 0.779 seconds. In contrast, SAIL's rejection sampling loop and second-order approximation introduce higher computational cost.

\begin{figure}[t] 
    \centering 
    \centerline{
    \includegraphics[width=\columnwidth]{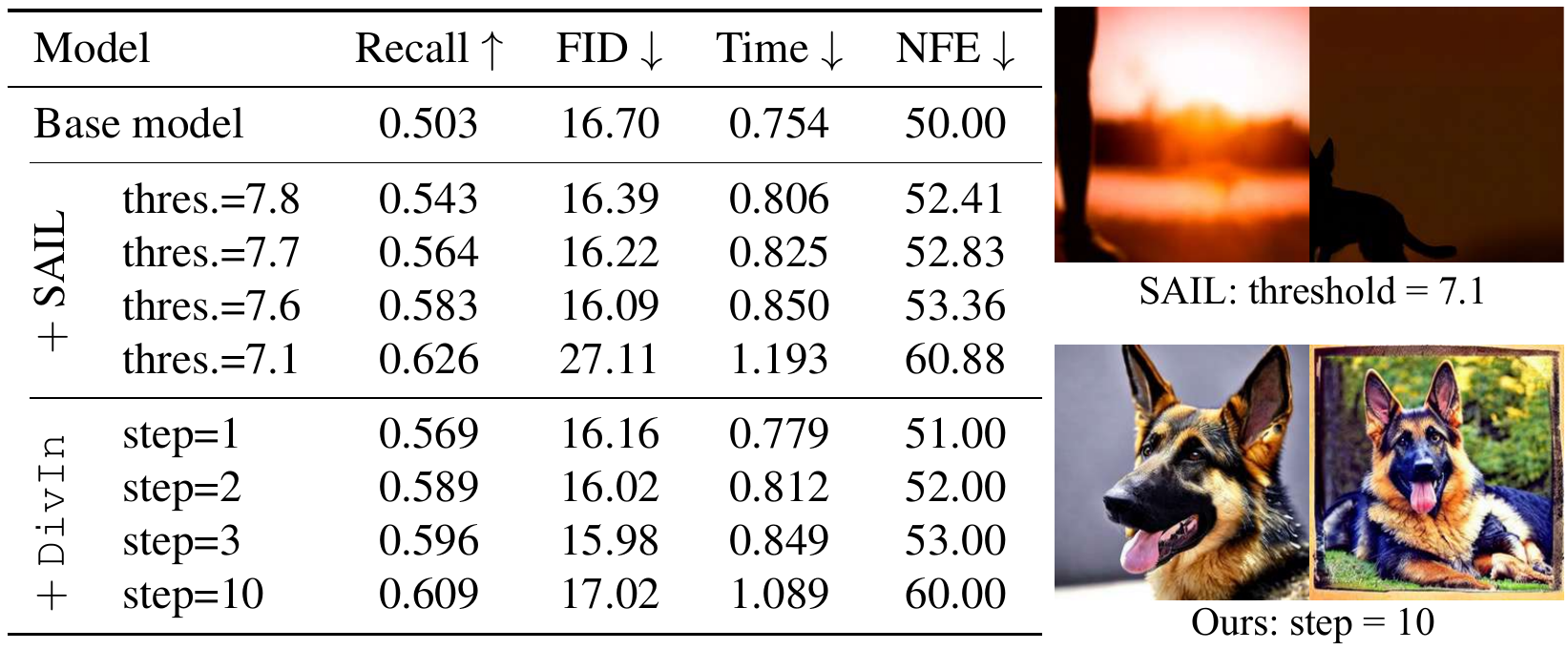}} 
    \caption{
    Left: Quantitative comparison of efficiency and stability. SAIL is unstable as decreasing the threshold causes a catastrophic increase in FID. \ourmethod consistently shows promising results with less additional computational cost. 
    Right: Generations on prompt ``\textit{a photo of a German shepherd}.'' SAIL results in high-frequency noise artifacts, while \ourmethod still produces natural and high-quality images as optimization steps increase.}
    \label{fig:fail} 

\end{figure}

\subsection{Ablation Study}
\label{sec:exp_ablation}
\paragraph{Effect of Noise and Prior}
Figure~\ref{fig:optimization_comparison} and Table~\ref{tab:ablation} study the effect of the stochastic term $\sqrt{2\eta} \boldsymbol{\xi}$ and prior term $\nabla_\mathbf{x}\log p(\mathbf{x})$ in the updating rule of \ourmethod defined in Equation~\eqref{eq:update_x}.
We observe a sharp drop in diversity in the w/o noise variant, with recall falling from $0.569$ to $0.541$ and Vendi score from $4.69$ to $4.53$.
Notably, removing the stochastic term degrades performance to the level of the deterministic baseline (SAIL), empirically proving that the diversity gain is not from the geometric potential alone, but also from the Langevin sampling formulation.
Furthermore, without the prior term, diversity metrics degrade relative to the full method, implying that the latent fails to cover the high-density regions of the target distribution without the restorative force. 
In Table~\ref{tab:ablation}, we note a counterintuitive phenomenon that the prior-free setting leads to a better image quality (indicated by FID) than the noise-free setting. We hypothesize that this is because diffusion models react differently to minor off-manifold initial latents when the Langevin step $K=1$. As we increase steps $K=3$, image quality degrades significantly for both the noise-free (FID 17.53) and prior-free (FID 17.36) settings compared to the full \ourmethod (FID 15.98).
Overall, the full \ourmethod achieves the best balance between the prior and noise to guide the stochastic exploration toward valid and diverse modes.

\begin{table}[htbp]
    \centering
    \caption{
    Analysis on the updating rule component of \ourmethod. Removing noise (w/o noise) and prior (w/o prior) degrades diversity performance, proving that both prior guidance and noise injection in stochastic sampling are essential in \ourmethod for mode discovery.}
    
    \label{tab:ablation}
    \resizebox{\columnwidth}{!}{%
    \begin{tabular}{lccccc}
        \toprule
        Method & Recall $\uparrow$ & Vendi Score $\uparrow$ & Precision $\uparrow$ & FID $\downarrow$ \\
        \midrule
        Base Model & $0.503 $ & $4.265 $ & $\mathbf{0.833} $ & $16.696 $ \\
        SAIL (Baseline) & $0.543 $ & $4.549 $ & $0.825 $ & $16.395 $ \\
        \cmidrule(l){1-6} 
        \ourmethod w/o noise & $0.541$ & $4.534$ & $0.822 $ & $16.544 $ \\
        \ourmethod w/o prior & $0.557 $ & $4.584 $ & $0.824 $ & $\mathbf{16.121} $ \\
        \cmidrule(l){1-6} 
        \ourmethod & $\mathbf{0.569}$ & $\mathbf{4.688} $ & $0.825$ & $16.158$ \\
        \bottomrule
    \end{tabular}
    }
\end{table}

\begin{table}[htbp]
    \centering
    \caption{\ourmethod's performance sensitivity to the Langevin hyperparameters step size $\eta$ and noise scale $\sqrt{2\eta}$. }
    \label{tab:step_size}
    \resizebox{0.8\linewidth}{!}{%
    \begin{tabular}{cccc}
    \toprule
        Step size ($\eta$) & Noise scale ($\sqrt{2\eta}$) & Recall $\uparrow$ & Precision $\downarrow$ \\
    \midrule
        0.01 & 0.141 & 0.514 & 0.838 \\
        0.05 & 0.316 & 0.569 & 0.825 \\
        0.1 & 0.447 & 0.605 & 0.802 \\
    \bottomrule
    \end{tabular}
    }
\end{table}

\begin{figure}[htbp] 
    \centering 
    \centerline{
    \includegraphics[width=\columnwidth]{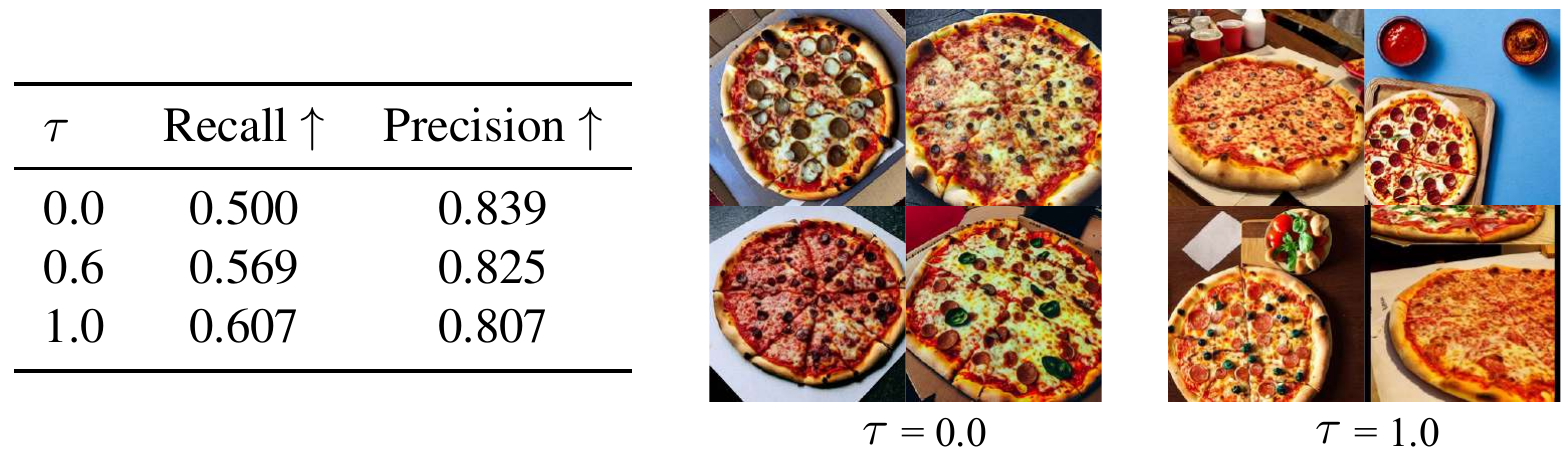}} 
    \caption{\ourmethod's performance sensitivity to temperature $\tau$.}
    \label{fig:temp} 

\end{figure}

\paragraph{Hyperparameter Sensitivity}
We investigate the hyperparameter sensitivity of \ourmethod by scaling the number of steps $K$, step size $\eta$, and temperature $\tau$. 
For the number of steps ($K$), we provided an ablation in Figure~\ref{fig:fail} (left table) and Table~\ref{tab:langevin_step} demonstrating that \ourmethod achieves better diversity with increasing Langevin steps at the expense of degraded FID. We also observe that this trend becomes stable from step 10, when the Langevin dynamics process converges to the stationary distribution of the target posterior.
We also conduct an ablation study on the effect of step size ($\eta$) and the noise scale ($\sqrt{2\eta}$), as presented in Table~\ref{tab:step_size}, increasing the step size achieves higher Recall by exploring further into the low-potential landscape, with a trade-off in Precision.

In Figure~\ref{fig:pareto} and details in Appendix~\ref{sec:add_exp}, we show the diversity-quality trade-off curves of \ourmethod with different temperatures.
We also illustrate the effect of temperature in Figure~\ref{fig:temp}. 
It clearly shows that increasing $\tau$ from $0.0$ to $1.0$ yields a substantial gain in recall (from $0.500$ to $0.607$) with a slight decrease in precision. 
As visualized, a zero temperature behaves similarly to the base model, where generated samples appear closely similar to each other. 
In contrast, higher temperatures ($\tau=1.0$) successfully discover diverse semantic attributes while maintaining structural coherence, demonstrating that \ourmethod offers a wide effective adjusting range compared to SAIL.

\section{Limitations and Future Work} 
Despite its efficacy in expanding the diversity-quality Pareto frontier, \ourmethod presents a few limitations. First, our framework inherently relies on the temperature hyperparameter $\tau$ to explicitly control the trade-off between standard prior adherence and diversity-driven exploration. Consequently, deploying \ourmethod on entirely novel generative architectures or highly specialized downstream domains may require empirical tuning to determine the optimal temperature. Second, approximating the target posterior via Langevin dynamics inevitably introduces additional forward and backward passes at the initial sampling step. While a single updating step ($K=1$) incurs marginal overhead in practice, achieving asymptotically exact sampling requires multiple iterations, which scales the initialization time. Furthermore, the stochastic nature of Langevin dynamics inherently introduces higher metric variance across independent runs compared to a fixed Gaussian initialization, though its mean performance remains consistently superior. Third, the current formulation of \ourmethod is exclusively applicable to conditional generative models. Because our diversity objective relies on the conditional guidance potential, it cannot natively support unconditional generation, where trajectory-based interventions can apply.

Future work could explore distilling the proposed guidance potential posterior directly into a lightweight, learned prior network to eliminate the need for iterative stochastic Langevin dynamics at inference entirely. Additionally, extending this geometric initialization perspective to unconditional settings, such as by leveraging unconditional score landscape curvature, represents an important direction to fully generalize our framework. 
Finally, expanding this method to other modalities, such as video generation, where mode collapse remains a critical bottleneck, presents a compelling frontier for future research.

\section{Conclusion}
We introduce \textit{Diversity-Inducing Initialization} (\ourmethod), a plug-and-play algorithm leveraging Langevin dynamics to sample from the \textit{guidance potential posterior} that biases initialization towards low-potential regions where diverse trajectories naturally diverge.
Our results verify that \ourmethod significantly expands the diversity-quality Pareto frontier across both diffusion and flow matching architectures and serves as a powerful orthogonal enhancement to existing trajectory-based methods.

\section*{Acknowledgements}
The authors are grateful to Srinivas Anumasa for early discussions on the paper.
This material is based upon work supported by the Air Force Office of Scientific Research under award number FA2386-24-1-4011, and this research is partially supported by the Singapore Ministry of Education Academic Research Fund Tier 1 (Award No: T1 251RES2509).

\section*{Impact Statement}
This paper presents work whose goal is to advance the field
of Machine Learning. There are many potential societal
consequences of our work, none which we feel must be
specifically highlighted here.




\bibliography{ref}
\bibliographystyle{icml2026}

\newpage
\appendix
\onecolumn

\section{Theoretical Analysis}
In this section, we provide the formal derivation and theoretical justification for our \ourmethod (Diversity-inducing Initialization). 
We start with the geometric interpretation in Appendix~\ref{sec:theoretical_justification} that our proposed potential $U(\mathbf{x}_T, c)$ serves as a proxy for the local sharpness of the conditional distribution.
In Appendix~\ref{sec:entropy_dynamics}, we demonstrate that initializing in regions of high sharpness leads to rapid entropy loss during the reverse diffusion process. Conversely, our method preserves diversity by selecting low-potential initial noise.

\subsection{Theoretical Justification: Tweedie Potential as a Sharpness Proxy}
\label{sec:theoretical_justification}

We provide a theoretical foundation for using the Generalized Tweedie Potential, defined as $U(\mathbf{x}_t, c) = \|\hat{\mathbf{x}}_0(\mathbf{x}_t, c) - \hat{\mathbf{x}}_0(\mathbf{x}_t, \varnothing)\|$, as a proxy for minimizing landscape sharpness. We establish that minimizing this potential is geometrically equivalent to minimizing the spectral difference between the conditional and unconditional Hessians.

\begin{proposition}[Tweedie-Hessian Connection]
\label{prop:tweedie_hessian}
Minimizing the squared Tweedie difference $\|\hat{\mathbf{x}}_0(\mathbf{x}_t, c) - \hat{\mathbf{x}}_0(\mathbf{x}_t, \varnothing)\|^2$ provides an estimate for minimizing the conditional sharpness. This relationship becomes analytically equivalent to minimizing the weighted trace of the squared difference between the unconditional and conditional Hessians under local Gaussian assumptions.
\end{proposition}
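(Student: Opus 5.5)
Via Tweedie's formula, the difference of denoising estimates is a rescaled score difference; then a local Gaussian model turns the score difference into a Hessian difference acting on a Gaussian-distributed displacement, and taking expectations produces a trace.

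\textbf{Step 1 (Tweedie step).} In the VP case, $\boldsymbol{\epsilon}_\theta(\mathbf{x}_t,c) = -\sqrt{1-\overline{\alpha}_t}\,\nabla_{\mathbf{x}}\log p_t(\mathbf{x}_t\mid c)$. Substituting into Equation~\eqref{eq:x0_sde} gives
\begin{equation*}
\hat{\mathbf{x}}_0(\mathbf{x}_t,c)-\hat{\mathbf{x}}_0(\mathbf{x}_t,\varnothing)=\frac{1-\overline{\alpha}_t}{\sqrt{\overline{\alpha}_t}}\,\Delta s(\mathbf{x}_t),
\end{equation*}
where $\Delta s=\nabla\log p_t(\cdot\mid c)-\nabla\log p_t$. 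The flow-matching estimator \eqref{eq:x0_fm} reduces to the same form with a different scalar $\lambda_t$, using the standard velocity--score identity for linear interpolants. In both cases the squared potential equals $\lambda_t^2\|\Delta s(\mathbf{x}_t)\|^2$. The factor $\lambda_t$ is a fixed positive constant at the initial time, so minimizing the potential in $\mathbf{x}_t$ is the same as minimizing the score difference.

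\textbf{Step 2 (local Gaussian model).} Assume that near $\mathbf{x}_t$ both marginals are locally Gaussian:
\begin{equation*}
\log p_t(\mathbf{x}\mid c)\approx -\tfrac12(\mathbf{x}-\boldsymbol{\mu}_c)^\top \mathbf{H}_c(\mathbf{x}-\boldsymbol{\mu}_c),\qquad \log p_t(\mathbf{x})\approx -\tfrac12(\mathbf{x}-\boldsymbol{\mu}_u)^\top \mathbf{H}_u(\mathbf{x}-\boldsymbol{\mu}_u).
\end{equation*}
Here $\mathbf{H}_c,\mathbf{H}_u$ are the negative Hessians, i.e.\ the sharpness. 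The scores are then affine, $-\mathbf{H}_c(\mathbf{x}-\boldsymbol{\mu}_c)$ and $-\mathbf{H}_u(\mathbf{x}-\boldsymbol{\mu}_u)$. At the initial time the conditional and unconditional marginals nearly coincide in location, so I will take a common center $\boldsymbol{\mu}$. In the high-noise regime this center is approximately $\mathbf{0}$. This gives
\begin{equation*}
\Delta s(\mathbf{x})=-(\mathbf{H}_c-\mathbf{H}_u)(\mathbf{x}-\boldsymbol{\mu}),
\end{equation*}
and therefore
\begin{equation*}
\|\Delta s\|^2=(\mathbf{x}-\boldsymbol{\mu})^\top(\mathbf{H}_c-\mathbf{H}_u)^2(\mathbf{x}-\boldsymbol{\mu}).
\end{equation*}

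\textbf{Step 3 (trace form).} Take expectations over the local displacement $\mathbf{x}-\boldsymbol{\mu}$, which under the Gaussian prior (or the local model) has covariance $\boldsymbol{\Sigma}$. This yields
\begin{equation*}
\mathbb{E}\|\hat{\mathbf{x}}_0(\mathbf{x}_t,c)-\hat{\mathbf{x}}_0(\mathbf{x}_t,\varnothing)\|^2=\lambda_t^2\,\mathrm{Tr}\!\left(\boldsymbol{\Sigma}(\mathbf{H}_u-\mathbf{H}_c)^2\right).
\end{equation*}
This is the "weighted trace of the squared Hessian difference", with weights $\lambda_t^2\boldsymbol{\Sigma}$; for $\boldsymbol{\Sigma}=\mathbf{I}$ it is $\lambda_t^2\|\mathbf{H}_u-\mathbf{H}_c\|_F^2$. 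To connect this with "conditional sharpness", I will note that $\mathbf{H}_c-\mathbf{H}_u$ is the Hessian of the guidance log-likelihood $\log p_t(c\mid\mathbf{x})$. The unconditional landscape does not depend on $c$, so any extra curvature injected by conditioning is measured exactly by this term. This is the SAIL-type sharpness in \citep{jeon2025understanding}.

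\textbf{Main obstacle.} The equal-means assumption in Step 2 is the delicate point. If $\boldsymbol{\mu}_c\neq\boldsymbol{\mu}_u$, a cross term $\mathbf{H}_c(\boldsymbol{\mu}_c-\boldsymbol{\mu}_u)$ survives. I would handle this either by stating equal centers as part of the "local Gaussian assumptions", or by bounding the mean shift by $O(\sqrt{\overline{\alpha}_t})$ at $t=T$, where it is negligible. The word "estimate" in the first sentence of the statement covers the approximation error of the learned networks and of the local Gaussian fit.
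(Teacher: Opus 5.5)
Your argument is essentially the paper's. Step~1 is the same Tweedie reduction to $\lambda_t^2\|\Delta s\|^2$, and Steps~2--3 re-derive inline the equal-means case of Lemma~4.2 of \citet{jeon2025understanding}, which the paper simply cites. This reproduces the paper's $\operatorname{tr}\bigl((\mathbf{H}-\mathbf{H}_c)^2\mathbf{H}_c^{-1}\bigr)$ once you fix the expectation to be under the conditional local Gaussian, $\boldsymbol{\Sigma}=\mathbf{H}_c^{-1}$.

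One caution concerns your mean-shift discussion. If you center at $\boldsymbol{\mu}_c$, the cross term vanishes in expectation and leaves only an additive $\|\mathbf{H}_u(\boldsymbol{\mu}_u-\boldsymbol{\mu}_c)\|^2$. Your fallback of calling this term negligible at $t=T$ fails, however. In the VP high-noise regime it is $O(\overline{\alpha}_t)$, while $\mathbf{H}_u-\mathbf{H}_c=O(\overline{\alpha}_t)$ makes the curvature term only $O(\overline{\alpha}_t^2)$. Equal means must therefore be stated as an assumption, exactly as the paper does.
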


\begin{proof}
The derivation proceeds in two stages: establishing the relationship between the Tweedie update and the score difference, and subsequently linking the score difference to the Hessian trace.

Firstly, we relate the Tweedie potential to the score difference.
Recall the Tweedie formula for estimating the clean data $\mathbf{x}_0$ from a noisy latent $\mathbf{x}_t$ in a standard diffusion process with noise schedule $\bar{\alpha}_t$:
\begin{equation}
    \hat{\mathbf{x}}_0(\mathbf{x}_t) = \frac{\mathbf{x}_t - \sqrt{1-\bar{\alpha}_t}\boldsymbol{\epsilon}_\theta(\mathbf{x}_t)}{\sqrt{\bar{\alpha}_t}}.
\end{equation}
The potential $U(\mathbf{x}_t, c)$ measures the geometric discrepancy between the conditional and unconditional estimators. Substituting the score function relation $\boldsymbol{\epsilon}_\theta(\mathbf{x}_t) \approx -\sqrt{1-\bar{\alpha}_t} \nabla_{\mathbf{x}_t} \log p_t(\mathbf{x}_t)$, we obtain:
\begin{align}
    \Delta \hat{\mathbf{x}}_0 &= \hat{\mathbf{x}}_0(\mathbf{x}_t, c) - \hat{\mathbf{x}}_0(\mathbf{x}_t, \varnothing) \nonumber \\
    &= -\frac{\sqrt{1-\bar{\alpha}_t}}{\sqrt{\bar{\alpha}_t}} \left( \boldsymbol{\epsilon}_\theta(\mathbf{x}_t, c) - \boldsymbol{\epsilon}_\theta(\mathbf{x}_t, \varnothing) \right) \nonumber \\
    &= \frac{1-\bar{\alpha}_t}{\sqrt{\bar{\alpha}_t}} \left( \nabla_{\mathbf{x}_t} \log p_t(\mathbf{x}_t | c) - \nabla_{\mathbf{x}_t} \log p_t(\mathbf{x}_t) \right),
\end{align}
where score difference $\mathbf{s}^\Delta(\mathbf{x}_t)=\nabla_{\mathbf{x}_t} \log p_t(\mathbf{x}_t | c) - \nabla_{\mathbf{x}_t} \log p_t(\mathbf{x}_t)$.
Letting $\lambda_t = \frac{1-\bar{\alpha}_t}{\sqrt{\bar{\alpha}_t}}$ denote the signal-to-noise ratio scaling factor, the squared potential is strictly proportional to the norm of the score difference:
\begin{equation}
    \|\Delta \hat{\mathbf{x}}_0\|^2 = \lambda_t^2 \| \mathbf{s}(\mathbf{x}_t, c) - \mathbf{s}(\mathbf{x}_t) \|^2.
    \label{eq:score_diff}
\end{equation}

Next, we connect the score difference to Hessian sharpness. We assume that  the unconditional and conditional distributions at timestep $t$ can be locally approximated by Gaussians $p_t(\mathbf{x}) \approx \mathcal{N}(\boldsymbol{\mu}, \boldsymbol{\Sigma})$ and $p_t(\mathbf{x}|c) \approx \mathcal{N}(\boldsymbol{\mu}_c, \boldsymbol{\Sigma}_c)$.
Using Lemma 4.2 from~\citet{jeon2025understanding}, which decomposes the expected squared score difference into a first-order mean shift and a second-order curvature mismatch:
\begin{equation}
\mathbb{E}_{p(\mathbf{x}|c)}\left[\| \mathbf{s}^\Delta(\mathbf{x}) \|^2\right] = \|H(\mathbf{x})(\boldsymbol{\mu} - \boldsymbol{\mu}_c)\|^2 + \operatorname{tr}\left( (H(\mathbf{x}) - H_c(\mathbf{x}))^2 H_c^{-1}(\mathbf{x}) \right),
\end{equation}
where $H(\mathbf{x}) = \boldsymbol{\Sigma}^{-1}$ and $H_c(\mathbf{x}) = \boldsymbol{\Sigma}_c^{-1}$ represent the Hessians (sharpness matrices) of the log-densities.

Additionally, considering the case where the local approximations share similar means ($\boldsymbol{\mu} \approx \boldsymbol{\mu}_c$) and have commuting covariances, the relation simplifies to the spectral difference:
\begin{equation}
\mathbb{E}\left[\| \mathbf{s}^\Delta(\mathbf{x}) \|^2\right] \approx \sum_{i=1}^{d} \frac{(\nu_i - \nu_{i,c})^2}{\nu_{i,c}},
\label{eq:hessian_trace}
\end{equation}
where $\nu_i$ and $\nu_{i,c}$ are the eigenvalues of the unconditional and conditional Hessians, respectively.

Combining Eq.~\eqref{eq:score_diff} and Eq.~\eqref{eq:hessian_trace}, we obtain:
\begin{equation}
\mathbb{E}[\|\Delta \hat{\mathbf{x}}_0\|^2] \propto \lambda_t^2 \operatorname{tr}\left( (H - H_c)^2 H_c^{-1} \right).
\end{equation}
This derivation establishes that minimizing the Tweedie Potential $U(\mathbf{x}_t, c)$ serves as a proxy for minimizing the spectral gap between the conditional and unconditional landscapes. Geometrically, this penalizes initialization in regions where the conditioning $c$ induces extreme sharpness ($\nu_{i,c} \gg \nu_i$) relative to the prior.

\end{proof}

\subsubsection{Extension to Flow Matching Models}
\label{sec:flow_matching_ext}

We demonstrate that the geometric principles derived above extend to Flow Matching (FM) models (e.g., Stable Diffusion 3), where the generative process is governed by an Ordinary Differential Equation (ODE).

\begin{proposition}[Flow-Hessian connection]
For Flow Matching models with linear interpolation paths, minimizing the velocity projection error $\|\Delta \hat{\mathbf{x}}_0\|_{\text{FM}}$ serves as a proxy for the score difference norm. Consequently, minimizing this error is equivalent to minimizing the conditional sharpness.
\end{proposition}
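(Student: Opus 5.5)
The plan is to mirror the two-stage argument of Proposition~\ref{prop:tweedie_hessian}. First, express the flow-matching Tweedie difference as a rescaled score difference. Then invoke the same Hessian decomposition to finish.

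\textbf{Step 1: relate velocity to score.} For the linear path $\mathbf{x}_t=(1-t)\mathbf{x}_0+t\mathbf{x}_1$ with $\mathbf{x}_1\sim\mathcal{N}(0,\mathbf{I})$, the optimal velocity is $\mathbf{v}^*(\mathbf{x}_t)=\mathbb{E}[\mathbf{x}_1-\mathbf{x}_0\mid\mathbf{x}_t]$. Tweedie's formula for the Gaussian corruption $\mathbf{x}_t=(1-t)\mathbf{x}_0+t\boldsymbol{\epsilon}$ gives $\mathbb{E}[\boldsymbol{\epsilon}\mid\mathbf{x}_t]=-t\,\nabla_{\mathbf{x}_t}\log p_t(\mathbf{x}_t)$. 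Combined with $\mathbf{x}_t=(1-t)\mathbb{E}[\mathbf{x}_0\mid\mathbf{x}_t]+t\,\mathbb{E}[\boldsymbol{\epsilon}\mid\mathbf{x}_t]$, this yields the affine identity
\begin{equation*}
\mathbf{v}(\mathbf{x}_t)=\frac{\mathbf{x}_t}{1-t}\cdot\Big(\!-1\Big)\cdot(-1)\;-\;\frac{t}{1-t}\nabla\log p_t(\mathbf{x}_t)+\text{(terms independent of }c),
\end{equation*}
or more precisely $\mathbf{v}=-\frac{\mathbf{x}_t+t\,\nabla\log p_t}{1-t}$ up to sign conventions.

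The key observation is that the only condition-dependent part is linear in the score, with a coefficient depending only on $t$. The same holds with $c$ inserted, since the path is shared.

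\textbf{Step 2: the projection difference.} Substituting into Eq.~\eqref{eq:x0_fm}, the $\mathbf{x}_t$ terms cancel between the conditional and unconditional estimators, leaving
\begin{equation*}
\Delta\hat{\mathbf{x}}_0^{\text{FM}}=-t\big(\mathbf{v}_\theta(\mathbf{x}_t,c)-\mathbf{v}_\theta(\mathbf{x}_t,\varnothing)\big)=\lambda_t^{\text{FM}}\,\mathbf{s}^\Delta(\mathbf{x}_t),\qquad \lambda_t^{\text{FM}}=\frac{t^2}{1-t}.
\end{equation*}
The exact constant should be double-checked. This is the analogue of Eq.~\eqref{eq:score_diff}, giving $\|\Delta\hat{\mathbf{x}}_0\|^2_{\text{FM}}=(\lambda_t^{\text{FM}})^2\|\mathbf{s}^\Delta\|^2$.

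\textbf{Step 3: link to sharpness.} I would then reuse verbatim the second stage of the proof of Proposition~\ref{prop:tweedie_hessian}. Under the local Gaussian approximations of $p_t$ and $p_t(\cdot\mid c)$, Lemma 4.2 of \citet{jeon2025understanding} expresses $\mathbb{E}\|\mathbf{s}^\Delta\|^2$ as a mean-shift term plus $\operatorname{tr}((H-H_c)^2H_c^{-1})$. This lemma concerns densities only, so it is agnostic to whether they arise from a diffusion SDE or an FM ODE. With similar means and commuting covariances, this reduces to Eq.~\eqref{eq:hessian_trace}. Consequently $\mathbb{E}\|\Delta\hat{\mathbf{x}}_0\|^2_{\text{FM}}\propto(\lambda_t^{\text{FM}})^2\operatorname{tr}((H-H_c)^2H_c^{-1})$, which is the claim.

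\textbf{Main obstacle.} The delicate point is the initial time $t\to1$, where $\lambda_t^{\text{FM}}$ blows up, so the proportionality constant must be handled carefully. I would address it by noting that the sampler starts at a discrete $t<1$ (the first scheduler step), so $\lambda_t^{\text{FM}}$ is a finite constant absorbed into $\tau$. Alternatively, one could avoid the singularity by working directly with the noise-prediction form $\mathbb{E}[\boldsymbol{\epsilon}\mid\mathbf{x}_t]$, where the coefficient is regular. A secondary caveat is that $\mathbf{v}_\theta$ only approximates $\mathbf{v}^*$, so the statement is a proxy rather than an exact equivalence, exactly as in the diffusion case.
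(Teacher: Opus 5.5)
Your proposal is correct and follows the paper's route: write $\Delta\hat{\mathbf{x}}_0=-t\bigl(\mathbf{v}_\theta(\mathbf{x}_t,c)-\mathbf{v}_\theta(\mathbf{x}_t,\varnothing)\bigr)$, use Tweedie's formula on the Gaussian path to turn the velocity difference into a score difference, then reuse Lemma 4.2 of Jeon et al.\ and the commuting-spectrum reduction from Proposition~\ref{prop:tweedie_hessian}. Your Step 1 is sharper than the paper's version, which only asserts $\mathbf{v}_\theta\propto-\mathbf{s}_\theta$ from the rough estimate $\hat{\mathbf{x}}_0\approx\mathbf{x}_t+\sigma^2(t)\nabla\log p_t$: your exact affine identity $\mathbf{v}=-(\mathbf{x}_t+t\nabla\log p_t)/(1-t)$ shows why the $\mathbf{x}_t$ term cancels, and your constant $\lambda_t^{\text{FM}}=t^2/(1-t)$ is correct; note, however, that at $t=1$ the score difference vanishes identically while $\Delta\hat{\mathbf{x}}_0\to\mathbb{E}[\mathbf{x}_0\mid c]-\mathbb{E}[\mathbf{x}_0]$, so the argument genuinely needs $t<1$, and the $\boldsymbol{\epsilon}$-prediction rewrite does not help since $\Delta\hat{\mathbf{x}}_0=-\tfrac{t}{1-t}\Delta\hat{\boldsymbol{\epsilon}}$.
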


\begin{proof}
In Flow Matching, the forward process is defined by a linear interpolation $\mathbf{x}_t = (1-t)\mathbf{x}_0 + t \mathbf{x}_1$, where $\mathbf{x}_1 \sim \mathcal{N}(\mathbf{0}, \mathbf{I})$. The model learns a velocity field $\mathbf{v}_\theta(\mathbf{x}_t, t)$ approximating the time derivative $\dot{\mathbf{x}}_t = \mathbf{x}_1 - \mathbf{x}_0$. The clean data $\hat{\mathbf{x}}_0$ can be recovered geometrically by projection:
\begin{equation}
    \hat{\mathbf{x}}^{\text{flow}}_0(\mathbf{x}_t) = \mathbf{x}_t - t \cdot \mathbf{v}_\theta(\mathbf{x}_t).
\end{equation}
We define the FM-Tweedie Potential as the magnitude of the velocity shift induced by conditioning:
\begin{align}
    \Delta \hat{\mathbf{x}}_0 &= \hat{\mathbf{x}}_0(\mathbf{x}_t, c) - \hat{\mathbf{x}}_0(\mathbf{x}_t, \varnothing) \nonumber \\
    &= (\mathbf{x}_t - t \cdot \mathbf{v}_\theta(\mathbf{x}_t, c)) - (\mathbf{x}_t - t \cdot \mathbf{v}_\theta(\mathbf{x}_t, \varnothing)) \nonumber \\
    &= -t \left( \mathbf{v}_\theta(\mathbf{x}_t, c) - \mathbf{v}_\theta(\mathbf{x}_t, \varnothing) \right).
\end{align}
To relate this velocity difference to the Hessian, we utilize the correspondence between the flow-based estimator and the score-based estimator. For Gaussian probability paths, Tweedie's formula provides a score-based estimate of the clean data: 
\begin{equation} 
\hat{\mathbf{x}}_0^{\text{score}}(\mathbf{x}_t) \approx \mathbf{x}_t + \sigma^2(t) \nabla_{\mathbf{x}} \log p_t(\mathbf{x}_t). 
\end{equation}
By equating the geometric flow estimator to the score-based Tweedie estimator ($\hat{\mathbf{x}}_0^{\text{flow}} \approx \hat{\mathbf{x}}_0^{\text{score}}$), we establish a proportionality between the learned velocity field and the score function:
\begin{equation}
    -t \cdot \mathbf{v}_\theta(\mathbf{x}_t) \approx \sigma^2(t) \nabla_{\mathbf{x}} \log p_t(\mathbf{x}_t) \implies \mathbf{v}_\theta \propto -\mathbf{s}_\theta.
\end{equation}
Substituting this relationship into the potential definition yields:
\begin{equation}
    \|\Delta \hat{\mathbf{x}}_0\|_{\text{FM}}^2 = t^2 \| \mathbf{v}^\Delta(\mathbf{x}_t) \|^2 \propto \lambda_{\text{FM}}^2 \| \mathbf{s}(\mathbf{x}_t, c) - \mathbf{s}(\mathbf{x}_t) \|^2.
\end{equation}

Finally, based on the result from Eq.~\eqref{eq:hessian_trace}, we conclude that minimizing $\|\Delta \hat{\mathbf{x}}_0\|_{\text{FM}}$ in flow-based models equivalently minimizes the conditional sharpness. This penalty effectively prevents initialization in regions where the condition $c$ imposes excessive curvature, ensuring diverse manifold exploration.
\end{proof}
Furthermore, for fixed timesteps in flow matching models, our potential formulation based on the estimator is exactly proportional to the velocity difference. This exact equality aligns with the theoretical relationships established by~\cite{zheng2023guided}, confirming that our metric natively translates to velocity-based frameworks.

\subsection{Preservation of Diversity via Entropy Dynamics}
\label{sec:entropy_dynamics}

We now establish a formal bridge between the geometric notion of \textit{sharpness} (derived in Section~\ref{sec:theoretical_justification}) and the statistical notion of \textit{diversity}, quantified here by the differential entropy $\mathcal{H}(p_t)$. We demonstrate that regions of high conditional curvature are the primary drivers of rapid entropy loss—and consequently, mode collapse—during the generative process.

Consider the deterministic Probability Flow ODE formulation of the reverse diffusion process~\cite{song2021scorebased}, whose marginal distributions $\{p_t\}_{t=0}^T$ coincide with those of the stochastic SDE:
\begin{equation}
    d\mathbf{x} = \mathbf{f}_t(\mathbf{x}) dt = \left[ -\frac{1}{2}\beta_t \mathbf{x} - \frac{1}{2}\beta_t \nabla_{\mathbf{x}} \log p_t(\mathbf{x} | c) \right] dt,
    \label{eq:pf_ode}
\end{equation}
where time $t$ flows backward from $T \to 0$, and $\nabla_{\mathbf{x}} \log p_t(\mathbf{x} | c)$ is the conditional score function.

\begin{theorem}[Generative entropy evolution]
\label{thm:entropy_decay}
Let $\{\mathbf{x}_t\}_{t \in [0,T]}$ be the trajectory determined by the Probability Flow ODE of a Variance Preserving (VP) diffusion process. Defined in the generative direction where time flows from $T \to 0$ (i.e., $dt < 0$), the instantaneous rate of change of the conditional differential entropy $\mathcal{H}_t(\mathbf{x}|c)$ with respect to the time step $t$ is:
\begin{equation}
    \frac{d\mathcal{H}_t}{dt} = -\frac{d}{2}\beta_t + \frac{1}{2}\beta_t \mathbb{E}_{p_t(\mathbf{x})}[\operatorname{tr}(H_c(\mathbf{x}))],
\end{equation}
where $d$ is the dimensionality of the data space (i.e., $\mathbf{x} \in \mathbb{R}^d$). $H_c(\mathbf{x}) = -\nabla^2_{\mathbf{x}} \log p_t(\mathbf{x} | c)$ is the Hessian of the conditional log-density (local sharpness).
\end{theorem}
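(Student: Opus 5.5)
The plan is to read the statement as the Liouville (instantaneous change-of-variables) formula for a deterministic flow, applied to the Probability Flow ODE \eqref{eq:pf_ode}. For any ODE $d\mathbf{x}/dt = \mathbf{f}_t(\mathbf{x})$ whose marginals $p_t$ satisfy the continuity equation $\partial_t p_t = -\nabla\cdot(p_t \mathbf{f}_t)$, the differential entropy $\mathcal{H}_t = -\int p_t \log p_t$ evolves as
\begin{equation*}
\frac{d\mathcal{H}_t}{dt} = \mathbb{E}_{p_t}\left[\nabla\cdot \mathbf{f}_t(\mathbf{x})\right].
\end{equation*}
So the proof reduces to two steps: establish this identity, then compute the divergence of the specific drift.

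For the identity, I would differentiate under the integral:
\begin{equation*}
\frac{d\mathcal{H}_t}{dt} = -\int \partial_t p_t\,(\log p_t + 1)\,d\mathbf{x}.
\end{equation*}
The term $\int \partial_t p_t = 0$ because total mass is conserved. Substituting the continuity equation and integrating by parts gives
\begin{equation*}
-\int p_t\, \mathbf{f}_t \cdot \nabla \log p_t = -\int \mathbf{f}_t\cdot \nabla p_t = \int p_t\, \nabla\cdot \mathbf{f}_t .
\end{equation*}
Both integrations by parts need the boundary terms to vanish. I will assume the standard regularity that $p_t$ and its gradient decay fast enough, which holds for VP marginals since they are data convolved with a Gaussian. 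An equivalent route is to note that along trajectories $\frac{d}{dt}\log p_t(\mathbf{x}_t) = -\nabla\cdot \mathbf{f}_t(\mathbf{x}_t)$ and then take expectations.

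The second step is to compute the divergence for $\mathbf{f}_t = -\tfrac12\beta_t \mathbf{x} - \tfrac12 \beta_t \nabla \log p_t(\mathbf{x}|c)$. We have $\nabla\cdot \mathbf{x} = d$ and $\nabla\cdot\nabla\log p_t(\mathbf{x}|c) = \operatorname{tr}(\nabla^2 \log p_t(\mathbf{x}|c)) = -\operatorname{tr}(H_c(\mathbf{x}))$. This gives
\begin{equation*}
\nabla\cdot\mathbf{f}_t = -\tfrac{d}{2}\beta_t + \tfrac12\beta_t\operatorname{tr}(H_c(\mathbf{x})).
\end{equation*}
Taking the expectation under $p_t$ yields exactly the stated formula.

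The main obstacle is bookkeeping of the time direction and signs, not the analysis. The drift in \eqref{eq:pf_ode} is written for the generative direction, while the statement differentiates with respect to $t$ with $dt<0$. I would state explicitly that \eqref{eq:pf_ode} is taken as the velocity field $d\mathbf{x}/dt$ in the variable $t$, with the generative process simply traversing decreasing $t$. The continuity equation and the Liouville identity are then applied in that same variable, so the formula holds as written.

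I would then remark on the interpretation. As $t$ decreases, the entropy change is $d\mathcal{H} = (\tfrac{d}{2}\beta_t - \tfrac12\beta_t\mathbb{E}[\operatorname{tr} H_c])\,|dt|$. Large conditional sharpness therefore drives faster entropy loss, which is the link to Proposition~\ref{prop:tweedie_hessian}. I would also flag that the expectation is with respect to the conditional marginal $p_t(\mathbf{x}|c)$, which the statement abbreviates as $p_t(\mathbf{x})$.
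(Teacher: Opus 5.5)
Your proposal is correct and follows essentially the same route as the paper, which likewise invokes the instantaneous change-of-variables formula $\frac{d}{dt}\log p_t(\mathbf{x}_t) = -\operatorname{div}(\mathbf{f}_t(\mathbf{x}_t))$ and takes expectations to get $\frac{d\mathcal{H}_t}{dt} = \mathbb{E}_{p_t}[\operatorname{div}\mathbf{f}_t]$, then computes the divergence of the VP probability-flow drift exactly as you do. Your continuity-equation derivation (with explicit boundary-term assumptions) and your remarks on the time direction and on the expectation being under $p_t(\mathbf{x}|c)$ make explicit what the paper's short proof leaves implicit.
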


\begin{proof}
By the instantaneous change of variables formula \citep{chen2018neural}, the time evolution of the log-density under the continuous flow defined by Eq.~\eqref{eq:pf_ode} satisfies:
\begin{equation}
    \frac{d \log p_t(\mathbf{x}_t)}{dt} = -\operatorname{div}(\mathbf{f}_t(\mathbf{x}_t)).
\end{equation}
The rate of change of the entropy is the expectation of this divergence:
\begin{equation}
    \frac{d\mathcal{H}_t}{dt} = -\frac{d}{dt} \mathbb{E}_{p_t}[\log p_t(\mathbf{x}|c)] = \mathbb{E}_{p_t}[\operatorname{div}(\mathbf{f}_t(\mathbf{x}))].
\end{equation}
We compute the divergence of the drift vector field $\mathbf{f}_t$ for the VP-SDE:
\begin{align}
    \operatorname{div}(\mathbf{f}_t) &= \operatorname{div}\left( -\frac{1}{2}\beta_t \mathbf{x} - \frac{1}{2}\beta_t \nabla_{\mathbf{x}} \log p_t(\mathbf{x}|c) \right) \nonumber \\
    &= -\frac{1}{2}\beta_t \operatorname{div}(\mathbf{x}) - \frac{1}{2}\beta_t \operatorname{div}(\nabla_{\mathbf{x}} \log p_t(\mathbf{x}|c)) \nonumber \\
    &= -\frac{d}{2}\beta_t - \frac{1}{2}\beta_t \Delta \log p_t(\mathbf{x}|c).
\end{align}
Recognizing that the Laplacian of the log-density corresponds to the negative trace of the Hessian, $\Delta \log p = \operatorname{tr}(\nabla^2 \log p) = -\operatorname{tr}(H_c)$, we substitute this back to obtain:
\begin{equation}
    \operatorname{div}(\mathbf{f}_t) = -\frac{d}{2}\beta_t + \frac{1}{2}\beta_t \operatorname{Tr}(H_c(\mathbf{x})).
\end{equation}
Taking the expectation over $p_t(\mathbf{x}|c)$, we have the theorem statement.
\end{proof}

\paragraph{Interpretation}
Theorem~\ref{thm:entropy_decay} establishes a direct geometric constraint on diversity preservation. 
Since the generative process flows backward in time ($T \to 0$, implying $dt < 0$), the accumulated change in entropy is determined by the integral of the instantaneous rate $d\mathcal{H}/dt$.
Crucially, the equation reveals that the rate of entropy reduction is driven by the local sharpness $\operatorname{tr}(H_c(\mathbf{x}))$.
Trajectories initialized in or passing through regions with a large Hessian trace (high sharpness) yield a large positive value for $d\mathcal{H}/dt$. When integrated over the negative time increment $dt$, this results in significant entropy loss, effectively collapsing the distribution volume rapidly.
By Proposition~\ref{prop:tweedie_hessian}, our proposed potential $U(\mathbf{x}_T, c)$ serves as a proxy for the sharpness measure. 
Sampling from the diversity-weighted initialization $p_{\text{diverse}} \propto e^{-\tau U}$ explicitly biases the initial distribution toward low-potential (smooth) regions. 
By selecting trajectories that begin in flatter regions of the landscape, we minimize the instantaneous rate of entropy decay, thereby preserving distribution volume and diversity throughout the generative process.

\subsubsection{Connection to Maximum Entropy Manifold Exploration}
We now connect our geometric analysis to the Maximum Entropy Manifold Exploration framework~\cite{de2025provable}. This framework seeks to maximize the entropy of the generated data distribution $\mathcal{H}(p_0)$ by fine-tuning model parameters. We demonstrate that our method achieves a similar objective via our initialization strategy \ourmethod.

\begin{corollary}[\ourmethod maximizes a lower bound on data entropy]
From Theorem~\ref{thm:entropy_decay}, the final data entropy is determined by the initial entropy minus the accumulated entropy decay along the trajectory:
\begin{equation}
    \mathcal{H}(p_0) = \mathcal{H}(p_T) - \int_{0}^{T} \frac{1}{2}\beta_t \mathbb{E}_{p_t} \left[ \operatorname{tr}(H_c(\mathbf{x}_t)) - d \right] dt.
    \label{eq:entropy_evolution}
\end{equation}
Directly maximizing $\mathcal{H}(p_0)$ is intractable, so we construct a tractable surrogate objective.
Our sampling distribution $p_{\text{diverse}}(\mathbf{x}_T) \propto \exp(-\frac{1}{2}\|\mathbf{x}_T\|^2 - \tau U(\mathbf{x}_T|c))$ is the unique solution to the following optimization problem:
\begin{equation}
    \max_{p_T} \left\{ \mathcal{H}(p_T) - \mathbb{E}_{p_T}\left[\frac{1}{2}\|\mathbf{x}_T\|^2 +\tau U(\mathbf{x}_T|c)\right] \right\}.
\end{equation}
By solving this variational problem, \ourmethod effectively maximizes a lower bound on the final data entropy $\mathcal{H}(p_0)$, preserving diversity by structurally avoiding the high entropy decay identified in Eq.~\eqref{eq:entropy_evolution}.

\end{corollary}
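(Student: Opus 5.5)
The corollary has two parts, and I would prove them separately: a variational characterization of $p_{\text{diverse}}$, and a lower-bound interpretation drawn from Theorem~\ref{thm:entropy_decay}.

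\textbf{Step 1: the entropy identity.} Integrate the rate formula of Theorem~\ref{thm:entropy_decay} over $t\in[0,T]$. Since $\frac{d\mathcal{H}_t}{dt}=\frac12\beta_t\,\mathbb{E}_{p_t}[\operatorname{tr}(H_c)-d]$, the fundamental theorem of calculus gives
\[
\mathcal{H}(p_T)-\mathcal{H}(p_0)=\int_0^T \tfrac12\beta_t\,\mathbb{E}_{p_t}[\operatorname{tr}(H_c(\mathbf{x}_t))-d]\,dt .
\]
This is Eq.~\eqref{eq:entropy_evolution}. Two points need care. First, the orientation: $t$ runs backward in generation, but the identity is orientation-free once it is written as the difference between endpoint values. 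Second, the marginals $p_t$ must be those of the flow started from the chosen $p_T$, and not the forward marginals.

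\textbf{Step 2: the Gibbs variational principle.} Define $E(\mathbf{x})=\frac12\|\mathbf{x}\|^2+\tau U(\mathbf{x}\,|\,c)$ and $Z=\int e^{-E}\,d\mathbf{x}$. Then $Z<\infty$, because $U\ge 0$ and so $e^{-E}\le e^{-\|\mathbf{x}\|^2/2}$. For any density $q$ with finite entropy and finite $\mathbb{E}_q[E]$, a direct computation gives
\[
\mathcal{H}(q)-\mathbb{E}_q[E]=\log Z-\mathrm{KL}(q\,\|\,p_{\text{diverse}}).
\]
Since $\mathrm{KL}\ge 0$, with equality if and only if $q=p_{\text{diverse}}$ almost everywhere, the functional is maximized by $p_{\text{diverse}}$. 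The maximizer is unique and the maximal value is $\log Z$. This part is rigorous and routine.

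\textbf{Step 3: the lower bound on $\mathcal{H}(p_0)$.} This is the main obstacle. I would first show that maximizing $\mathcal{H}(p_T)-\mathbb{E}[E]$ controls the right-hand side of Step 1. Writing $D[p_T]=\int_0^T\frac12\beta_t\,\mathbb{E}_{p_t}[\operatorname{tr}H_c-d]\,dt$, Step 1 gives $\mathcal{H}(p_0)=\mathcal{H}(p_T)-D[p_T]$.

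The natural route is a \emph{surrogate inequality} of the form $D[p_T]\le C+\mathbb{E}_{p_T}[E]$ for some constant $C$ independent of $p_T$. Substituting it yields $\mathcal{H}(p_0)\ge \mathcal{H}(p_T)-\mathbb{E}_{p_T}[E]-C$. The right-hand side is exactly the objective of Step 2 up to a constant, so choosing $p_T=p_{\text{diverse}}$ maximizes this lower bound.

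To justify the surrogate inequality, I would bound the decay term at the initial time, or near it, using Proposition~\ref{prop:tweedie_hessian}. Under the local Gaussian assumptions, that proposition relates $\operatorname{tr}((H-H_c)^2H_c^{-1})$ to $\lambda_T^{-2}U^2$. I would then assume that the accumulated decay is dominated by its initial-time value, for instance through a Lipschitz-in-time or monotonicity hypothesis on $\mathbb{E}_{p_t}[\operatorname{tr}H_c]$. With the temperature $\tau$ absorbing the constants, this gives $D\lesssim C+\tau\,\mathbb{E}[U]$.

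Two gaps remain here, and I would state them honestly as assumptions. The first is converting a squared spectral gap into a bound on the trace itself. The second is the linear-versus-quadratic mismatch between $U$ and $U^2$; I would handle it via $U\le \frac12(1+U^2)$, or via $U^2\le MU$ on a bounded potential range. The $\frac12\|\mathbf{x}\|^2$ term is then accounted for by the Gaussian prior and the reference entropy of $p_T$. The result is the claimed "maximizes a lower bound" statement, conditional on these stated approximations.
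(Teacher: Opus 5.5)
Your Steps 1 and 2 cover what the paper itself supplies. The paper integrates the rate of Theorem~\ref{thm:entropy_decay} to get Eq.~\eqref{eq:entropy_evolution}, and it only asserts the Gibbs characterization of $p_{\text{diverse}}$. Your identity $\mathcal{H}(q)-\mathbb{E}_q[E]=\log Z-\mathrm{KL}(q\,\|\,p_{\text{diverse}})$, with $Z<\infty$ from $U\ge 0$, is the standard proof of the uniqueness claim the paper leaves implicit. Your remark that the marginals must be those of the flow started from the chosen $p_T$ is a correct refinement: the divergence identity holds for any initial density, with $H_c$ the model Hessian. The paper gives no derivation at all for the final ``lower bound'' sentence, so there is nothing to match there. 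Your Step 3 does not close it, though, and the trouble goes beyond the two gaps you list.

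\emph{The quadratic-versus-linear fix.} The inequality $U\le\frac12(1+U^2)$ runs the wrong way. Proposition~\ref{prop:tweedie_hessian} can at best bound the decay \emph{above} by $C+\kappa\,\mathbb{E}[U^2]$, so you need $U^2$ dominated by $U$. Only $U^2\le MU$ does that, and then the bound holds only for $\tau\ge\kappa M$. The temperature cannot ``absorb constants'', because $\tau$ is the same parameter that defines $p_{\text{diverse}}$.

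\emph{The missing pointwise bound.} The surrogate inequality must hold for arbitrary $p_T$, so it needs a pointwise bound $\operatorname{tr}H_c(\mathbf{x})\le C+\kappa\,U(\mathbf{x})^2$. Proposition~\ref{prop:tweedie_hessian} is only an average over $p_t(\cdot\,|\,c)$: within one Gaussian, $\|\mathbf{s}^\Delta(\mathbf{x})\|^2$ varies with $\mathbf{x}$ while the spectral gap does not. Worse, in the Gaussian model where the proposition holds, $H_c=\boldsymbol{\Sigma}_c^{-1}$ is constant. Then $\operatorname{div}\mathbf{f}_t$ is constant in $\mathbf{x}$, and $D[p_T]$ is identical for every $p_T$. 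So in exactly the regime where your tool is valid, reweighting the initialization cannot change the decay.

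Your first listed gap (trace versus squared gap) is actually easy, given control of the unconditional $\operatorname{tr}H$. Expanding gives $\operatorname{tr}H_c=\operatorname{tr}((H-H_c)^2H_c^{-1})+2\operatorname{tr}H-\operatorname{tr}(HH_c^{-1}H)\le\operatorname{tr}((H-H_c)^2H_c^{-1})+2\operatorname{tr}H$. This does not repair the pointwise issue above.

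\emph{The ``initial time dominates'' hypothesis.} This is not a mild regularity condition. At $t=T$ the VP marginal is $\approx\mathcal{N}(\mathbf{0},\mathbf{I})$, so $\operatorname{tr}H_c\approx d$ and the integrand nearly vanishes. The decay accrues at small $t$, where the marginals depend on $p_T$ through the whole flow. Assuming that decay is controlled by $U(\mathbf{x}_T)$ essentially assumes the conclusion.

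What is actually provable is Steps 1--2 plus a conditional statement. \emph{If} $D[p_T]\le C+\mathbb{E}_{p_T}[E]$ holds for all admissible $p_T$ with the same $\tau$, then $p_{\text{diverse}}$ maximizes the resulting lower bound. Even then the claim says something only if the $\tau U$ term is really needed in that inequality. If $D[p_T]\le C+\mathbb{E}_{p_T}[\frac12\|\mathbf{x}\|^2]$ already held, the standard Gaussian would maximize a valid lower bound just as well.
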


\begin{figure}[h]
    \centering
\includegraphics[width=0.4\linewidth]{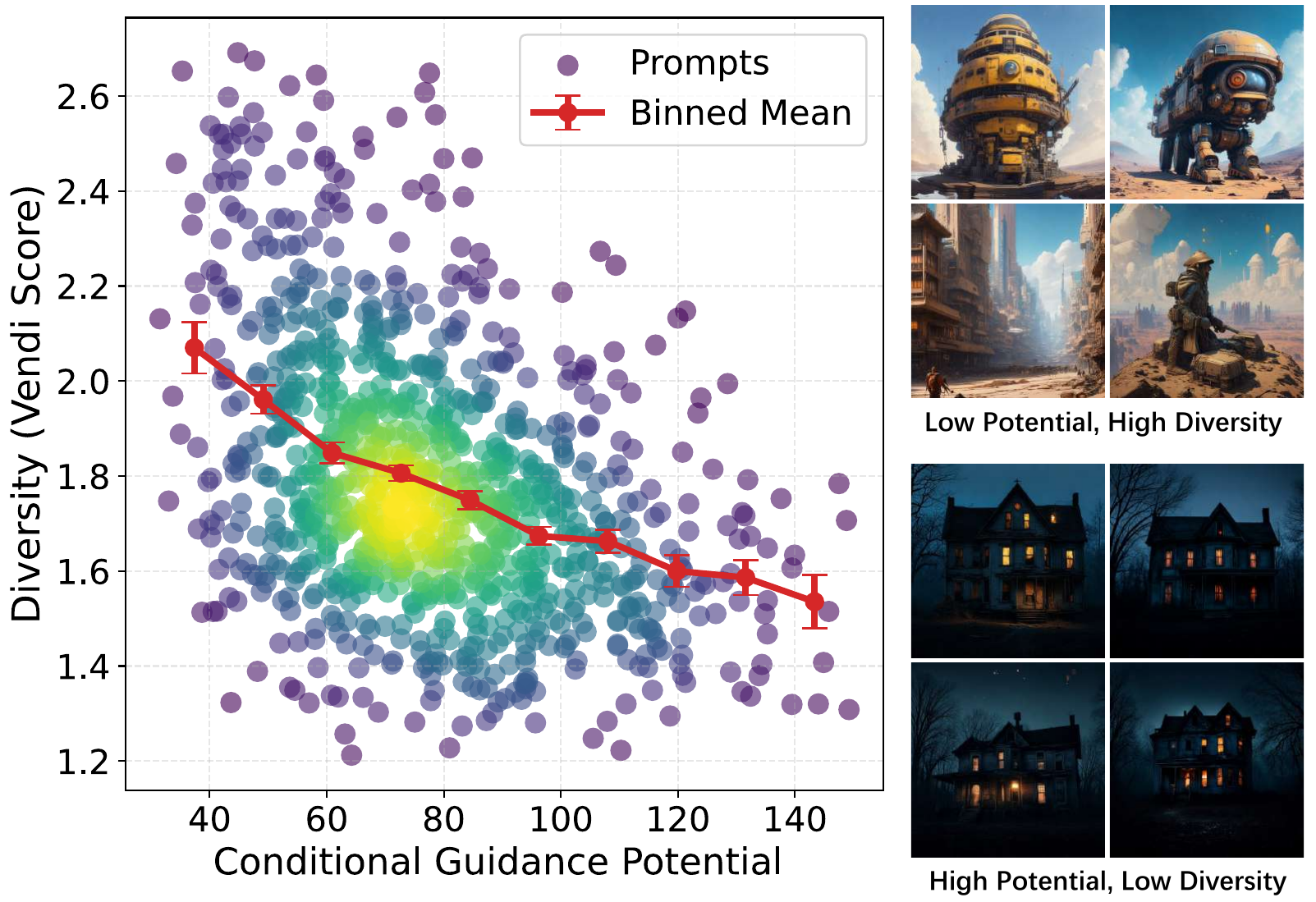}
    \caption{In addition to Figure~\ref{fig:scatter}, we analyze the initialization potential against generative diversity on SDv3.5 using the general prompts. The binned mean (red line) shows a strong negative correlation (Spearman $\rho=-0.40$) that initialization in high-potential regions leads to rigid mode collapse, while low-potential regions preserve diversity.
    }
    \label{fig:scatter_2}
\end{figure}

\section{Details of Toy Experiment}
This section provides additional details on the toy experiment discussed in~\ref{sec:motivation}. 
We construct an unbalanced 2D Mixture of Gaussian landscape with a total of 9 modes.
A dominant mode is centered at the origin with a larger weight and smaller covariance scales, surrounded by 8 symmetric minor modes with lower, varying weights and slightly larger variance. 
The unconditional distribution is a uniform mixture over the same means.
Note that our design choice for this toy experiment was not intended to be a strict global statistical simulation, but rather an idealized local approximation built for controlled geometric visualization. In the neighborhood of a target conditional mode, the unconditional landscape can be generally much flatter than the sharp basin of attraction formed by the conditional distribution $p(x|c)$. By modeling the unconditional distribution $p(x)$ as uniform, we establish a control variable to approximate this relative and localized flatness. This decoupling allows us to purely isolate the impact of the conditioning signal, visually demonstrating how the sharp geometry of $p(x|c)$ alone induces mode collapse.

We use score-based sampling with Classifier-Free Guidance scale of $7.5$ over $T=50$ steps.  
We compute exact analytical scores for both conditional and unconditional distributions to isolate sampling dynamics from estimation error.
We compare baseline initialization on $N=100$ initial latents with our \ourmethod initialization in $K=2$ steps of Langevin dynamics.

\section{Details of the Main Experiments}
\label{sec:detail_exp}
This section describes the experimental settings for the main experiments presented in Section~\ref{sec:exp}.
All experiments were performed on NVIDIA A100 GPUs.
We use Stable Diffusion v1.4 and v3.5 Medium. We use the DDIM sampler~\cite{song2021denoising} with 50 sampling steps and classifier-free guidance scale of 7.5 for SDv1.4, and the default flow-matching Euler scheduler with 30 sampling steps and guidance scale of 7.0 for SDv3.5. 

\paragraph{Details of Baseline Methods}
\begin{itemize}
    \item \textbf{Particle Guidance~\cite{corso2024particle}}:
    This work addressed the limitation that standard sampling generates independent and identically distributed (i.i.d.) samples, which often collapse to single modes. They proposed sampling a set of particles jointly by introducing a repulsive potential to the score function to enforce diversity among the batch of samples. The modified update rule for the $i$-th particle $\mathbf{x}^{(i)}_t$ includes a gradient of a kernel $k$ (e.g., RBF) measuring similarity to other particles:
    $$
    \hat{\epsilon}(\mathbf{x}^{(i)}_t) \leftarrow \hat{\epsilon}(\mathbf{x}^{(i)}_t) - w \sum_{j \neq i} \nabla_{\mathbf{x}^{(i)}_t} k(\mathbf{x}^{(i)}_t, \mathbf{x}^{(j)}_t)
    $$
    where $w$ controls the strength of the repulsive force, effectively pushing particles apart in the data manifold.
    
    \item \textbf{CADS~\cite{sadat2024cads}}:
    This paper identified that the static conditioning signal significantly constrains diversity, particularly at high guidance scales. To mitigate this, they proposed Condition-Annealed Diffusion Sampling (CADS), which dynamically adds Gaussian noise to the conditioning embedding $c$ during the reverse process. The perturbed condition $c'_t$ is defined as:
    $$
    c'_t = c + \lambda_t \xi, \quad \xi \sim \mathcal{N}(0, \mathbf{I})
    $$
    where $\lambda_t$ is a time-dependent annealing schedule that starts high to encourage diversity and decays to zero to ensure semantic fidelity by the end of generation.
    
    \item \textbf{Interval Guidance~\cite{kynkaanniemi2024applying}}:
    This paper observed that Classifier-Free Guidance (CFG) is primarily beneficial only during the intermediate noise levels of the diffusion process. They proposed applying guidance only within a specific interval $[t_{\text{start}}, t_{\text{end}}]$. Outside this interval, standard conditional sampling is used to avoid burn-in artifacts (at high noise) and reduce computational cost (at low noise). The guided noise prediction becomes:
    $$
    \tilde{\epsilon}_\theta(x_t, c) = \epsilon_\theta(x_t, \emptyset) + s \cdot \mathbb{1}_{t \in [t_{\text{start}}, t_{\text{end}}]} \left( \epsilon_\theta(x_t, c) - \epsilon_\theta(x_t, \emptyset) \right)
    $$

    \item \textbf{SAIL~\cite{jeon2025understanding}}:
    This work proposed a geometric framework that links memorization to the sharpness of the probability landscape, quantified by the eigenvalues of the Hessian of the log-density $\nabla^2_{x_t} \log p_t(x_t)$. They presented that memorized samples correspond to sharp peaks in the distribution.
    To detect memorization at the earliest stage of generation (e.g., $t=T-1$), they introduced a curvature-aware metric that amplifies high-curvature directions by weighting the score function with the Hessian. The proposed detection metric is defined as:
    $$
    \| H_\theta^\Delta(x_t, c) s_\theta^\Delta(x_t, c) \|^2,
    $$
    where $s_\theta^\Delta(x_t, c) = s_\theta(x_t, c) - s_\theta(x_t)$ is the difference between conditional and unconditional scores, and $H_\theta^\Delta(x_t, c)$ is the corresponding Hessian difference. This metric effectively captures the extent to which conditioning induces sharpness in the learned distribution.
    To mitigate memorization, they proposed \textit{Sharpness-Aware Initialization for Latent Diffusion} (SAIL), an inference-time strategy that optimizes the initial noise vector $x_T$ to steer the sampling trajectory toward smoother probability regions:
    $$
    \mathcal{L}_{\text{SAIL}}(x_T) := \| s_\theta^\Delta(x_T + \delta s_\theta^\Delta(x_T)) - s_\theta^\Delta(x_T) \|^2 + \alpha \|x_T\|^2.
    $$
    Here, the first term approximates the Hessian-score product via Taylor expansion to avoid computationally expensive backpropagation through the Hessian, and $\alpha$ balances the regularization. By updating $x_T$ to minimize this loss, SAIL reduces memorization without altering model parameters or the user prompt.
    
    Our \ourmethod is fundamentally distinct from SAIL when generalizing this geometrical motivation to enhancing general diversity. Instead of deterministic optimization, we leverage Langevin dynamics to perform \emph{distributional posterior sampling}, which is essential for maintaining entropy and enabling diversity discovery.
\end{itemize}

\paragraph{Details for Our Method}
We set Langevin step $K=1$ for all results except those in Figure~\ref{fig:fail}\&\ref{fig:optimization_comparison}, where we evaluate the sensitivity of steps over a range of [1,10].
We set the step size of 0.05 for Stable Diffusion v1.4 and 0.01 for v3.5. 
We investigate temperature $\tau\in\{0.2,0.3,0.4,0.6,0.7,0.8\}$ for v1.4 and $\{0.09,0.1,0.12,0.14,0.16,0.18\}$ for v3.5.

\begin{table}[htbp]
\centering
\caption{Results of all methods on ImageNet prompts using the SDv1.4 model in the tradeoff experiments in Figure~\ref{fig:pareto}.}
\label{tab:imagenet_sd1_all}
\resizebox{0.9\textwidth}{!}{%
\begin{tabular}{llcccccc}
\toprule
Method & Recall $\uparrow$ & Vendi $\uparrow$ & Coverage $\uparrow$ & Precision $\uparrow$ & Density $\uparrow$ & FID $\downarrow$ & FD$_{\text{DINOv2}}$ $\downarrow$ \\ 
\midrule
Base Model  & 0.503 & 4.268 & 0.595 & 0.831 & 0.704 & 16.730 & 259.143 \\
\hline
SAIL,threshold=7.6 & 0.583 & 4.687 & 0.596 & 0.813 & 0.690 & 16.090 & 256.236 \\
SAIL,threshold=7.7 & 0.564 & 4.553 & 0.594 & 0.822 & 0.688 & 16.223 & 255.626 \\
SAIL,threshold=7.8 & 0.543 & 4.549 & 0.592 & 0.825 & 0.677 & 16.368 & 256.565 \\
SAIL,threshold=7.9 & 0.531 & 4.396 & 0.595 & 0.830 & 0.690 & 16.434 & 256.695 \\
\hline
\ourmethod (Ours), temperature=0.2 & 0.528 & 4.416 & 0.598 & 0.837 & 0.699 & 16.375 & 256.036 \\
\ourmethod (Ours), temperature=0.3 & 0.541 & 4.496 & 0.600 & 0.832 & 0.690 & 16.218 & 255.123 \\
\ourmethod (Ours), temperature=0.4 & 0.561 & 4.563 & 0.598 & 0.828 & 0.674 & 16.176 & 254.599 \\
\ourmethod (Ours), temperature=0.6 & 0.573 & 4.665 & 0.598 & 0.825 & 0.670 & 16.026 & 255.008 \\
\ourmethod (Ours), temperature=0.7 & 0.582 & 4.711 & 0.595 & 0.819 & 0.665 & 16.029 & 255.471 \\
\ourmethod (Ours), temperature=0.8 & 0.591 & 4.761 & 0.589 & 0.811 & 0.646 & 15.951 & 256.385 \\ 
\hline
Particle Guidance,strength=32 & 0.502 & 4.395 & 0.598 & 0.835 & 0.704 & 16.636 & 256.670 \\
Particle Guidance,strength=64 & 0.491 & 4.621 & 0.589 & 0.826 & 0.680 & 23.185 & 286.467 \\ 
\hline
PG+\ourmethod (Ours), temperature=0.1 & 0.522 & 4.407 & 0.606 & 0.833 & 0.709 & 16.542 & 253.186 \\
PG+\ourmethod (Ours), temperature=0.1 & 0.516 & 4.491 & 0.604 & 0.834 & 0.701 & 17.749 & 259.101 \\ 

\hline
CADS, scale =0.001, $\tau_1$=0.7 & 0.585 & 4.826 & 0.588 & 0.814 & 0.651 & 16.471 & 260.757 \\
CADS, scale =0.001, $\tau_1$=0.8 & 0.553 & 4.584 & 0.591 & 0.827 & 0.669 & 16.205 & 258.149 \\
CADS, scale =0.001, $\tau_1$=0.9 & 0.525 & 4.397 & 0.598 & 0.831 & 0.692 & 16.203 & 257.085 \\ 
\hline
CADS+\ourmethod (Ours), temperature=0.2, $\tau_1$=0.7 & 0.616 & 5.114 & 0.584 & 0.801 & 0.622 & 16.807 & 260.552 \\
CADS+\ourmethod (Ours), temperature=0.2, $\tau_1$=0.8 & 0.600 & 4.839 & 0.594 & 0.818 & 0.657 & 16.090 & 254.453 \\
CADS+\ourmethod (Ours), temperature=0.2, $\tau_1$=0.9 & 0.553 & 4.646 & 0.602 & 0.832 & 0.687 & 16.000 & 252.767 \\ 
\hline
Interval Guidance,[0.1,0.6] & 0.603 & 4.855 & 0.582 & 0.810 & 0.646 & 15.680 & 254.467 \\
Interval Guidance,[0.1,0.7] & 0.593 & 4.743 & 0.586 & 0.819 & 0.658 & 15.633 & 254.694 \\
Interval Guidance,[0.1,0.9] & 0.564 & 4.585 & 0.596 & 0.826 & 0.679 & 15.530 & 253.292 \\ 
\hline
IG+\ourmethod (Ours), temperature=0.2,[0.1,0.6] & 0.626 & 5.052 & 0.581 & 0.812 & 0.635 & 15.838 & 255.009 \\
IG+\ourmethod (Ours), temperature=0.2,[0.1,0.7] & 0.610 & 4.915 & 0.592 & 0.817 & 0.655 & 15.749 & 253.060 \\
IG+\ourmethod (Ours), temperature=0.2,[0.1,0.9] & 0.576 & 4.766 & 0.599 & 0.826 & 0.676 & 15.678 & 251.612 \\

\bottomrule
\end{tabular}%
}

\vspace{20pt}

\caption{Results of all methods on general prompts using the SDv3.5 model in the tradeoff experiments in Figure~\ref{fig:pareto}.}
\label{tab:general_sd3_all}
\resizebox{0.9\textwidth}{!}{
\begin{tabular}{l c c c c c c c}
\toprule
Method & Similarity  $\downarrow$ & Vendi Score  $\uparrow$ & CLIP Score  $\uparrow$ & Aesthetic Score  $\uparrow$ & ImageReward  $\uparrow$ & Time & NFE \\
\midrule
Base Model  & 0.792 & 1.805 & 34.048 & 5.727 & 0.542 & 1.455 & 30.00 \\
\hline
SAIL, threshold=325.0 & 0.766 & 1.892 & 33.886 & 5.718 & 0.504 & 2.194 & 36.53 \\
SAIL, threshold=330.0 & 0.772 & 1.874 & 33.952 & 5.727 & 0.519 & 2.046 & 35.47 \\
SAIL, threshold=335.0 & 0.776 & 1.864 & 33.985 & 5.732 & 0.528 & 1.932 & 34.65 \\
SAIL, threshold=340.0 & 0.779 & 1.853 & 33.982 & 5.735 & 0.534 & 1.867 & 34.22 \\
\hline
\ourmethod (Ours), temperature=0.09 & 0.777 & 1.857 & 34.072 & 5.749 & 0.552 & 1.672 & 31.00 \\
\ourmethod (Ours), temperature=0.1 & 0.775 & 1.866 & 34.001 & 5.743 & 0.525 & 1.620 & 31.00 \\
\ourmethod (Ours), temperature=0.12 & 0.772 & 1.874 & 33.992 & 5.743 & 0.512 & 1.621 & 31.00 \\
\ourmethod (Ours), temperature=0.14 & 0.768 & 1.890 & 33.957 & 5.736 & 0.505 & 1.619 & 31.00 \\
\ourmethod (Ours), temperature=0.16 & 0.762 & 1.909 & 33.912 & 5.725 & 0.492 & 1.622 & 31.00 \\
\ourmethod (Ours), temperature=0.18 & 0.755 & 1.931 & 33.828 & 5.714 & 0.471 & 1.620 & 31.00 \\
\hline
CADS, $\tau_1$=0.6 & 0.716 & 2.074 & 33.381 & 5.730 & 0.382 & 1.458 & 30.00 \\
CADS, $\tau_1$=0.7 & 0.743 & 1.981 & 33.554 & 5.737 & 0.445 & 1.459 & 30.00 \\
CADS, $\tau_1$=0.8 & 0.762 & 1.916 & 33.797 & 5.745 & 0.497 & 1.459 & 30.00 \\
CADS, $\tau_1$=0.9 & 0.770 & 1.886 & 33.976 & 5.751 & 0.533 & 1.461 & 30.00 \\
\hline
CADS+\ourmethod (Ours), temperature=0.06, $\tau_1$=0.6 & 0.711 & 2.096 & 33.361 & 5.730 & 0.350 & 1.629 & 31.00 \\
CADS+\ourmethod (Ours), temperature=0.06, $\tau_1$=0.7 & 0.736 & 2.007 & 33.566 & 5.745 & 0.424 & 1.630 & 31.00 \\
CADS+\ourmethod (Ours), temperature=0.06, $\tau_1$=0.8 & 0.750 & 1.954 & 33.786 & 5.751 & 0.476 & 1.631 & 31.00 \\
CADS+\ourmethod (Ours), temperature=0.06, $\tau_1$=0.9 & 0.759 & 1.923 & 33.956 & 5.757 & 0.517 & 1.627 & 31.00 \\
\hline
Interval Guidance, [0.1,0.6] & 0.733 & 2.020 & 33.739 & 5.764 & 0.475 & 1.455 & 30.00 \\
Interval Guidance, [0.1,0.7] & 0.740 & 1.995 & 33.832 & 5.776 & 0.496 & 1.454 & 30.00 \\
Interval Guidance, [0.1,0.8] & 0.745 & 1.976 & 33.889 & 5.782 & 0.506 & 1.454 & 30.00 \\
Interval Guidance, [0.1,0.9] & 0.748 & 1.963 & 33.919 & 5.781 & 0.524 & 1.453 & 30.00 \\
\hline
IG+\ourmethod (Ours), temperature=0.04, [0.1,0.6] & 0.726 & 2.045 & 33.754 & 5.771 & 0.469 & 1.622 & 31.00 \\
IG+\ourmethod (Ours), temperature=0.04, [0.1,0.7] & 0.733 & 2.018 & 33.828 & 5.783 & 0.488 & 1.620 & 31.00 \\
IG+\ourmethod (Ours), temperature=0.04, [0.1,0.8] & 0.738 & 1.999 & 33.877 & 5.790 & 0.493 & 1.622 & 31.00 \\
IG+\ourmethod (Ours), temperature=0.04, [0.1,0.9] & 0.743 & 1.981 & 33.944 & 5.794 & 0.506 & 1.622 & 31.00 \\

\bottomrule
\end{tabular}
}

\end{table}

\section{Additional Experimental Results}
\label{sec:add_exp}
\subsection{Further Diversity-Quality Trade-offs}
In Figure~\ref{fig:pareto}, we plot the Pareto frontiers for both ImageNet (class-to-image) and general prompt (text-to-image) datasets. 
Detailed numerical results corresponding to these trade-off curves are provided in Table~\ref{tab:imagenet_sd1_all} and Table~\ref{tab:general_sd3_all}. 
These tables list the specific hyperparameter settings (e.g., temperature $\tau$ for \ourmethod, guidance strength for baselines) used to plot the curves, demonstrating that \ourmethod provides consistent diversity improvement across a wide range of settings without catastrophic quality degradation.

\subsection{Quantitative Results of General Prompts on SDv1.4}

In addition to Table~\ref{tab:general_sd3}, we extend our evaluation to Stable Diffusion v1.4 on the general prompts dataset to demonstrate the architectural universality of \ourmethod. 
As detailed in Table~\ref{tab:general_sd1}, \ourmethod invariably increases the Vendi Score across the base model and all trajectory-based baselines (PG, CADS, and IG). 
For instance, applying \ourmethod to the base model improves the Vendi score from 2.505 to 2.593. 
Notably, this improvement in diversity does not come at the expense of semantic alignment, as CLIP scores remain comparable to the baselines. 
This demonstrates that the benefits of our \ourmethod on the general prompts are not specific to flow matching models (as seen in the main text with SD3.5) but transfer effectively to standard diffusion formulations.
\begin{table}[h]
\centering
\caption{
Results on general prompts using SD v1.4. We compare \ourmethod against initialization-based (SAIL) and trajectory-based (PG, CADS, IG) baselines. \ourmethod consistently improves the Vendi Score when applied on top of the base model or combined with other methods, validating its orthogonality to trajectory interventions.}
\label{tab:general_sd1}
\resizebox{0.6\linewidth}{!}{%
\begin{tabular}{lcccc}
\toprule
Method & Similarity $\downarrow$ & Vendi Score $\uparrow$ & Clip Score $\uparrow$ & Aesthetic Score $\uparrow$ \\ 
\midrule
Base Model & 0.600 & 2.505 & \textbf{32.312} & \textbf{5.539} \\
\hspace{2mm}+ SAIL& 0.576 & 2.583 & 31.853 & 5.530 \\
\rowcolor{green!5}
\hspace{2mm}+ \ourmethod & 0.576 & \textbf{2.593} & 31.934 & 5.480 \\
\cmidrule(l){1-5}
PG & 0.585 & 2.556 & \textbf{32.176} & \textbf{5.487} \\
\rowcolor{green!5}
\hspace{2mm}+ \ourmethod & \textbf{0.574} & \textbf{2.598} & 32.003 & 5.428 \\
\cmidrule(l){1-5}
CADS & 0.592 & 2.533 & 32.118 & \textbf{5.552} \\
\rowcolor{green!5}
\hspace{2mm}+ \ourmethod & \textbf{0.587} & \textbf{2.549} & \textbf{32.119} & 5.502 \\
\cmidrule(l){1-5}
IG & 0.566 & 2.625 & \textbf{31.911} & \textbf{5.533} \\
\rowcolor{green!5}
\hspace{2mm}+ \ourmethod & \textbf{0.560} & \textbf{2.647} & 31.708 & 5.470 \\ 
\bottomrule
\end{tabular}
}
\end{table}

\subsection{Quantitative Results with Increased Langevin Steps}
The table in Figure~\ref{fig:fail} shows that for \ourmethod, both recall and FID increase as the number of Langevin steps increases.  As shown in Table~\ref{tab:langevin_step}, we extended the evaluation of \ourmethod on SD v1.4 up to 30 steps. We find that rather than diverging, both FID and Recall quickly reach a stable equilibrium after step 10. The observed trend reflects that the Langevin dynamics process converges to the stationary distribution of the target posterior from step 10 to 30. Thus, the metrics stabilize (FID around 17.0, Recall around 0.61) without further degradation.

\begin{table}[htbp]
    \centering
    \caption{Results on \ourmethod with increased Langevin steps. The image quality (as indicated by FID) does not continuously degrade as the steps increase.}
    \label{tab:langevin_step}
    \resizebox{0.27\linewidth}{!}{%
    \begin{tabular}{cccc}
    \toprule
       Step ($k$)  & 10 & 20 & 30  \\
    \midrule
        Recall $\uparrow$ & 0.609 & 0.617 & 0.614 \\
        FID $\downarrow$ & 17.02 & 17.04 &  16.99 \\
    \bottomrule
    \end{tabular}
    }
\end{table}

\subsection{Applying Projection Step in Optimization}
\label{sec:proj_sail}
As we noted in Section~\ref{sec:exp_analyze}, SAIL suffers from a substantial degradation in Gaussianity, which drives the latent variables off the natural manifold. To explicitly enforce the theoretical norm constraint, we consider applying a spherical projection step after each gradient update in the optimization process: $$x_T^{(k)} \leftarrow \sqrt{d} \frac{x_T^{(k)}}{\| x_T^{(k)} \|_2},$$ where $d$ is the dimensionality of the latent space. Geometrically, this operation directly pulls the deviating latent variables back onto the correct $\sqrt{d}$-radius hypersphere, ensuring they remain anchored to the isotropic Gaussian prior.

The empirical results in Table~\ref{tab:proj_sail} demonstrate that while this projection step successfully corrects SAIL's shrinking norm and slightly improves image precision, it fundamentally sacrifices generative diversity, causing the Vendi score to drop significantly (by $0.093$). Because of SAIL's deterministic, mode-seeking nature, explicitly enforcing the norm constraint cannot prevent the underlying collapse of the distribution volume. In contrast, applying the exact same projection step to \ourmethod results in a negligible Vendi score decrease ($0.009$). This difference highlights that \ourmethod's stochastic Langevin dynamics intrinsically respect the prior and preserve distribution entropy, enabling robust and diverse mode exploration even under strict manifold constraints.

\begin{table}[h]
    \centering
    \caption{Effect of the projection step on generative performance. Adding a projection constraint (+ proj) causes a significant drop in diversity for SAIL, whereas \ourmethod intrinsically preserves distribution entropy and maintains a high Vendi Score.}
    \label{tab:proj_sail}
    \resizebox{0.35\linewidth}{!}{%
    \begin{tabular}{ccc}
    \toprule
         Method & Vendi Score $\uparrow$ & Precision $\uparrow$\\
    \midrule
         SAIL & 4.549 & 0.825 \\
         SAIL + proj & 4.456 & 0.828 \\
         \cmidrule(l){1-3}
         \ourmethod & 4.699 & 0.825 \\
         \ourmethod + proj & 4.679 & 0.835 \\
    \bottomrule
    \end{tabular}
    }
    
\end{table}

\subsection{Ablation Study: Tweedie-based vs. Score-based Potential}
\label{sec:ablation_tweedie}
To validate the practical advantage of our Tweedie-based formulation over a raw score-based formulation $U = \|\Delta s_\theta(x)\|$, we compare their hyperparameter robustness under varying inference steps. While theoretically similar for a fixed timestep up to a scaling factor, the score-based formulation requires wildly retuned temperatures $\tau'=\lambda\cdot\tau$ simply by changing the inference steps (e.g., optimal $\tau'=130.7$ for 30 steps vs. $\tau'=171.5$ for 50 steps) to achieve comparable performance. Conversely, our Tweedie formulation intrinsically normalizes this scaling factor, maintaining a completely stable optimal temperature ($\tau = 0.6$) regardless of the chosen inference schedule.

\begin{table}[htbp]
    \centering
    \caption{Comparison of Tweedie-based and score-based formulations across different inference steps.}
    \label{tab:ablation_tweedie_vs_score}
    \resizebox{0.6\linewidth}{!}{%
    \begin{tabular}{lcccccc}
    \toprule
     & \multicolumn{3}{c}{Tweedie-based} & \multicolumn{3}{c}{Score-based} \\
    \cmidrule(lr){2-4} \cmidrule(lr){5-7}
    Inference steps & $\tau$ & Recall $\uparrow$ & FID $\downarrow$ & $\tau'=\lambda\cdot\tau$ & Recall $\uparrow$ & FID $\downarrow$ \\
    \midrule
    30 & 0.6 & 0.585 & 16.28 & 130.7 & 0.585 & 16.29 \\
    50 & 0.6 & 0.569 & 16.16 & 171.5 & 0.565 & 16.22 \\
    \bottomrule
    \end{tabular}
    }
\end{table}

\subsection{Ablation Study: Performance on Different Prompt Length}
In Figure~\ref{fig:prompt_len}, we demonstrate the diversity enhancement (indicated by Vendi score) and prompt alignment (indicated by CLIP score) over varying prompt length on the general prompt dataset using \ourmethod with updating step $K=1$. 
Intuitively, it is easier to discover diverse images fitting short prompts than highly specified long prompts.
Notably, we find that \ourmethod enhances the diversity of prompts across all length categories (from 21 to 500+ characters) while preserving the same CLIP score as the base model. This suggests our method remains effective even under tight semantic constraints.

\begin{figure}[htbp]
    \centering
    
    \begin{subfigure}[b]{0.4\columnwidth}
        \centering
        \includegraphics[width=\linewidth]{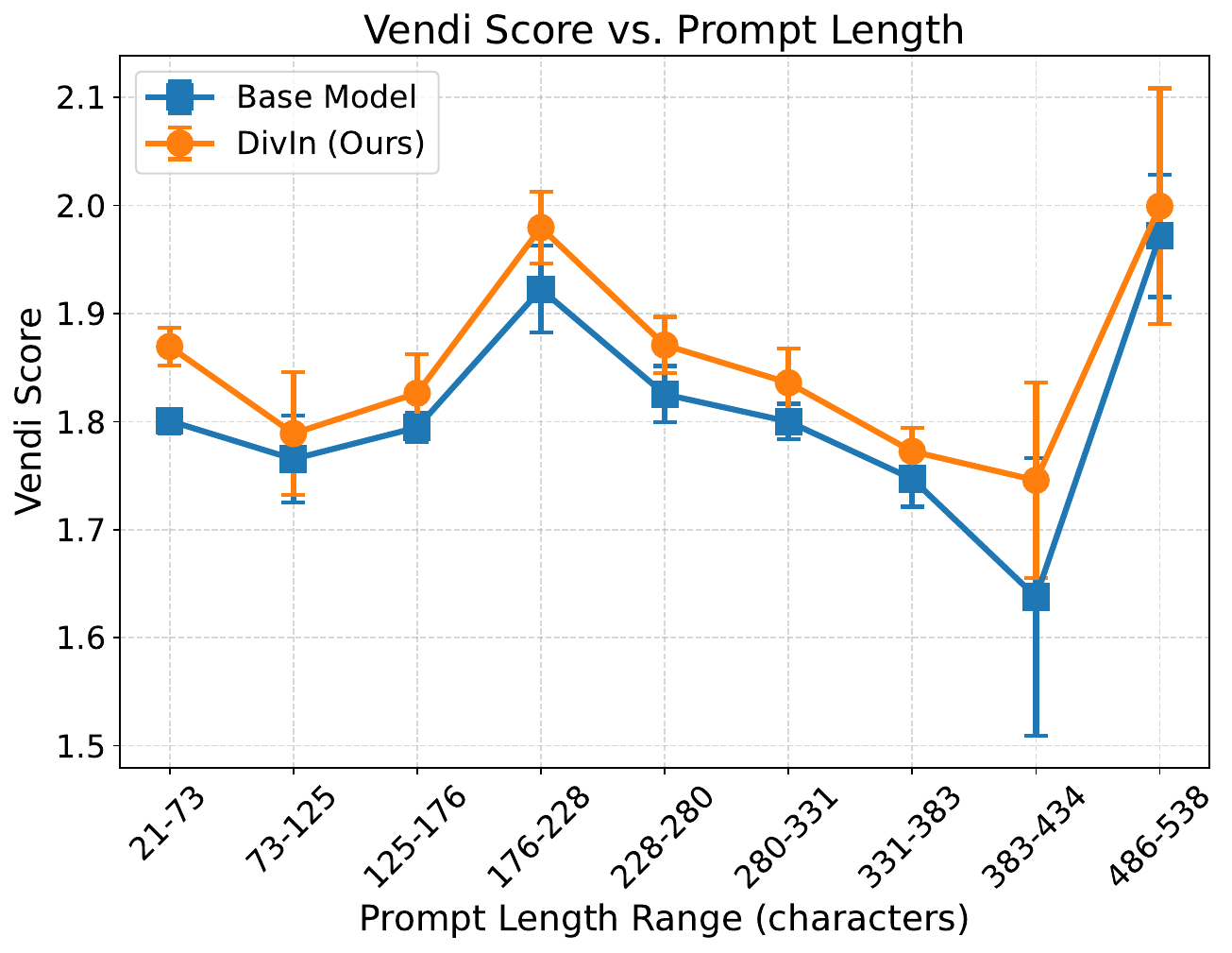}
        \caption{Vendi Score v.s. Prompt Length.}

    \end{subfigure}
    \hspace{30pt} 
    \begin{subfigure}[b]{0.4\columnwidth}
        \centering
        \includegraphics[width=\linewidth]{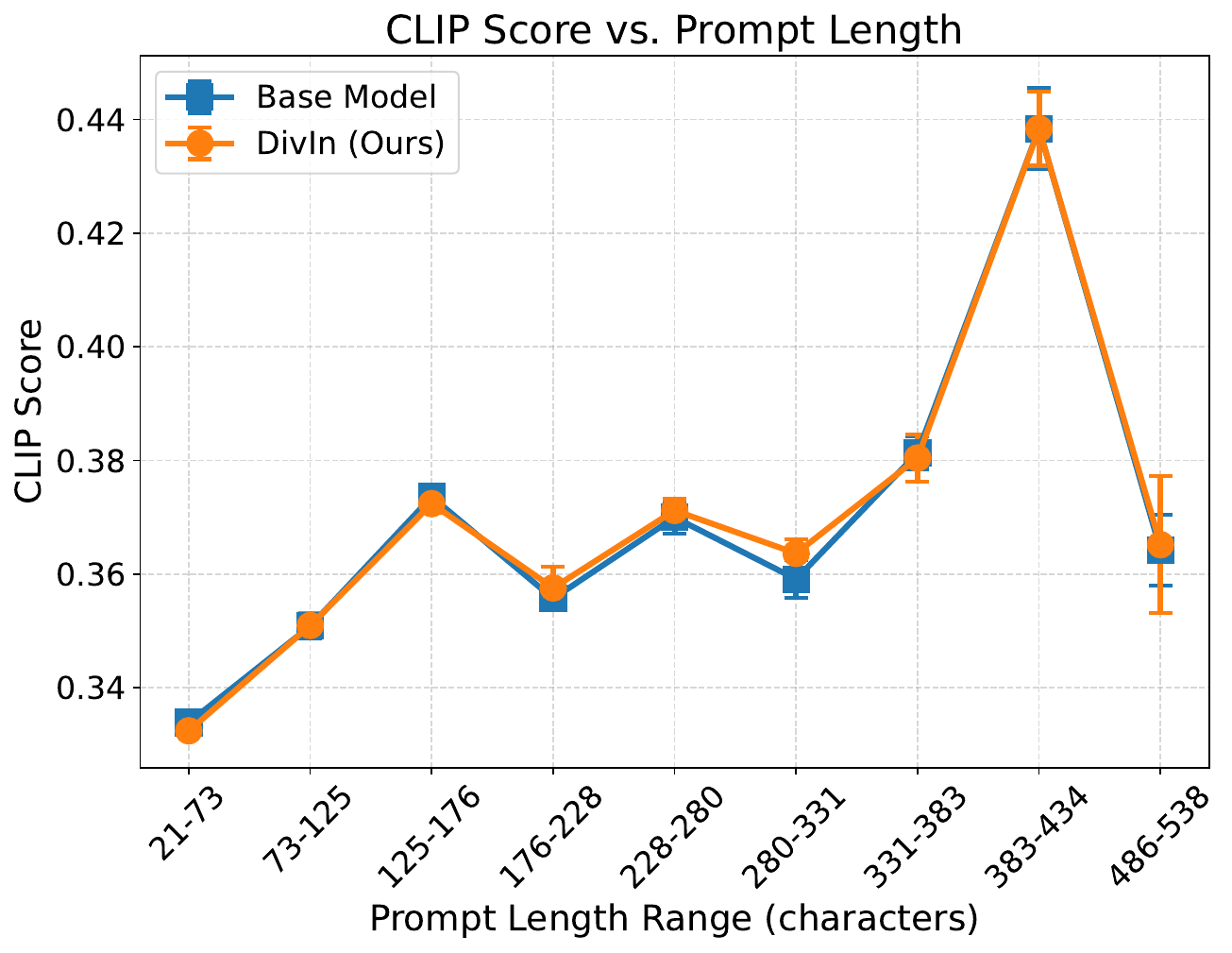}

        \caption{CLIP Score v.s. Prompt Length.}

    \end{subfigure}
    
    \caption{Diversity and prompt alignment for prompts of varying length. We split the general prompts into 10 categories based on the number of characters. We find that \ourmethod achieves a consistently higher diversity than the base model for both short and long
    prompts, while keeping the same CLIP score as the base model. 
    An example short prompt is ``\textit{A whimsical candy shop}''. 
    An example long prompt is ``\textit{cinema 4d colorful render, organic, ultra detailed, of a painted realistic glass helmet, scratched. biomechanical cyborg, analog, macro lens, beautiful natural soft rim light, big leaves, winged insects and stems, roots, fine foliage lace, turquoise gold details, Alexander Mcqueen high fashion haute couture, art nouveau fashion embroidered,  intricate details, mesh wire, mandelbrot fractal, anatomical, facial muscles, cable wires, elegant, hyper realistic, in front of dark flower pattern wallpaper, ultra detailed, 8k post-production}''.
    Error bars denote the standard error over 5 seeds.}
    \label{fig:prompt_len}
\end{figure}

\newpage
\subsection{Ablation Study: Varying Guidance Weight}
The classifier-free guidance scale is a fundamental trade-off parameter between sample fidelity and diversity. 
In Figure~\ref{fig:imagenet_cfg}, we analyze how \ourmethod interacts with this trade-off on SD v1.4 by sweeping the classifier-free guidance weight $w \in [3.5, 7.5]$.
We observe that while increasing guidance weight typically collapses diversity (evidenced by dropping Recall and Vendi Scores), \ourmethod acts as a counterbalance, consistently shifting the diversity curves upwards across the entire range. 
This indicates that \ourmethod recovers diverse modes in the initialization phase that are otherwise suppressed by strong classifier-free guidance during sampling.

\begin{figure*}[htbp] 
\centering 
\centerline{
\includegraphics[width=\textwidth]{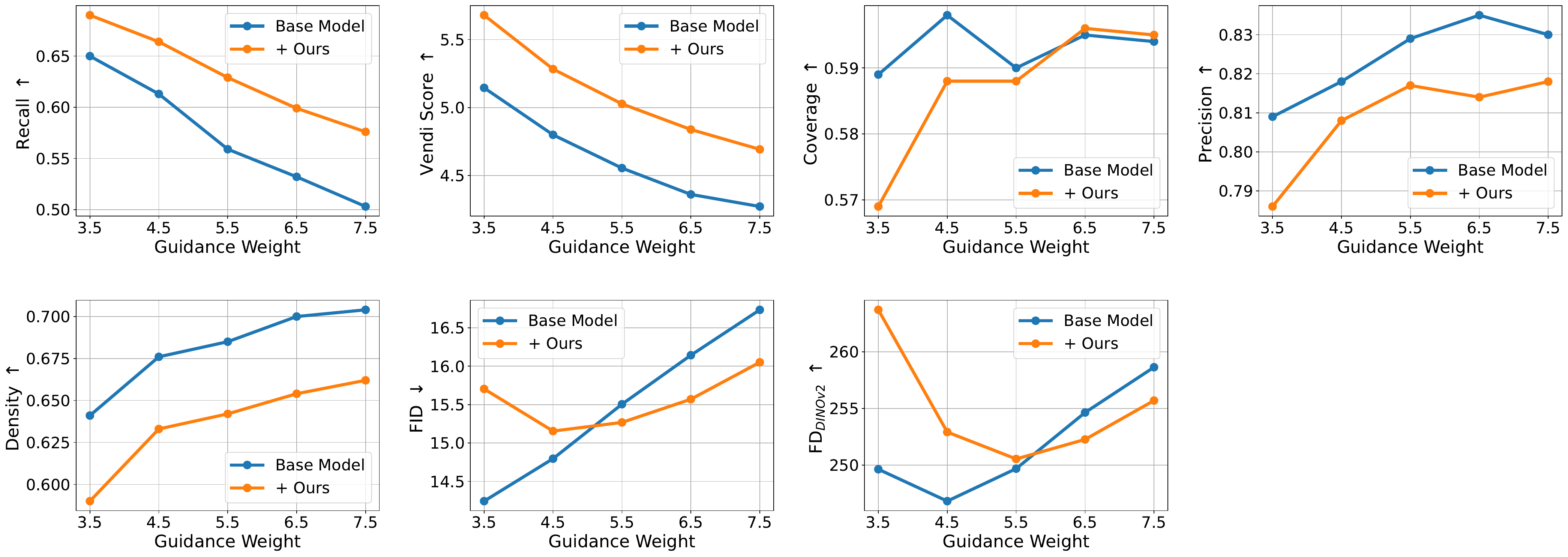}} 
\caption{Impact of Classifier-Free Guidance (CFG) weight. We evaluate diversity (Recall, Vendi, Coverage) and quality (Precision, Density, FID, FD$_{\rm DINOv2}$) metrics on ImageNet using SD v1.4 while varying the guidance scale. \ourmethod (orange line) consistently outperforms the Base Model (blue line) in diversity metrics across all guidance weights, effectively increasing diversity across varying guidance scales.}
\label{fig:imagenet_cfg} 
\end{figure*}

\newpage
\section{Additional Examples of Generated Diverse Images}
\label{sec:add_example}
In addition to the examples in Figure~\ref{fig:example}, we provide comprehensive qualitative comparisons. Figure~\ref{fig:extend_0} to Figure~\ref{fig:extend_3} showcase results on SD v3.5 Medium, while Figure~\ref{fig:extend_4} to Figure~\ref{fig:extend_7} utilize SD v1.4.
These visual results demonstrate that \ourmethod consistently unlocks diverse variations in viewpoint, lighting, composition, and style compared to the highly similar outputs of the base model.

\begin{figure}[htbp]
    \centering
    
    \begin{subfigure}[b]{0.4\columnwidth}
        \centering
        \includegraphics[width=\linewidth]{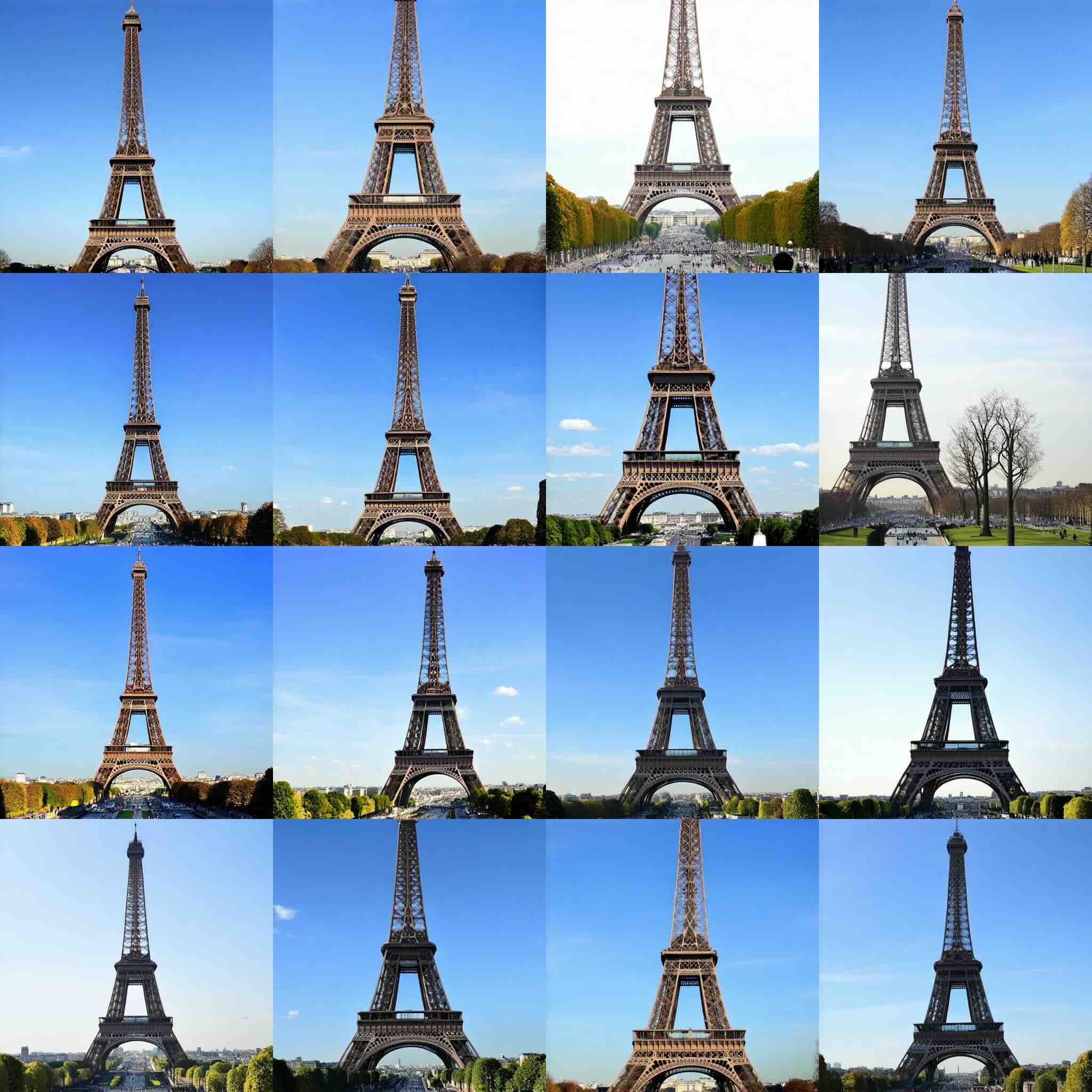}
        \caption{Base model without \ourmethod}

    \end{subfigure}
    \hspace{30pt} 
    \begin{subfigure}[b]{0.4\columnwidth}
        \centering
        \includegraphics[width=\linewidth]{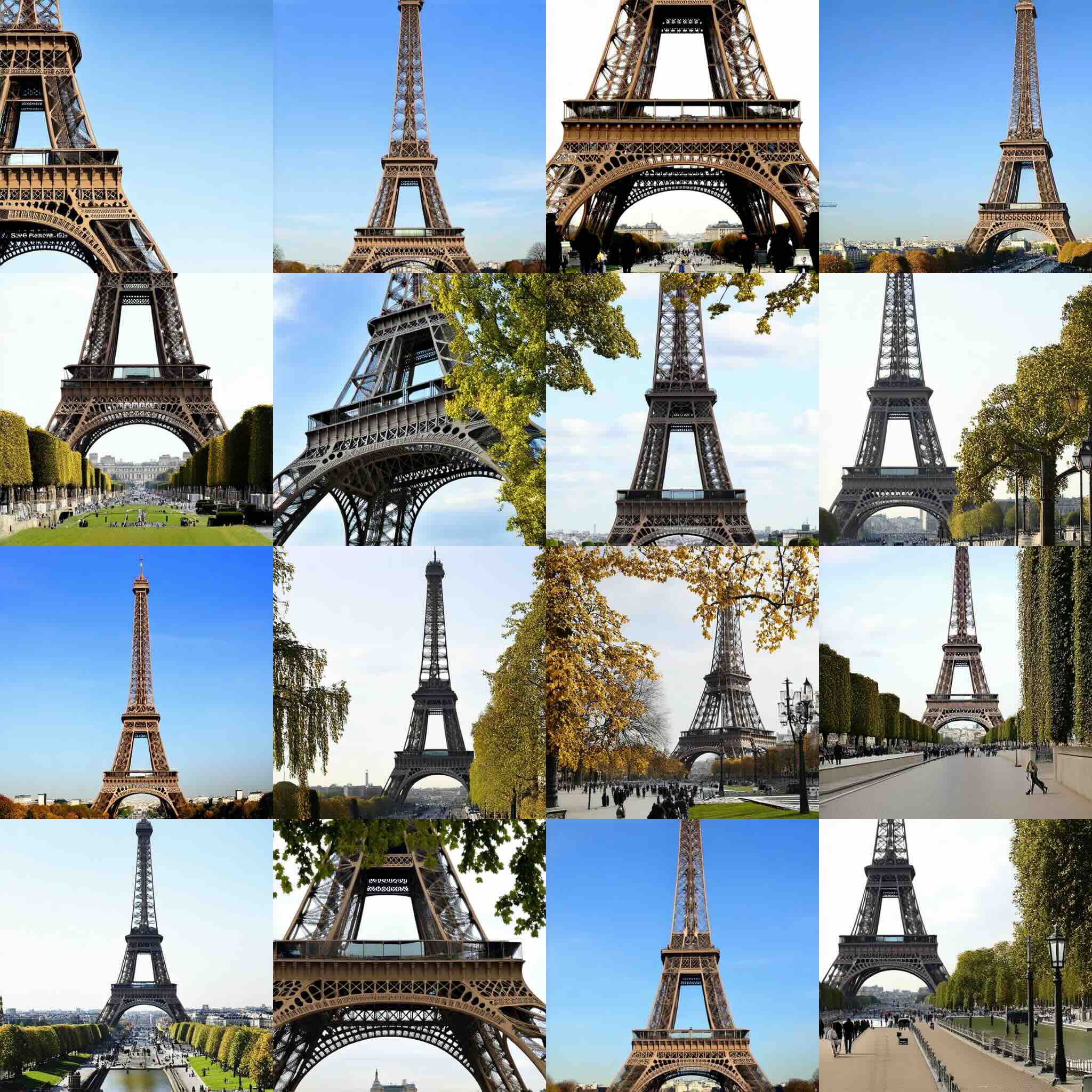}

        \caption{Base model + \ourmethod}

    \end{subfigure}
    
    \caption{Images generated with Stable Diffusion v3.5 Medium with and without \ourmethod for general prompt ``\textit{The Eiffel Tower}''. Each row is a batch of four images with the same seed.}
    \label{fig:extend_0}
\end{figure}

\begin{figure}[htbp]
    \centering
    
    \begin{subfigure}[b]{0.4\columnwidth}
        \centering
        \includegraphics[width=\linewidth]{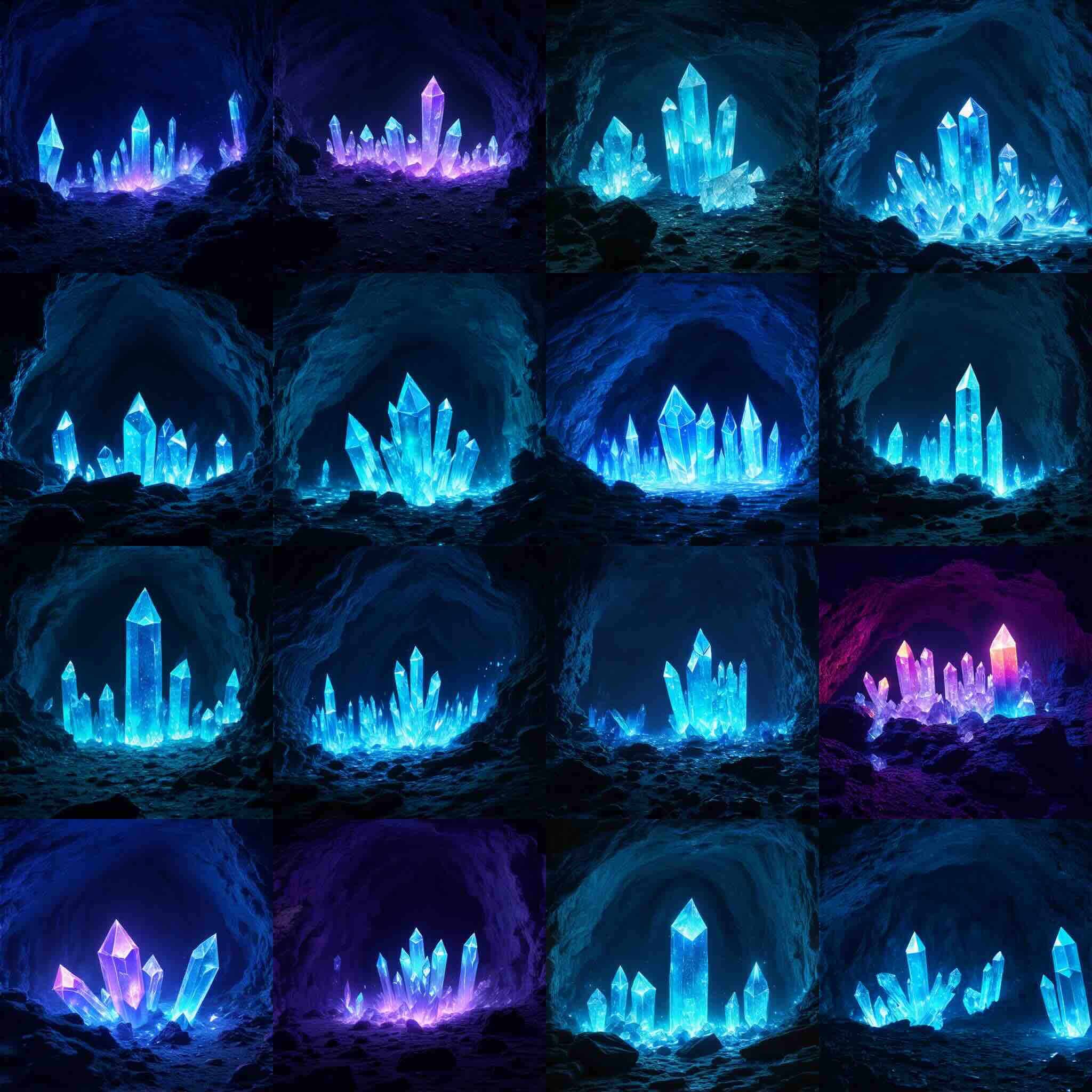}
        \caption{Base model without \ourmethod}

    \end{subfigure}
    \hspace{30pt} 
    \begin{subfigure}[b]{0.4\columnwidth}
        \centering
        \includegraphics[width=\linewidth]{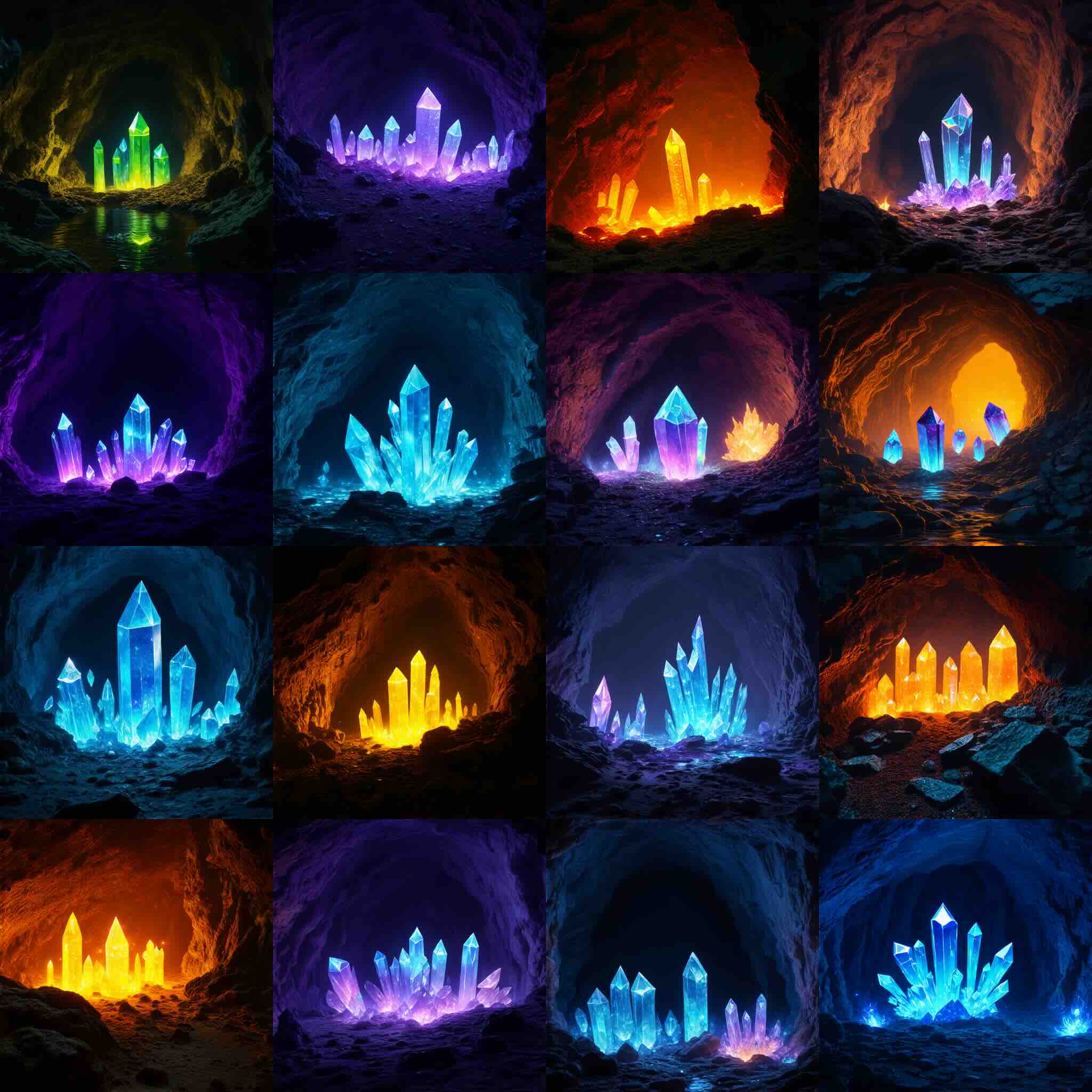}

        \caption{Base model + \ourmethod}

    \end{subfigure}
    
    \caption{Images generated with Stable Diffusion v3.5 Medium with and without \ourmethod for general prompt ``\textit{A hidden cave with glowing crystals}''. Each row is a batch of four images with the same seed.}
    \label{fig:extend_1}
\end{figure}

\begin{figure}[h]
    \centering
    
    \begin{subfigure}[b]{0.45\columnwidth}
        \centering
        \includegraphics[width=\linewidth]{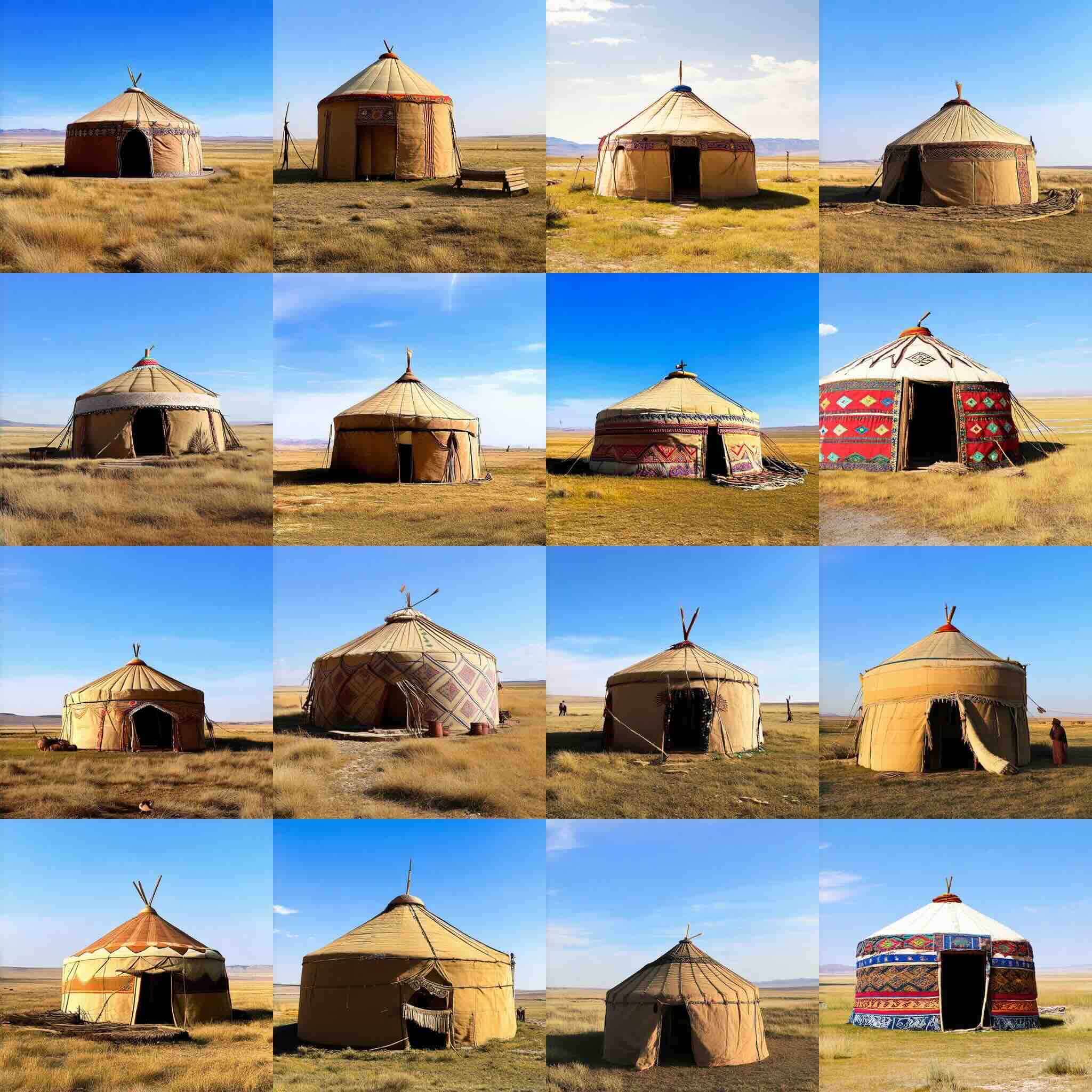}
        \caption{Base model without \ourmethod}

    \end{subfigure}
    \hfill 
    \begin{subfigure}[b]{0.45\columnwidth}
        \centering
        \includegraphics[width=\linewidth]{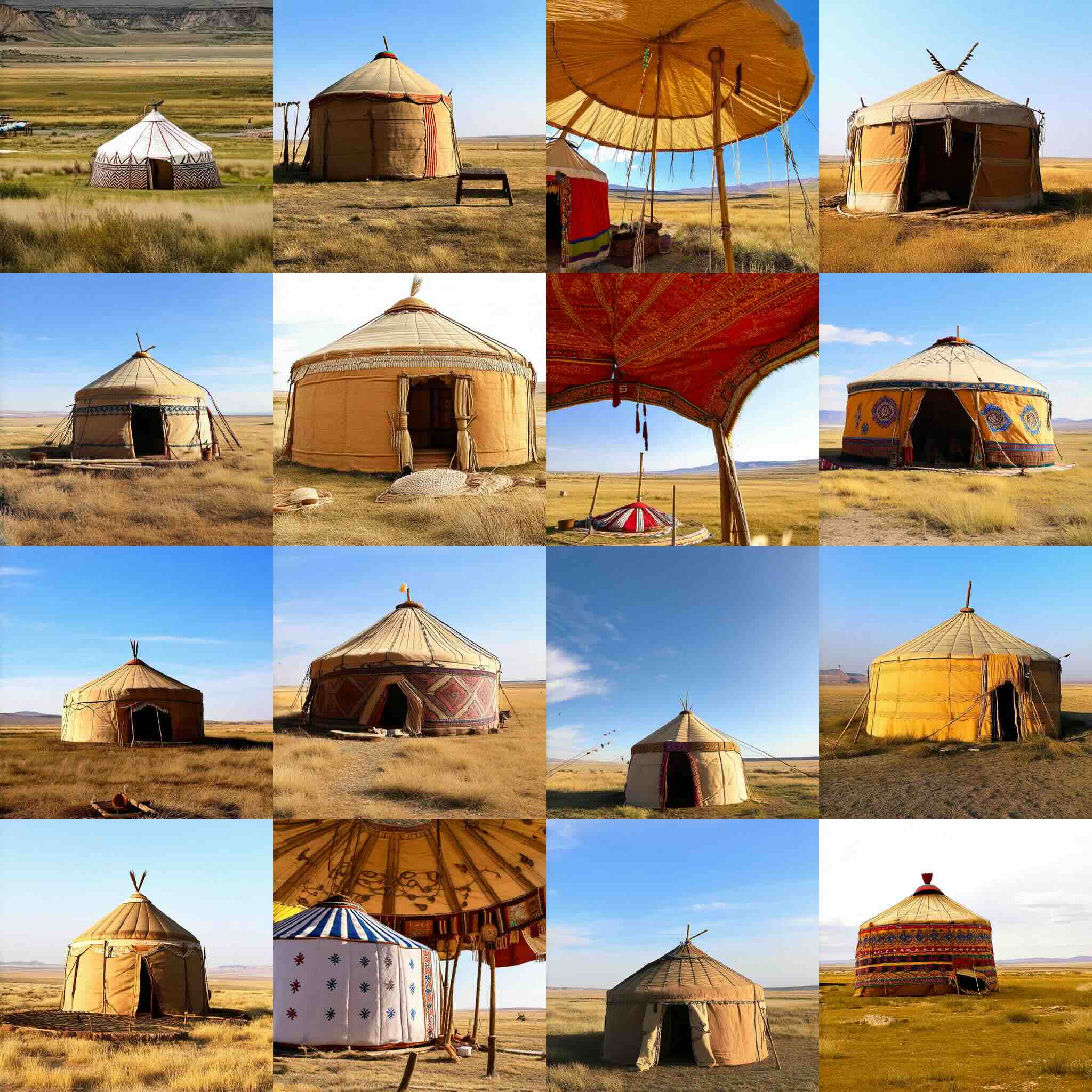}

        \caption{Base model + \ourmethod}

    \end{subfigure}
    
    \caption{Images generated with Stable Diffusion v3.5 Medium with and without \ourmethod for general prompt ``\textit{A traditional Mongolian yurt in the steppe}''. Each row is a batch of four images with the same seed.}
    \label{fig:extend_2}
\end{figure}

\begin{figure}[h]
    \centering
    
    \begin{subfigure}[b]{0.45\columnwidth}
        \centering
        \includegraphics[width=\linewidth]{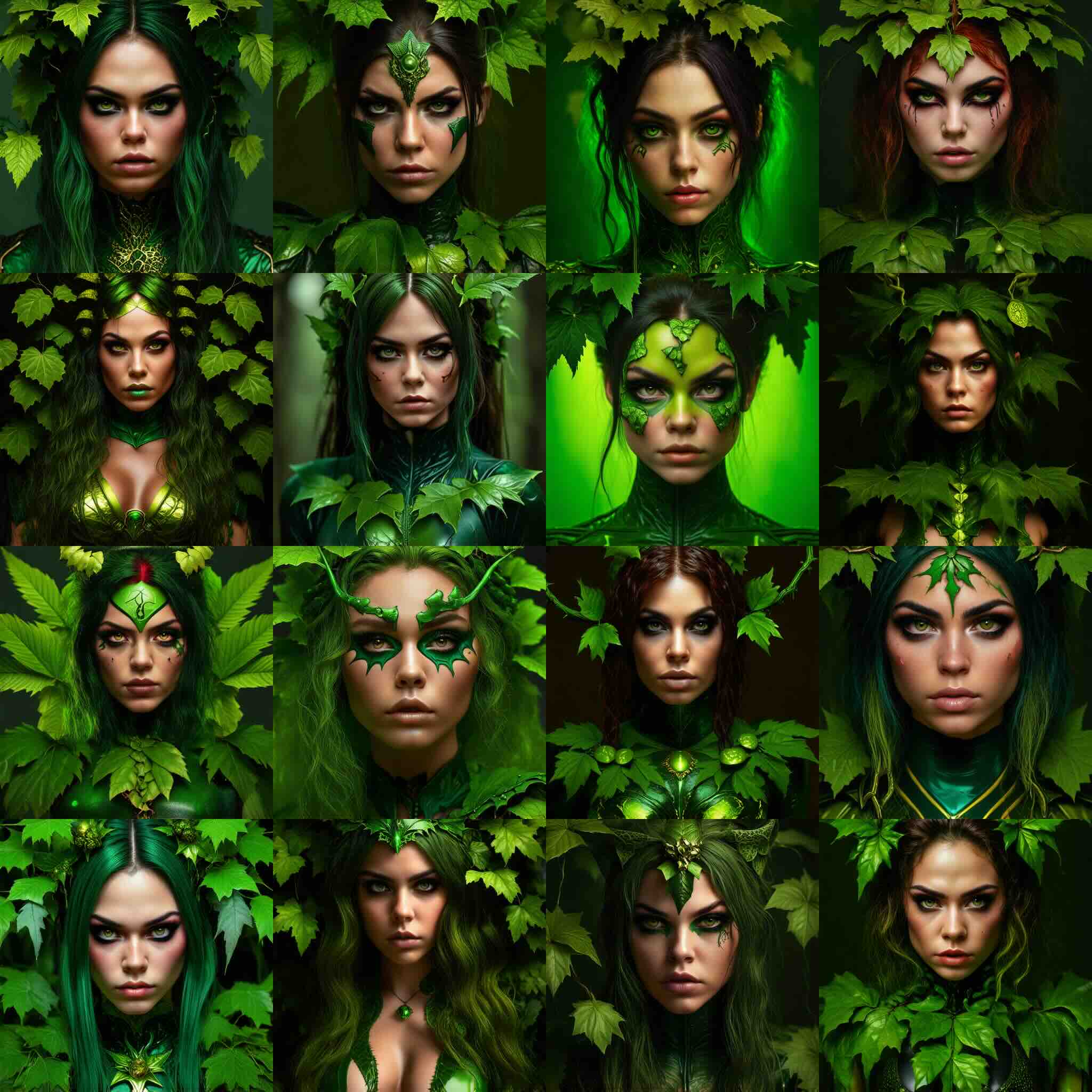}
        \caption{Base model without \ourmethod}

    \end{subfigure}
    \hfill 
    \begin{subfigure}[b]{0.45\columnwidth}
        \centering
        \includegraphics[width=\linewidth]{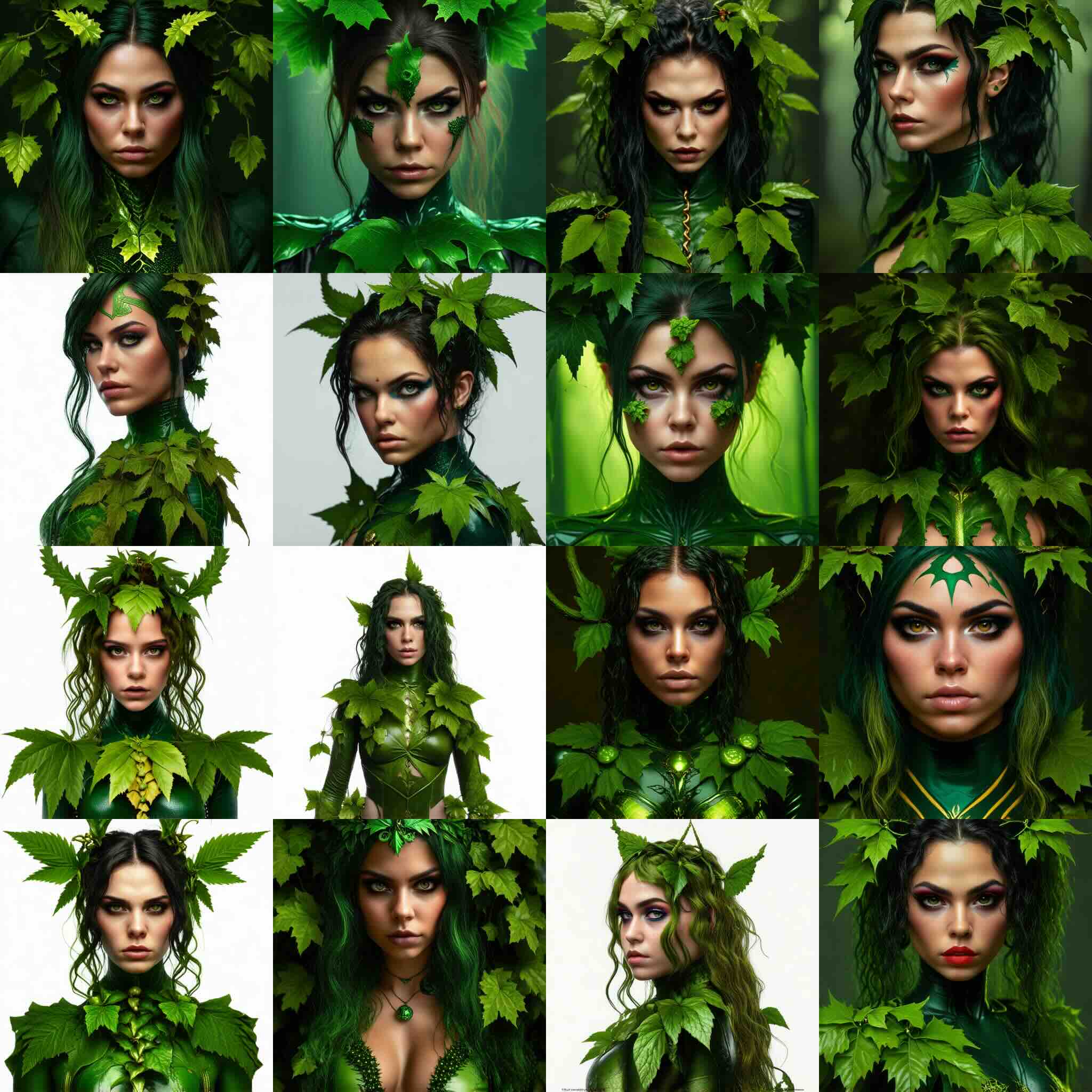}

        \caption{Base model + \ourmethod}

    \end{subfigure}
    
    \caption{Images generated with Stable Diffusion v3.5 Medium with and without \ourmethod for general prompt ``\textit{portrait of Sporty Spice as a Poison Ivy. intricate artwork. by wlop, octane render, trending on artstation, very coherent symmetrical artwork. cinematic, hyper realism, high detail, octane render, 8k}''. Each row is a batch of four images with the same seed.}
    \label{fig:extend_3}
\end{figure}

\begin{figure}[h]
    \centering
    
    \begin{subfigure}[b]{0.45\columnwidth}
        \centering
        \includegraphics[width=\linewidth]{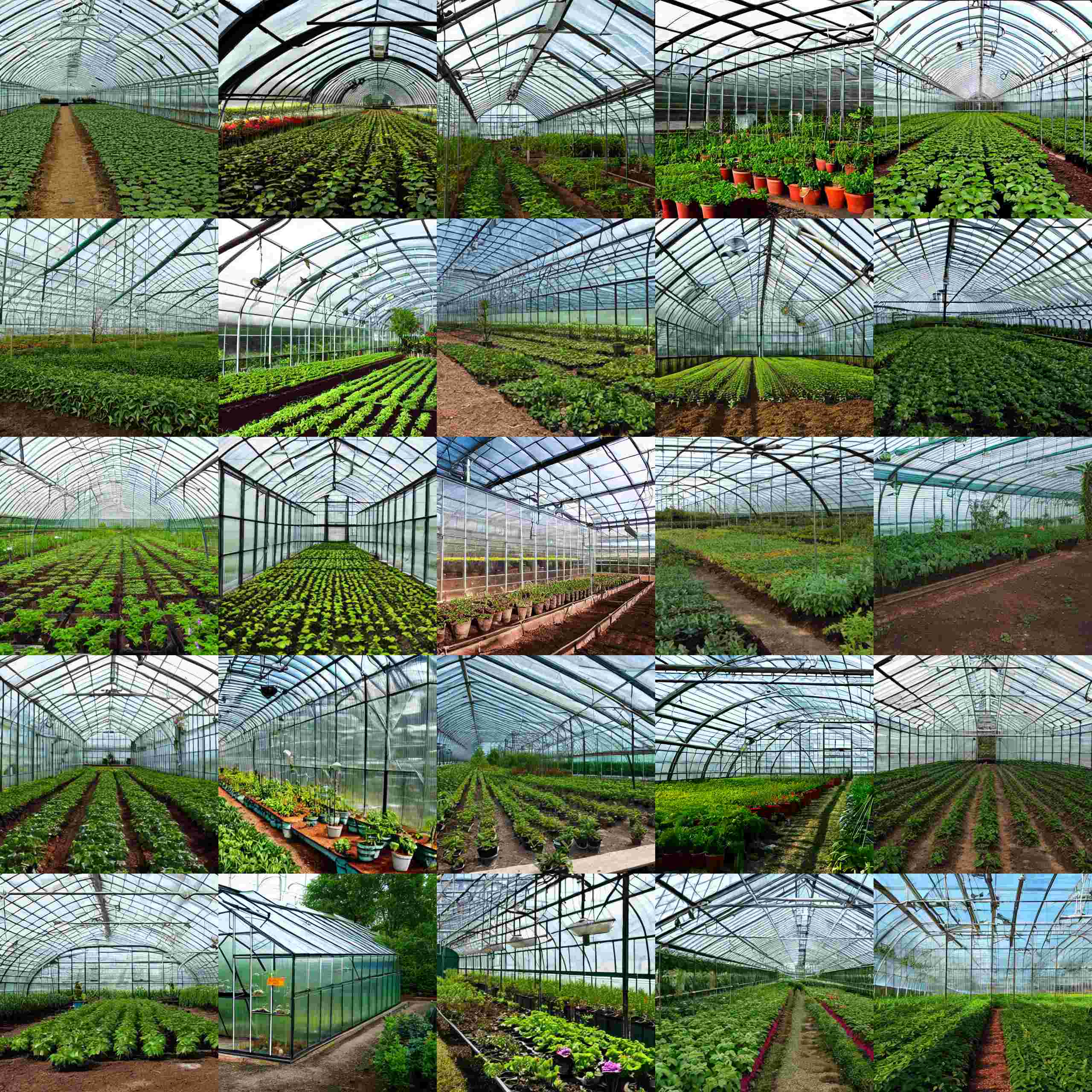}
        \caption{Base model without \ourmethod}

    \end{subfigure}
    \hfill 
    \begin{subfigure}[b]{0.45\columnwidth}
        \centering
        \includegraphics[width=\linewidth]{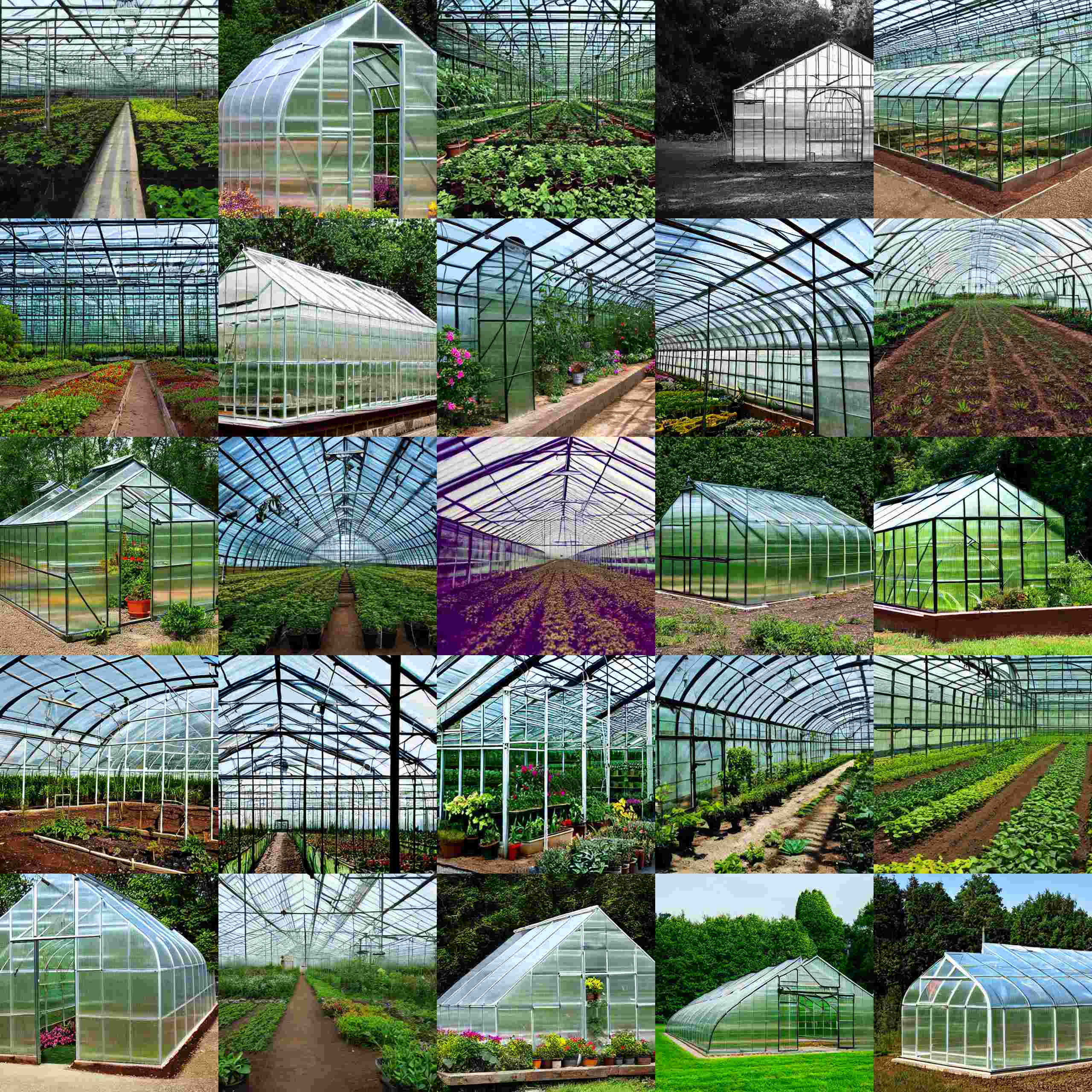}

        \caption{Base model + \ourmethod}

    \end{subfigure}
    
    \caption{Images generated with Stable Diffusion v1.4 with and without \ourmethod for general prompt ``\textit{a photo of a greenhouse}''. Each row is a batch of five images with the same seed.}
    \label{fig:extend_4}
\end{figure}

\begin{figure}[h]
    \centering
    
    \begin{subfigure}[b]{0.45\columnwidth}
        \centering
        \includegraphics[width=\linewidth]{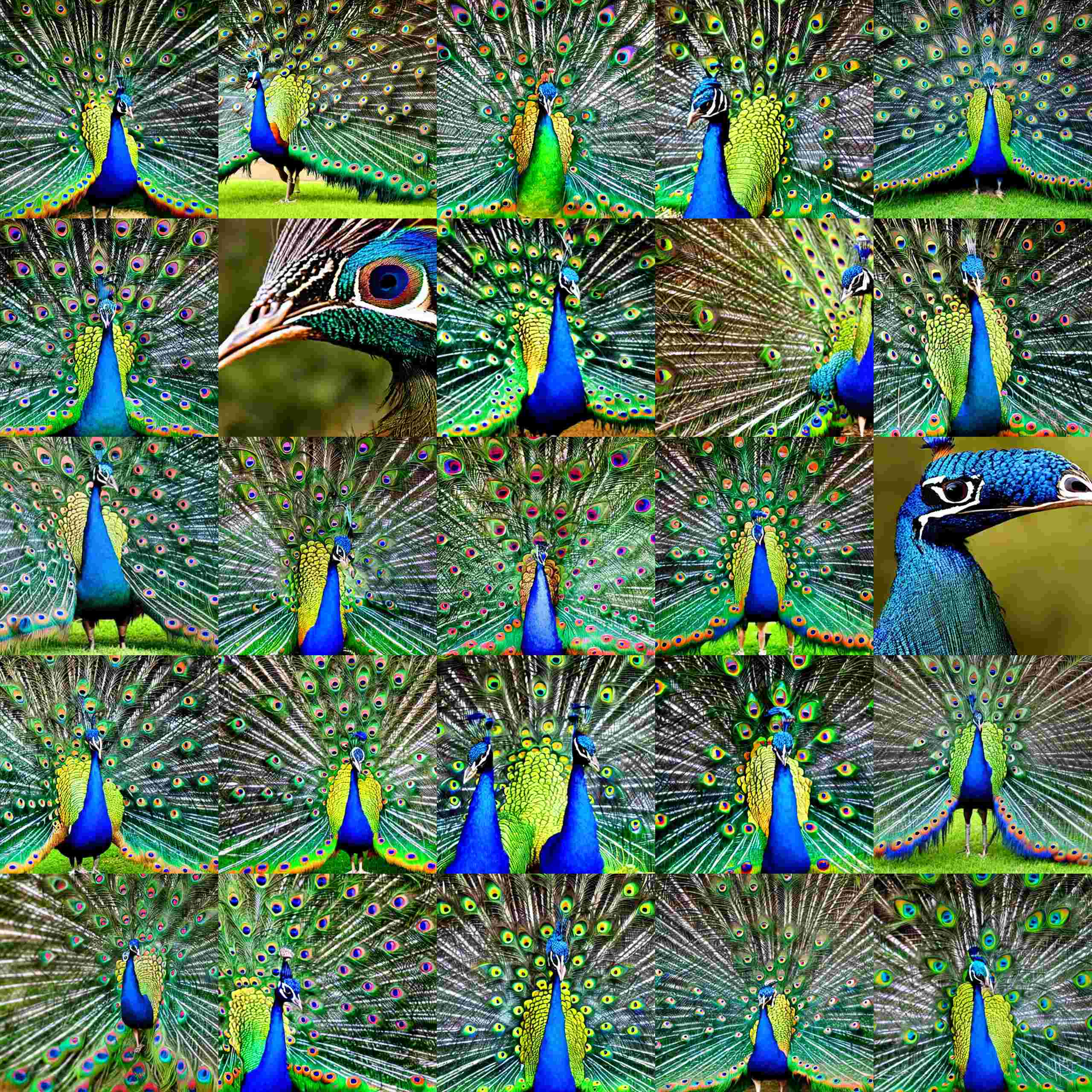}
        \caption{Base model without \ourmethod}

    \end{subfigure}
    \hfill 
    \begin{subfigure}[b]{0.45\columnwidth}
        \centering
        \includegraphics[width=\linewidth]{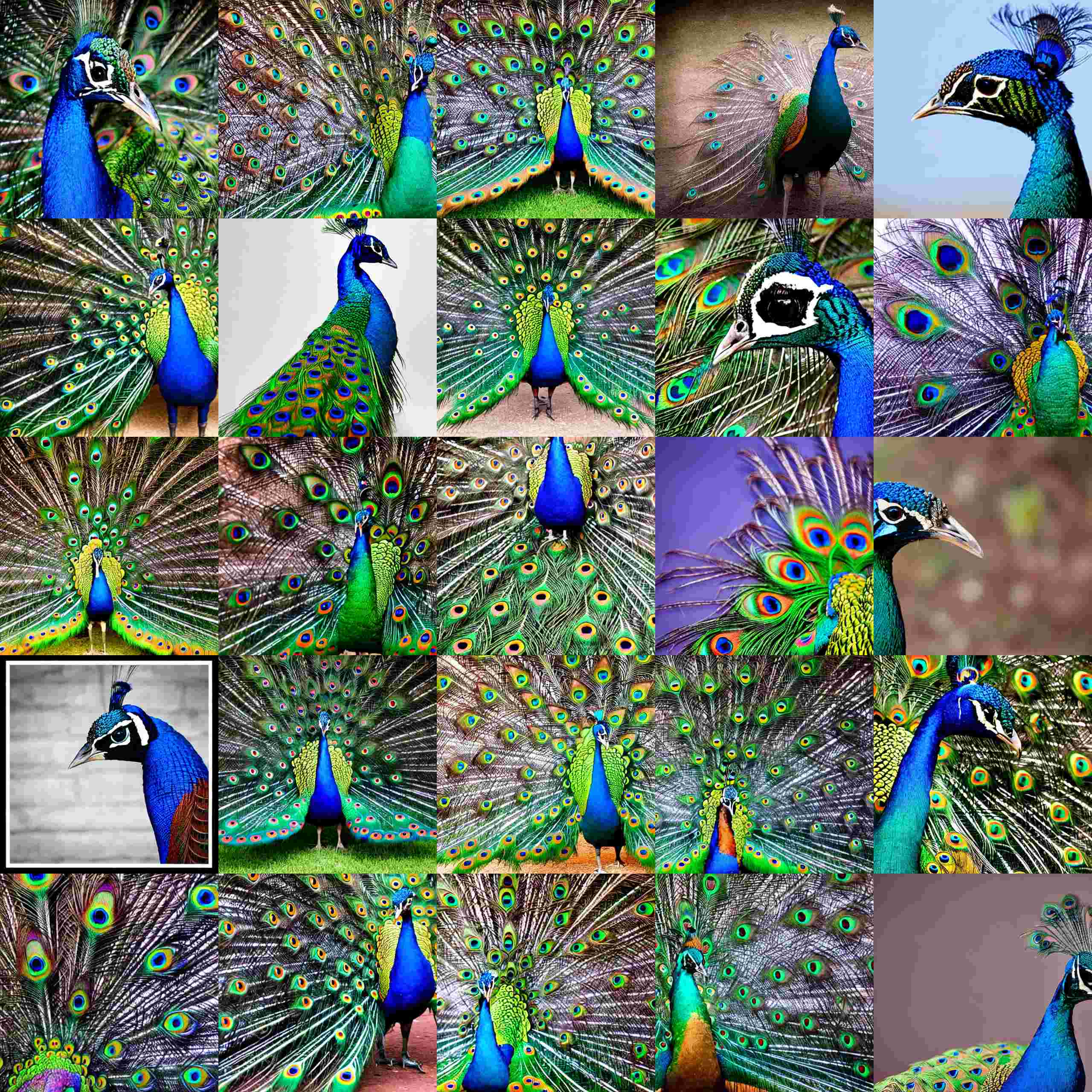}

        \caption{Base model + \ourmethod}

    \end{subfigure}
    
    \caption{Images generated with Stable Diffusion v1.4 with and without \ourmethod for general prompt ``\textit{a photo of a peacock}''. Each row is a batch of five images with the same seed.}
    \label{fig:extend_5}
\end{figure}

\begin{figure}[h]
    \centering
    
    \begin{subfigure}[b]{0.45\columnwidth}
        \centering
        \includegraphics[width=\linewidth]{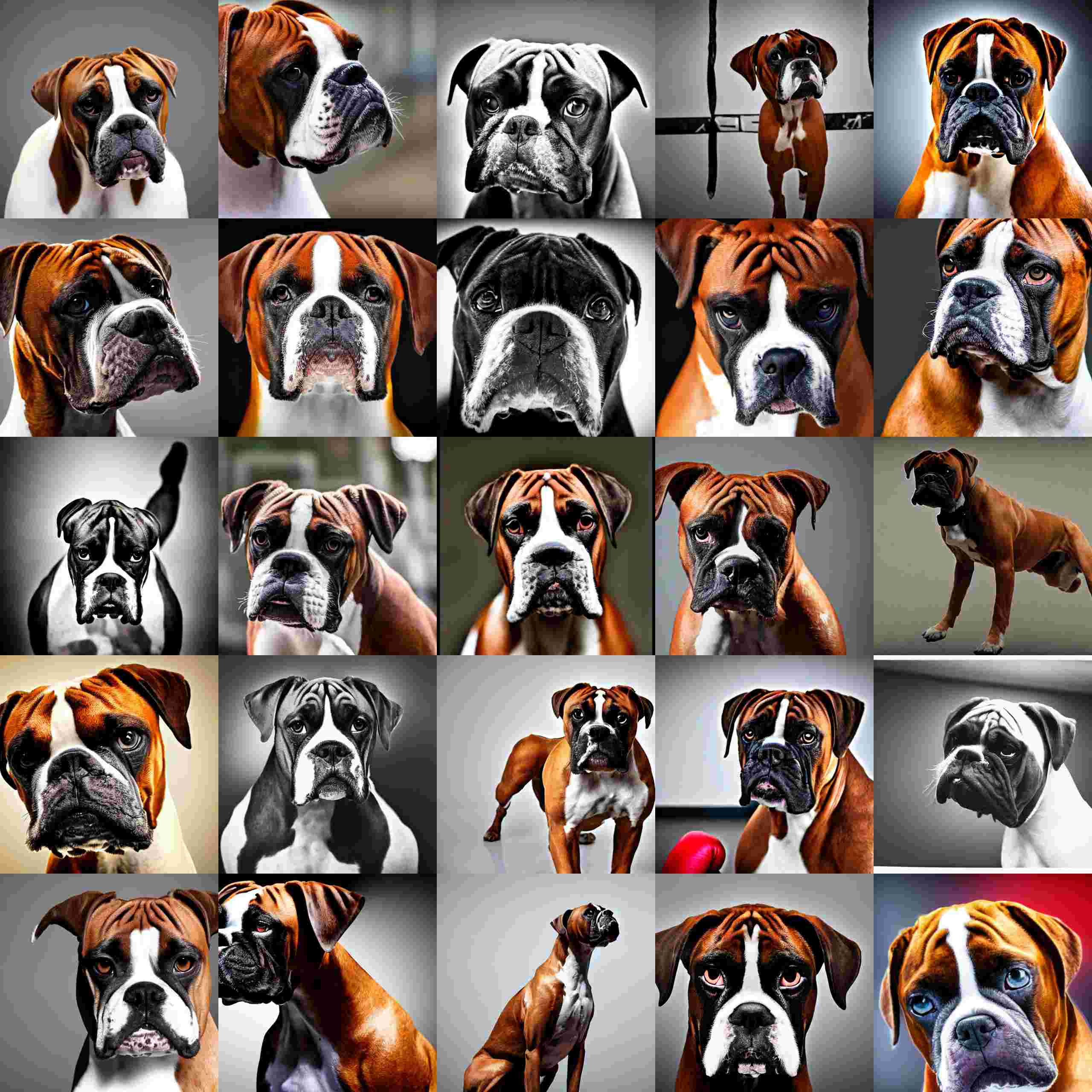}
        \caption{Base model without \ourmethod}

    \end{subfigure}
    \hfill 
    \begin{subfigure}[b]{0.45\columnwidth}
        \centering
        \includegraphics[width=\linewidth]{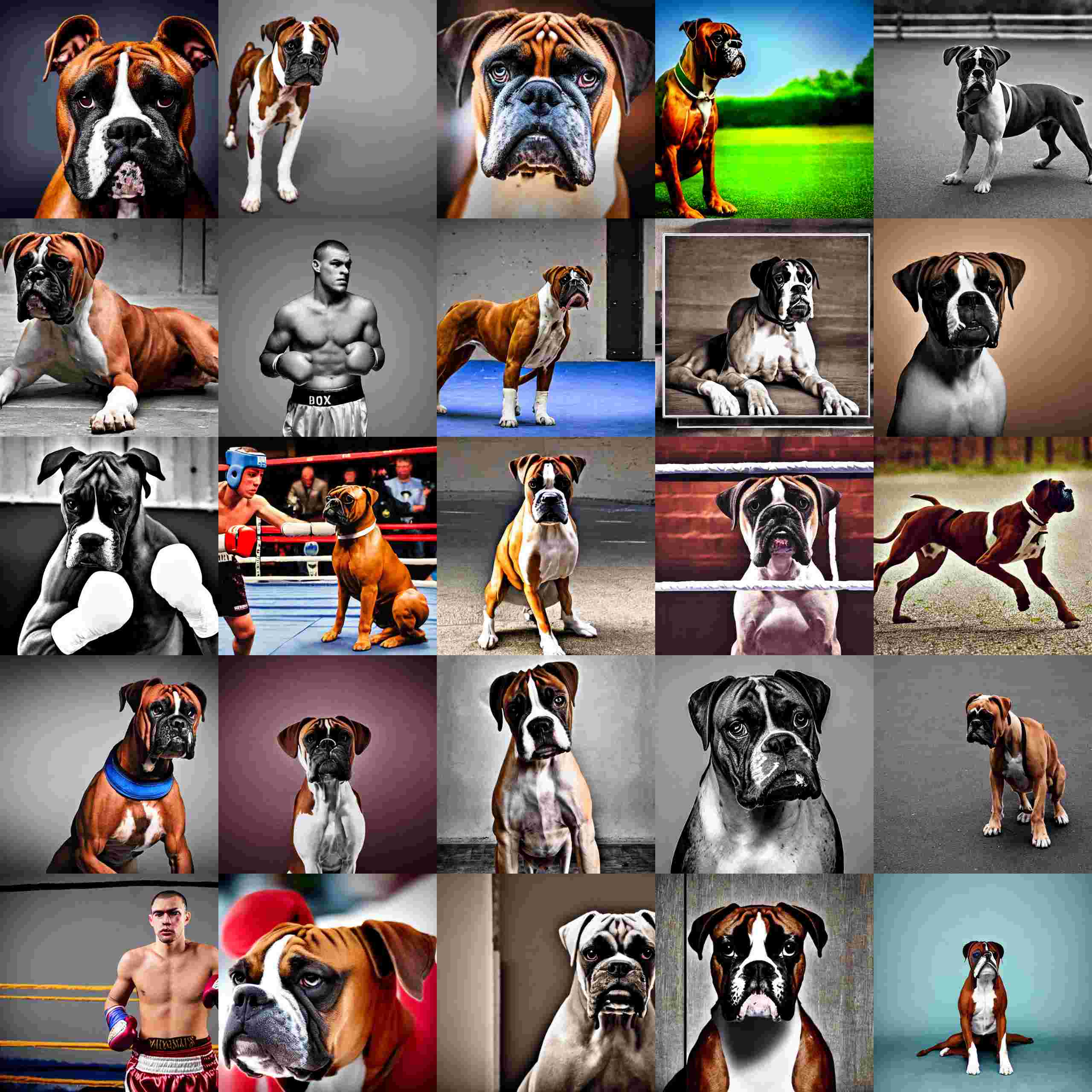}

        \caption{Base model + \ourmethod}

    \end{subfigure}
    
    \caption{Images generated with Stable Diffusion v1.4 with and without \ourmethod for general prompt ``\textit{a photo of a boxer}''. Each row is a batch of five images with the same seed.}
    \label{fig:extend_6}
\end{figure}

\begin{figure}[h]
    \centering
    
    \begin{subfigure}[b]{0.45\columnwidth}
        \centering
        \includegraphics[width=\linewidth]{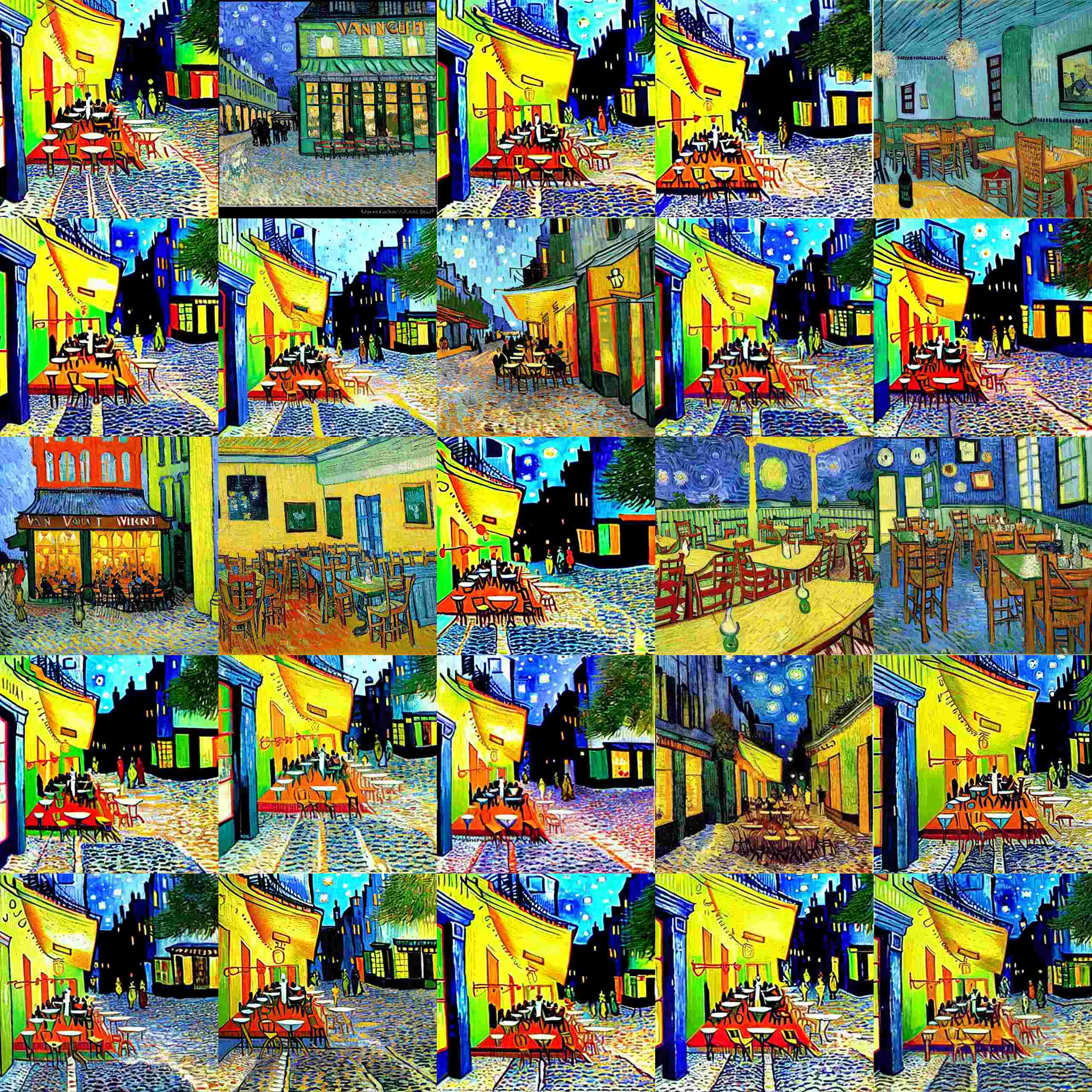}
        \caption{Base model without \ourmethod}

    \end{subfigure}
    \hfill 
    \begin{subfigure}[b]{0.45\columnwidth}
        \centering
        \includegraphics[width=\linewidth]{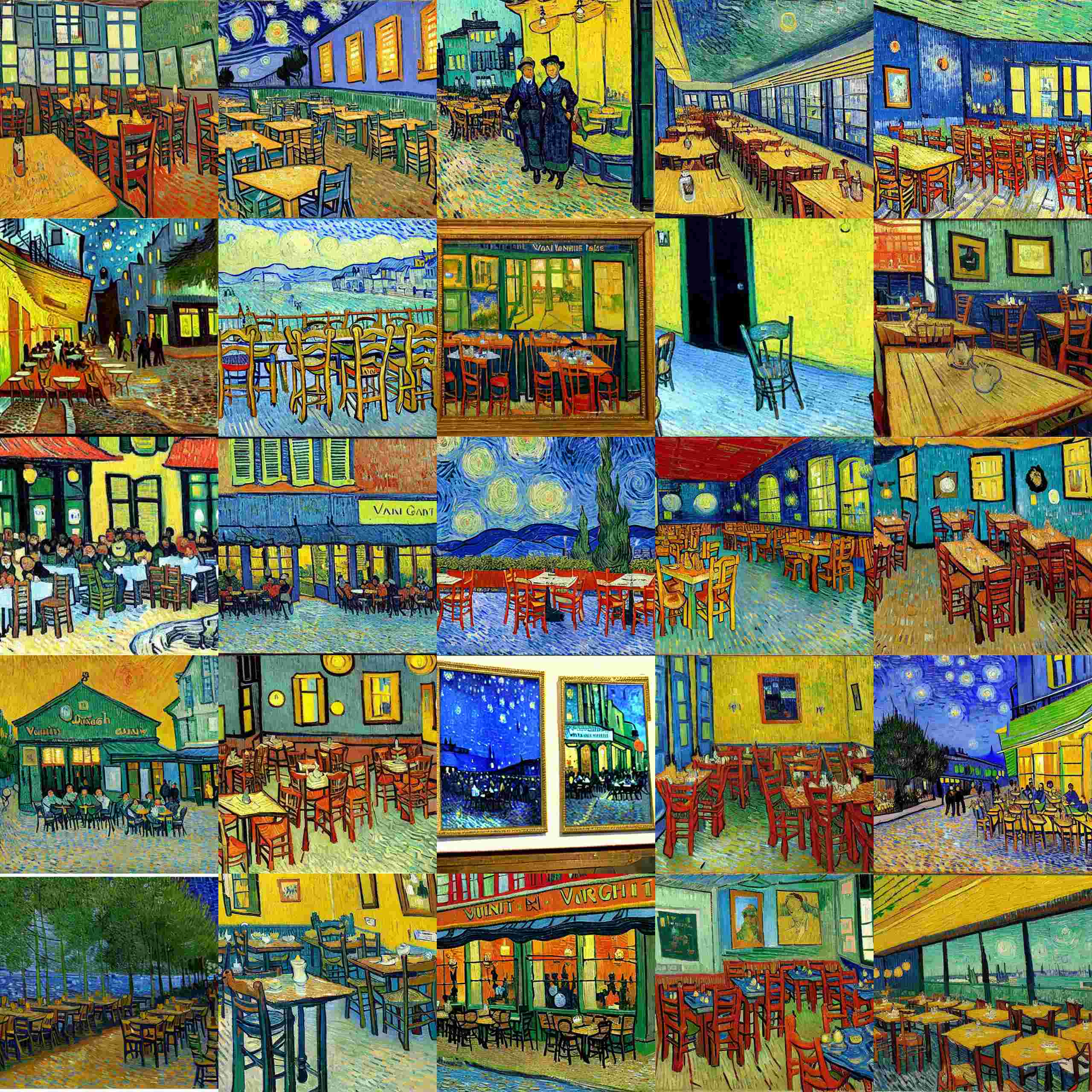}

        \caption{Base model + \ourmethod}

    \end{subfigure}
    
    \caption{Images generated with Stable Diffusion v1.4 with and without \ourmethod for general prompt ``\textit{VAN GOGH CAFE TERASSE copy.jpg}''. Each row is a batch of five images with the same seed.}
    \label{fig:extend_7}
\end{figure}


\end{document}